\tikzset{
  treenode/.style = {align=center, inner sep=0pt, text centered,
    font=\sffamily},
  arn_n/.style = {treenode, circle, black, font=\sffamily\bfseries, draw=black,
    fill=white, text width=1.5em},
  arn_r/.style = {treenode, circle, black, font=\sffamily\bfseries, draw=black,
    fill=white, text width=1.0em},
  arn_x/.style = {treenode, rectangle, draw=black,
    minimum width=0.5em, minimum height=0.5em}
}
\newcommand{\nl}[1]{{\color{orange}NL: #1}}
\newtheorem{lemma}{Lemma}
\newtheorem{definition}{Definition}
\newcommand{\E}[2]{\mathbb{E}_{#1}\left[#2\right]}
\newcommand{\R}[0]{\mathbb{R}}
\newif\ifsinglecolumn
\newcommand{\name}[0]{\textsc{Aioli}\xspace}
\newcommand{\subroutine}[0]{\textsc{LearnParams}\xspace}
\newcommand{\mixingframework}{LMO\xspace}
\definecolor{darkgreen}{RGB}{0, 100, 0}
\definecolor{darkred}{RGB}{0, 0, 0}
\newcommand{\p}[0]{\bm{p}}
\newcommand{\Dtrain}{D_{\text{train}}}
\newcommand{\Dval}{D_{\text{val}}}
\newcommand{\Dtest}{D_{\text{test}}}
\newcommand{\Ltrain}[1]{L_{\text{train}, #1}}
\newcommand{\Lval}[1]{L_{\text{val}, #1}}
\newcommand{\Ltest}[1]{L_{\text{test}, #1}}
\newcommand{\fref}{f_{\text{ref}}}
\newcommand{\lineareq}[0]{\mathrel{\overset{\text{lin}}{=}}}
\newcommand{\namestatic}[0]{\textsc{Aioli-static}\xspace}
\newcommand{\namediag}[0]{\textsc{Aioli-diagonal}\xspace}
\newcommand{\changeoperator}[1]{%
  \csletcs{#1@saved}{#1@}%
  \csdef{#1@}{\changed@operator{#1}}%
}
\newcommand{\changed@operator}[1]{%
  \mathop{%
    \mathchoice{\textstyle\csuse{#1@saved}}
               {\csuse{#1@saved}}
               {\csuse{#1@saved}}
               {\csuse{#1@saved}}%
  }%
}
\let\oldnl\nl
\newcommand{\nonl}{\renewcommand{\nl}{\let\nl\oldnl}}
\title{Aioli: A unified optimization framework for language model data mixing}
\author[1]{Mayee~F.~Chen$^*$}
\author[2]{Michael~Y.~Hu$^*$}
\author[3]{Nicholas~Lourie}
\author[2,3,4]{Kyunghyun~Cho}
\author[1]{Christopher~R\'e}
\affil[1]{Department of Computer Science, Stanford University}
\affil[2]{Center for Data Science, New York University}
\affil[3]{Computer Science Department, New York University}
\affil[4]{Prescient Design, Genentech}
\begin{document}

\maketitle
\def\thefootnote{*}\footnotetext{Equal contribution. Contact: \url{mfchen@stanford.edu}, \url{michael.hu@nyu.edu}}
\def\thefootnote{\arabic{footnote}}

\begin{abstract}

Language model performance depends on identifying the optimal mixture of data groups to train on (e.g., law, code, math).
Prior work has proposed a diverse set of methods to efficiently learn mixture proportions, ranging from fitting regression models over training runs to dynamically updating proportions throughout training.
Surprisingly, we find that no existing method consistently outperforms a simple stratified sampling baseline in terms of average test perplexity. 
To understand this inconsistency, we unify existing methods into a standard framework, showing they are equivalent to solving a common optimization problem: minimize average loss subject to a method-specific \emph{mixing law}---an implicit assumption on the relationship between loss and mixture proportions.
This framework suggests that measuring the fidelity of a method's mixing law can offer insights into its performance.
Empirically, we find that existing methods set their mixing law parameters inaccurately, resulting in the inconsistent mixing performance we observe.
Using this insight, we derive a new online method named \name, which directly estimates the mixing law parameters throughout training and uses them to dynamically adjust proportions. \name outperforms stratified sampling on 6 out of 6 datasets by an average of 0.27 test perplexity points, whereas existing methods fail to consistently beat stratified sampling, doing up to 6.9 points worse. Moreover, in a practical setting where proportions are learned on shorter runs due to computational constraints, \name can dynamically adjust these proportions over the full training run, consistently improving performance over existing methods by up to 12.012 test perplexity points.
\end{abstract}

\section{Introduction}

It is important to determine what data to train on for a language model (LM) to acquire a range of capabilities, from generating code to understanding scientific literature and conversing with users~\citep{albalak2024survey, longpre-etal-2024-pretrainers, li2024datacomplmsearchgenerationtraining}.
To achieve this, practitioners mix data from various groups (such as code files, scientific papers, and chat logs) in specific proportions to compose an overall training dataset---a procedure known as \emph{data mixing}. Identifying the optimal mixture proportions is critical to LLM performance. However, a brute-force trial-and-error search over the proportions is computationally expensive, requiring many training runs. 

Recent work introduces two types of data mixing algorithms that \emph{learn} mixture proportions: offline and online methods. 
Offline methods conduct multiple training runs with varying proportions, fit a regression model to predict performance, and use this model to determine the optimal static mixture~\citep{ye2024datamixinglawsoptimizing, liu2024regmixdatamixtureregression}. 
Online methods adjust the mixture proportions dynamically throughout training using information from the model, such as its loss and gradients~\citep{chen2023skillit, fan2024dogedomainreweightinggeneralization, xie2023doremi, albalak2023efficientonlinedatamixing}. 
All mixing methods require at least one training run to learn the proportions but are more efficient than a brute-force search.

Given the wide range of methods available, it is important to determine which ones are effective. 
However, when we evaluated existing methods, we found that \textit{no method consistently outperformed stratified sampling}---a simple baseline that uniformly mixes groups and requires zero extra training runs---across all sets of data groups in terms of average test perplexity (Table~\ref{tab:unrestricted}). This surprising outcome suggests that all existing methods suffer from some common weaknesses.  
To make progress in data mixing, we identify three objectives: 1) improve our \textbf{understanding} of the underlying assumptions of existing methods, 2) assess the \textbf{fidelity} of these assumptions in practice to better understand performance, and 3) apply our insights to develop principled \textbf{new data mixing methods}.

In this paper, we improve our \textbf{understanding} of data mixing methods by showing that many existing methods can be expressed in a unified optimization framework, which we call Linear Mixing Optimization (\mixingframework) (Section~\ref{sec:framework}). These methods are equivalent to solving an optimization problem that sets proportions to minimize the average loss per data group, subject to an implicit method-dependent \emph{mixing law}---an assumption relating loss per group and mixture proportions. 
We find that all current mixing laws share the same parameterization: for training round $t$ from $1$ to $T$,
$$L^{t+1}(p^t) \lineareq \sigma(A^t p^t),$$
where $p^t \in \triangle^m$ (the simplex) are mixing proportions over $m$ given data groups at time $t$, $L^{t+1}(p^t): \triangle^m \rightarrow (\R^+)^m$ are the losses per group at the next timestep, $A^t \in \R^{m \times m}$ is a parameter matrix, $\sigma = \text{Id}$ or $\exp$, and $\lineareq$ means equal up to linear transformation. 
Existing offline methods assume a static ($T=1$) log-linear parameterization of the mixing law, while online methods assume a linear dynamic mixing law. 
All methods set the parameters of their mixing laws differently (Table~\ref{tab:framework}), and offline methods solve the optimization problem directly while online methods solve it greedily using exponentiated gradient descent.
Our framework reveals the underlying assumptions of each method in terms of the mixing law's parameterization, the values of the parameters, and how the optimization problem is solved. 
Furthermore, the fidelity of the mixing law and solving strategy dictates the optimality of the method, providing us with a new tool for understanding data mixing methods.

\begin{figure}
    \centering
    \includegraphics[width=0.9\linewidth]{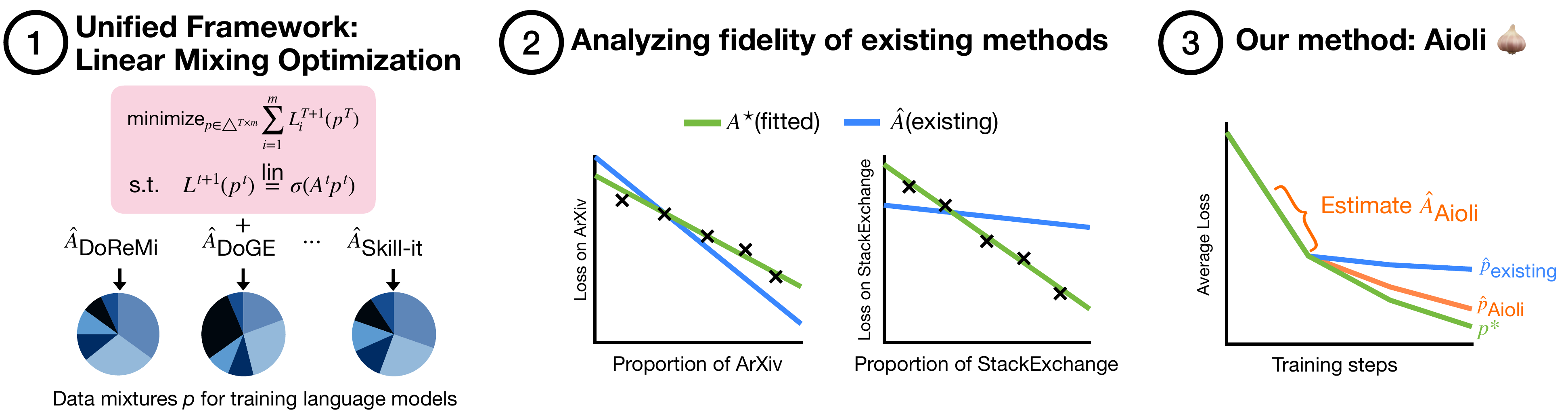}
    \caption{
    Left: existing methods can be expressed in a unified optimization framework, in which they implicitly assume a linear or log-linear loss-proportion relationship. Center: the (log)-linear parameterizations are well-specified, but existing methods set their parameters incorrectly. Right: \name, an online mixing method that more accurately estimates the parameters that capture the true loss-proportion relationship.
    }
    \label{fig:banner}
\end{figure}

Applying the \mixingframework framework, we test the \textbf{fidelity} of existing methods'
assumptions, examining if they hold in practice (Section~\ref{sec:analysis}). Both the log-linear static and linear dynamic parameterizations capture the true loss-proportion relationship across datasets, achieving an average of 0.0005 MSE and 0.969 $R^2$.
We then show that although existing mixing laws are well-specified, methods can set their parameters ($A^t$) inaccurately, causing poor performance.
We compare each method's parameters to the optimal parameters, which we approximate by fitting the mixing laws to training runs. We find that the method's parameters can differ significantly from the optimal parameters, and the extent of these deviations is correlated with method performance relative to stratified sampling (Figure~\ref{fig:parameters}), helping explain our initial observations. 
Finally, we validate the assumptions used in solving the optimization problem, finding that the greedy approximation in online methods is a reasonable proxy for the full objective. 
Our analysis shows that existing methods' parameterizations and solving strategies are of high fidelity, but their parameters are not.

To validate these insights, we develop \name, a \textbf{simple new online data mixing method} derived from the \mixingframework framework (Section~\ref{sec:method}). 
Unlike existing online methods, \name directly estimates the parameters $A^t$ from the current training run by fitting the mixing law on the history of losses and dynamic mixture proportions so far. \name is thus able to dynamically adjust proportions without requiring any extra training runs. 

We evaluate \name in two settings by training 160M models on various combinations of data sources from SlimPajama~\citep{cerebras2023slimpajama} (Section~\ref{sec:exp}). First, we compare \name to existing data mixing methods and find that \name consistently outperforms stratified sampling
 on all 6 datasets, by an average of $0.274$ and up to $0.439$ points in test perplexity. On the other hand, existing data mixing methods do worse than stratified on at least one dataset by up to 6.9 perplexity points, despite using extra training runs. As we expect, the parameters of $\name$ are also more consistently close to the optimal parameters (Figure~\ref{fig:parameterization}).
Second, we consider a scenario with limited additional computational resources, in which practitioners cannot run experiments for learning mixture proportions for the full training duration. In this setting, mixture proportions learned on a shorter run may not perform well on the longer final run.
We find that using \name to dynamically adjust these learned proportions throughout the final training run can improve performance by an average of 1.202 perplexity points in 28 out of 30 cases, compared to using the learned proportions directly.
\section{Problem Setup}\label{sec:problem_setup}

We formalize the data mixing problem and establish notation.
In data mixing, we have $m$ data groups of text, such as GitHub, BooksCorpus, and arXiv. We are given train, validation, and test sets for each data group, which we denote as $\Dtrain^i, \Dval^i, \Dtest^i$ for the $i$th group. Define $\Dtrain = \{\Dtrain^1, \dots, \Dtrain^m\}$, and similarly define $\Dval$ and $\Dtest$.

\noindent \textbf{Data \& Mixing.} 
During training, we show the model a total of $N$ examples from $\Dtrain$ over $S$ training steps. To express how data proportions can change throughout training, we divide training into $T$ equal rounds. Each round $t$ uses a mixture proportion from the probability simplex: $p^t = \left[p_1^t, \ldots, p_m^t\right] \in \triangle^m$. \textit{Static mixtures} use only a single round ($T=1$): $\p = \left(p^1\right)$, while \textit{dynamic mixtures} use several ($T>1$): $\p = \left(p^1, \ldots, p^T\right)$.

\noindent \textbf{Model \& Loss.} 
Let $f(\p, t)$ refer to the language model, $f$, at the beginning of round $t$ where the model has been trained on data sampled using mixture proportions $p^{1}, \cdots, p^{t-1}$ so far. Given a model $f$, we can compute its loss on each group using the training data, $L_{\text{train}}(f) = \left(\Ltrain{1}(f), \dots, \Ltrain{m}(f)\right)$, and similarly with the validation, $L_{\text{val}}(f)$, and test data, $L_{\text{test}}(f)$. In this notation, the loss at the end of training can be expressed as $L_{(\cdot)}(f(\p, T+1))$. When the $f$ being referred to is obvious, we simply write $L_{(\cdot)}^t(\p)$, and for static mixtures we drop the superscript: $L_{(\cdot)}(\p)$.

\noindent \textbf{Data Mixing Problem.} 
Given a set of data groups, an LM $f$ to train for $S$ steps with $N$ samples, and $T$ rounds of training (i.e., whether we use static or dynamic proportions), we aim to determine the $\p$ that minimizes the total test loss across groups: $\underset{\p \in \triangle^{T\times m}}{\text{minimize}} \sum_{i = 1}^m \Ltest{i}^{T+1}(\p).$

This objective aims to produce a trained model that does well on many data groups, which can serve as a proxy for downstream performance.
However, without assuming additional structure on $L^{T+1}(\p)$, this problem can only be solved with a brute-force search over $\p$, which requires training many different models. 
In the next section, our \mixingframework framework imposes a constraint on $L^{t+}(\p)$ that allows many existing methods to be expressed as approaches to solving this problem.

\section{A Unified Optimization Framework for Data Mixing}\label{sec:framework}

We introduce the \mixingframework framework by stating the general optimization problem (Section~\ref{sec:opt_framework}). Then, we show how this framework can express several existing methods (Section~\ref{sec:prelim},~\ref{sec:existing}), with a summary of our insights regarding these methods in Section~\ref{sec:insights}.

\subsection{Linear Mixing Optimization (\mixingframework) Framework} \label{sec:opt_framework}

The \mixingframework framework consists of an optimization problem that is equivalent to the data mixing minimization problem (Section \ref{sec:problem_setup}), subject to an additional constraint:
\begin{align}
    & \text{minimize}_{\p \in \triangle^{T \times m}} \sum_{i=1}^m \Lval{i}^{T+1}(\p) \label{eq:framework} \\
    &\text{s.t.} \; \Lval{i}^{t+1}(\p) = c_i^t + b_i^t \sigma \Big(\sum_{j=1}^m -A_{ij}^t p_j^t\Big) \; \forall i \in [m], t \in [T]  \label{eq:mixinglaw}
\end{align}

for some $A^t, b^t, c^t,$ and $\sigma$. $A^t \in \R^{m \times m}$ is a matrix that encodes cross-group interactions, where $A_{ij}^t$ intuitively describes how much training on group $j$ at $t$ impacts group $i$'s loss. $b^t, c^t \in \R^m$ are group-specific parameters. $\sigma: \R \rightarrow \R$ is either the identity function ($\text{Id}$) or the exponential function ($\exp$). We refer to the constraint in~\eqref{eq:mixinglaw} as a \emph{mixing law} that specifies the assumed relationship between loss and proportions.

There are three components of this problem that need to be specified to yield a way to set $\p$: a) the parameterization of the mixing law ($T$, $\sigma$), b) the values of the parameters ($A^t, b^t, c^t$), and c) how to solve the problem. We express existing methods in \mixingframework by specifying these components.

\subsection{Preliminaries for unifying methods}\label{sec:prelim}

We discuss preliminaries before presenting existing methods and explaining how they can be expressed in the \mixingframework framework. First, we formally define what it means for a method to be \emph{expressed} in the \mixingframework framework. Then, we present a result that allows us to convert between linear dynamic mixing laws and a way to set $\p$, which we will to use to express online methods in our framework in Section~\ref{sec:existing}.

\begin{definition}
    We say that a data mixing method can be \textbf{expressed} in the \mixingframework framework if its exact algorithm---how it sets proportions $\p$ and trains model $f$ in terms of $\p$---can be equivalently constructed by specifying a mixing law and way of solving the \mixingframework optimization problem.
\end{definition}

This definition allows us to cast existing methods as a way of solving the \mixingframework optimization problem based on how they set $\p$ and train according to $\p$, even if the methods themselves are not originally designed to minimize average test loss.

\textbf{Converting mixing laws into update rules.} When $T > 1$, a natural way to solve the \mixingframework optimization problem is via exponentiated gradient descent (EGD)~\citep{kivinen1997exponentiated, arora2012multiplicative}, which updates $p^t$ greedily while ensuring that it remains on the probability simplex. The following lemma presents the EGD update rule for the \mixingframework optimization problem when $\sigma = \text{Id}$.

\begin{restatable}[]{lemma}{egd}
    The EGD update rule for \eqref{eq:framework} subject to $\Lval{i}^{t+1}(\p) = c_i^t - b_i^t \sum_{j = 1}^m A_{ij}^t p_j^t \;\forall i\in[m]$ is
    \begin{align}
        p_j^{t+1} = \frac{1}{Z^t} \cdot p_j^t \exp \bigg(\eta \sum_{i = 1}^m b_i^t A_{ij}^t\bigg) \; \forall j \in [m], \label{eq:egd_rule}
    \end{align}

    where $\eta> 0$ is the step size and $Z^t$ is a normalizing constant such that $p_j^{t+1} \in \triangle^m$.
\label{lemma:egd}
\end{restatable}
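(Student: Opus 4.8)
The plan is to recognize \eqref{eq:egd_rule} as a direct specialization of the standard exponentiated gradient descent (EGD) update to the linear mixing law, so that the work reduces to (i) writing down the generic EGD step, (ii) computing the gradient of the round-$t$ objective under the constraint, and (iii) simplifying. EGD is the mirror-descent step induced by the negative-entropy mirror map: one greedy update solves $p^{t+1} = \operatorname{argmin}_{p \in \triangle^m}\{\eta \langle g^t, p\rangle + D_{\mathrm{KL}}(p \,\|\, p^t)\}$, and solving this entropic proximal problem (e.g., via its KKT conditions with the simplex constraint) gives the multiplicative form $p_j^{t+1} \propto p_j^t \exp(-\eta g_j^t)$, where $g^t$ is the gradient of the instantaneous objective and the proportionality constant is the normalizer $Z^t$ that keeps $p^{t+1}$ on $\triangle^m$. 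First I would state this generic update, either by citation or by a one-line re-derivation.

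Next I would pin down $g^t$. Since EGD solves \eqref{eq:framework} greedily, the instantaneous objective at round $t$ is the next-round average loss $\sum_{i=1}^m \Lval{i}^{t+1}(\p)$, regarded as a function of the current proportions $p^t$ alone, because the mixing law couples $L^{t+1}$ to $p^t$. Substituting the linear constraint $\Lval{i}^{t+1}(\p) = c_i^t - b_i^t \sum_{j=1}^m A_{ij}^t p_j^t$ and differentiating with respect to $p_j^t$, the constants $c_i^t$ vanish and each summand contributes $-b_i^t A_{ij}^t$, yielding $g_j^t = -\sum_{i=1}^m b_i^t A_{ij}^t$.

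The remainder is bookkeeping: plugging $g_j^t$ into $p_j^{t+1} \propto p_j^t \exp(-\eta g_j^t)$, the two negative signs---one from descending a minimization objective, one from the $-b_i^t A_{ij}^t$ appearing in the gradient---cancel, giving the exponent $+\eta \sum_{i=1}^m b_i^t A_{ij}^t$ and the normalizer $Z^t = \sum_{k=1}^m p_k^t \exp(\eta \sum_{i=1}^m b_i^t A_{ik}^t)$, which is exactly \eqref{eq:egd_rule}.

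The calculation is short, so I do not expect a genuine obstacle; the step I would be most careful about is the conventions. I would make sure to differentiate with respect to $p^t$ (not $p^{t+1}$ or the whole vector $\p$), to treat the update as greedy minimization of the next-round loss, and to track the double negation that produces a positive exponent. The only conceptual point worth a sentence is that reducing the global objective \eqref{eq:framework} one round at a time via EGD is the framework's chosen solving strategy for $T>1$, so it is a modeling choice built into the lemma's hypothesis rather than a claim requiring separate justification.
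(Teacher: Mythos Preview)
Your proposal is correct and essentially identical to the paper's proof: the paper also identifies the round-$t$ cost as $\sum_{i=1}^m \Lval{i}^{t+1}(\p)$, substitutes the linear constraint, computes the gradient $-\sum_{i=1}^m b_i^t A_{ij}^t$, and plugs it into the generic EGD update $p_j^{t+1} = p_j^t \exp(-\eta \nabla_j c^t)/Z^t$. The only cosmetic difference is that the paper cites the EGD update as background rather than re-deriving it via mirror descent and KKT conditions.
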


This lemma shows how to adjust $p^t$ dynamically to solve the \mixingframework optimization problem. Notably, this update rule is defined in terms of the mixing law parameters, $A^t$ and $b^t$. This gives us a way to convert between how a method sets $\p$ and the implicit \textit{assumption} it makes in the mixing law.

\begin{table}[]
    \centering
    \scriptsize
    \begin{tabular}{c | l | l| l} 
        Method & 1) Mixing Law Parameterization & 2) Parameters & 3) Solver \\
        \toprule 
        DML & $\Lval{i}(\p) = c_i + b_i \exp\big(\sum_{j=1}^m -A_{ij} p_j \big)$ & Fit from $\ge m+1$ training runs & Direct \\
        Skill-It & $\Lval{i}^{t+1}(\p) = \Lval{i}^t(\p) - b^t \sum_{j=1}^m A_{ij}^{t} p_j^t$ & $A_{ij}^{t} = \Lval{i}^t(\p) (\Lval{i}^{T+1}(\mathbf{1}_j) - \Lval{i}^1(\mathbf{1}_j))/\Lval{i}^1(\mathbf{1}_j)$ & EGD \\
        DoReMi & $\Lval{i}^{t+1}(\p) = \Lval{i}^t(\p) - b^t \sum_{j=1}^m A_{ij}^{t} p_j^t$ & $A_{ii}^{t} = \min \{\Ltrain{i}^t(\p) - \Ltrain{i}(\fref) , 0\}$ & EGD \\
        DoGE & $\Lval{i}^{t+1}(\p) = \Lval{i}^t(\p) - b^t \sum_{j=1}^m A_{ij}^{t} p_j^t$ & $A_{ij}^{t} = \langle \triangledown \Lval{i}^t(\p), \triangledown \Ltrain{j}^t(\p) \rangle$ & EGD \\
        \midrule 
        \name & $\Lval{i}^{t+1}(\p) = \Lval{i}^t(\p) - \sum_{j=1}^m A_{ij}^{t} p_j^t$ & Fit from history of $L_{\text{val}}$ and $\p$ & EGD \\
    \end{tabular}
    \caption{Summary of how existing methods and \name are expressed in the \mixingframework framework~\eqref{eq:framework}.}
    \label{tab:framework}
\end{table}
\vspace{-1em}

\subsection{Unifying Existing Methods}\label{sec:existing}

We discuss four existing data mixing methods and express them as specific instances of the \mixingframework framework. A summary of our insights is provided in Section~\ref{sec:insights} and Table~\ref{tab:framework}.
In Appendix~\ref{sec:app_existing}, we comment on how several other online and offline data mixing methods are related to our framework, and all proofs for this section are in Appendix~\ref{sec:app_proofs}.

\subsubsection{Offline methods}
\textbf{Data Mixing Laws (DML).} \citet{ye2024datamixinglawsoptimizing} propose an offline method using a static mixing law ($T=1$): $\Lval{i}(\p) = c_i + b_i \exp(\sum_{j = 1}^m -A_{ij} p_j)$ for $i \in [m]$, with $A, b, c$ learned by sweeping training runs over static proportions ($\ge m+1$ runs to avoid being underdetermined). They select the proportion that minimizes the predicted validation loss. This law can be derived from~\eqref{eq:mixinglaw} with $\sigma=\exp$, showing that \mixingframework with a) log-linear static mixing law, b) fitted parameters, and c) direct computation of $\p$ can express DML.

\subsubsection{Online Methods}

We provide a colloquial description and an algorithmic description of the following three online methods. Then, in Theorem~\ref{thm:unify} we demonstrate how they all are expressed in \mixingframework using a linear dynamic mixing law, the EGD update rule, and method-specific mixing law parameters.

\textbf{Skill-It.} \citet{chen2023skillit} is an online method motivated by curriculum learning that dynamically adjusts mixture proportions. Data group interactions are expressed in a ``skills graph,'' where each edge denotes how much the loss on one group changes when trained on another. The skills graph is learned in advance using $m$ training runs and then used to update proportions $p^t$ throughout training.

Concretely, the skills graph matrix $A^{\text{SG}}$ has entries $A^{\text{SG}}_{ij} = (\Lval{i}^{T+1}(\mathbf{1}_j) - \Lval{i}^1(\mathbf{1}_j))/\Lval{i}^1(\mathbf{1}_j)$ indicating the relative decrease in loss on group $i$ when training a model on group $j$ only. This is used in the Skill-It update rule,  $p^{t+1}_j \propto p_j^t \exp(\eta \sum_{i = 1}^m A^{\text{SG}}_{ij} \Lval{i}^t(\p))$ for all $j \in [m]$ and learning rate $\eta > 0$. This rule determines $p^{t+1}$, which is then used to sample $\Dtrain$ for training $f$ in the next round.

\textbf{DoReMi.} \citet{xie2023doremi} is an online method that applies ideas from distributionally robust optimization to data mixing, where the training objective minimizes the worst-group excess loss over a model trained with stratified sampling. $p^t$ is updated dynamically to minimize this excess loss and then averaged for the final run. DoReMi requires two additional runs to learn a static $\p$.

Concretely, let $\fref = f(\text{Unif}(m), T+1)$ denote a ``reference model'' that is first trained using stratified sampling. Then, a ``proxy model'' uses dynamic proportions according to the update rule $p_j^{t+1} \propto p_j^t \exp(\eta \max\{\Ltrain{j}^t(\p) - \Ltrain{j}(\fref), 0\})$ for all $j \in [m]$ and step size $\eta > 0$. This $p^{t+1}$ is used to weight the training objective, such that the proxy model is updated to minimize $\sum_{i = 1}^m p_i^{t+1} \Ltrain{i}(f)$ at the next timestep. The averaged static proportions $\frac{1}{T}\sum_{t = 1}^T p^t$ are then used in the final run. 

\textbf{DoGE.} \citet{fan2024dogedomainreweightinggeneralization} is an online method that solves a bi-level optimization problem in which $p^t$ is updated to minimize the average training loss at each step. By using a first-order Taylor approximation of the training loss, $p^t$ is updated using the gradient of each data group. The dynamic proportions are then averaged for the final run. DoGE requires one additional run to learn a static $\p$.

Concretely, a proxy model is trained using $p_j^{t+1} \propto p_j^t \exp(\eta \langle \triangledown \Ltrain{j}(f^t), \sum_{i = 1}^m \triangledown \Lval{i}(f^t) \rangle)$, and $f$ is updated to minimize the training loss weighted by $p^t$, similar to DoReMi. The averaged static proportions $\frac{1}{T}\sum_{t=1}^T p^t$ are used in the final run. 

\textbf{Framework expression.} All three online methods use an update rule $p_j^{t+1} \propto p_j^t \exp(\cdot)$, which is similar to~\eqref{eq:egd_rule}.
This provides intuition for our main theorem, which expresses these methods in \mixingframework.

\begin{restatable}[]{theorem}{unify}\label{thm:unify}
Define the following parameters for each method:
\begin{itemize}[itemsep=0.00pt,topsep=0pt,leftmargin=10pt]
    \item $A^{t, \text{Skill-It}} \in \R^{m \times m}$, where $A_{ij}^{t, \text{Skill-It}} \texttt{=} \Lval{i}^t(\p) (\Lval{i}^{T+1}(\mathbf{1}_j) - \Lval{i}^1(\mathbf{1}_j))/\Lval{i}^1(\mathbf{1}_j)$ for all $i, j \in [m]$,
    \item $A^{t, \text{DRM}} \in \R^{m \times m}$, where $A_{ii}^{t, \text{DRM}} = \min \{\Ltrain{i}^t(\p) - \Ltrain{i}(\fref) , 0\}$ and $A_{ij}^{t, \text{DRM}} = 0$ for $i \neq j$,
    \item $A^{t, \text{DoGE}} \in \R^{m \times m}$, where $A_{ij}^{t, \text{DoGE}} = \langle \triangledown \Lval{i}^t(\p), \triangledown \Ltrain{j}^t(\p) \rangle$ for all $i, j \in [m]$.
\end{itemize}
    Instantiating the \mixingframework framework~\eqref{eq:framework} with \textnormal{a)} a linear dynamic mixing law $\Lval{i}^{t+1}(\p) = \Lval{i}^t(\p) - b^t \sum_{j = 1}^m A_{ij}^t p_j^t$, \textnormal{b)} parameters $A^t = A^{t, \text{Skill-It/DRM/DoGE}}$, and \textnormal{c)} EGD to solve for $\p$ allows for us to express Skill-It, DoReMi, and DoGE, respectively.
\end{restatable}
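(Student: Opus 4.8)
The plan is to verify the definition of ``expressed'' directly for each of the three methods: for each, I must exhibit that (i) the multiplicative proportion update it performs coincides exactly with the EGD update rule~\eqref{eq:egd_rule} of Lemma~\ref{lemma:egd} once the stated $A^{t,\cdot}$ (and a suitable scalar $b^t$) are substituted into the linear dynamic mixing law, and (ii) the way it trains $f$ from $\p$ matches the framework's. The unifying observation is that all three updates have the common form $p_j^{t+1} \propto p_j^t \exp(\eta\, g_j^t)$, and Lemma~\ref{lemma:egd} tells us the framework produces exactly $g_j^t = \sum_{i=1}^m b_i^t A_{ij}^t$ under the theorem's mixing law. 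So the argument reduces to checking, method by method, that $\sum_i b^t A_{ij}^{t,\cdot}$ equals the exponent appearing in that method's update. I would take $c_i^t = \Lval{i}^t(\p)$ and $b_i^t = b^t$ (a scalar), so the theorem's mixing law is precisely the specialization of~\eqref{eq:mixinglaw} that Lemma~\ref{lemma:egd} assumes.

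I would dispatch Skill-It and DoGE first, as these fall out immediately. For Skill-It, the stated parameter factorizes as $A_{ij}^{t,\text{Skill-It}} = \Lval{i}^t(\p)\,A^{\text{SG}}_{ij}$; substituting into the EGD exponent with $b^t=1$ gives $\sum_i A_{ij}^{t,\text{Skill-It}} = \sum_i \Lval{i}^t(\p)\,A^{\text{SG}}_{ij}$, which is exactly the exponent in Skill-It's update $p_j^{t+1}\propto p_j^t\exp(\eta\sum_i A^{\text{SG}}_{ij}\Lval{i}^t(\p))$. For DoGE, the only step needed is bilinearity of the inner product: with $b^t=1$, $\sum_i A_{ij}^{t,\text{DoGE}} = \sum_i \langle \triangledown\Lval{i}^t(\p),\triangledown\Ltrain{j}^t(\p)\rangle = \langle \triangledown\Ltrain{j}^t(\p),\sum_i\triangledown\Lval{i}^t(\p)\rangle$, matching DoGE's exponent. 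In both cases any residual scalar can be folded into the step size $\eta$.

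DoReMi is the case I expect to require genuine care, and it is where I would concentrate the bookkeeping. Because $A^{t,\text{DRM}}$ is diagonal, the EGD sum collapses to a single term, $\sum_i b^t A_{ij}^{t,\text{DRM}} = b^t A_{jj}^{t,\text{DRM}} = b^t \min\{\Ltrain{j}^t(\p)-\Ltrain{j}(\fref),0\}$, which already has the per-group form of DoReMi's update. The subtlety is reconciling this with DoReMi's ascent-style, $\max$-clipped excess-loss update $p_j^{t+1}\propto p_j^t\exp(\eta\max\{\Ltrain{j}^t(\p)-\Ltrain{j}(\fref),0\})$: matching a descent-based EGD step to an ascent-based update forces a negative choice of $b^t$, and one must then track the two sign flips (from $b^t$ and from the clipping, via $-\min\{x,0\}=\max\{-x,0\}$) to confirm the exponents agree. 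I would verify this identity explicitly and treat it as the main obstacle of the proof.

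Finally, to satisfy the ``trains $f$ in terms of $\p$'' half of the definition, I would note that DoReMi and DoGE update a proxy model by minimizing the $p^{t+1}$-weighted training loss $\sum_i p_i^{t+1}\Ltrain{i}(f)$, which is the expectation of the per-example loss under sampling with proportions $p^{t+1}$; hence their training step coincides with the framework's sample-by-$\p$ procedure, and the final averaging $\tfrac{1}{T}\sum_{t=1}^T p^t$ is a post-processing step applied after the dynamic run that the update rule, and thus the \mixingframework expression, need not capture. Assembling the three per-method exponent identities together with this training-equivalence remark yields the claim.
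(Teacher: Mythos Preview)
Your overall strategy is exactly the paper's: reduce each method to Lemma~\ref{lemma:egd} by matching the exponent in its multiplicative update against $\sum_i b_i^t A_{ij}^t$, and separately argue that the weighted-loss proxy training of DoReMi/DoGE coincides in expectation with sample-by-$\p$ training (the paper isolates this as Lemma~\ref{lemma:weighted_obj}). Your treatment of Skill-It and DoGE, and your remark that $c_i^t$ can be taken to be $\Lval{i}^t(\p)$, are essentially identical to the paper's Propositions~\ref{prop:skill} and~\ref{prop:doge} together with Lemma~\ref{lemma:c_to_l}.

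The genuine gap is in your DoReMi reconciliation. Your proposed identity $-\min\{x,0\}=\max\{-x,0\}$ is correct, but it does not deliver DoReMi's exponent: with $b^t<0$ you obtain $b^t\min\{x,0\}$, which is supported on $\{x<0\}$, whereas DoReMi's $\max\{x,0\}$ is supported on $\{x>0\}$. No scalar choice of $b^t$ can turn one into the other, so the ``two sign flips'' argument fails. The paper's own proof of Proposition~\ref{prop:doremi} does not attempt any such sign gymnastics; it simply sets $A_{ii}^t=\max\{\Ltrain{i}^t(\p)-\Ltrain{i}(\fref),0\}$ and matches directly. In other words, the $\min$ appearing in the theorem statement (and in Table~\ref{tab:framework}) is a typo for $\max$, and you should prove the corrected statement rather than try to force the one written.
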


\subsubsection{Summary of \mixingframework Framework Insights}\label{sec:insights}
Table~\ref{tab:framework} summarizes how existing methods are expressed in the \mixingframework framework. \mixingframework reveals the assumptions each method makes through how the components of the framework are specified. 
First, all mixing laws are either linear or log-linear. Second, the mixing laws differ in the values of the parameters used. For example, Skill-It's $A^t$ is the current loss times a static skills graph matrix, while DoReMi's $A^t$ is diagonal. Third, offline mixing methods solve for $\p$ directly while online mixing methods use EGD, which uses a greedy approximation.
If the mixing law and solving strategy assumptions hold true in practice, then the method yields optimal mixture proportions.
In the next section, we study the fidelity of these assumptions.

\section{
Analyzing Fidelity of Existing Methods with the \mixingframework Framework
}\label{sec:analysis}

We examine the fidelity of the assumptions made by existing methods in terms of the three components of the \mixingframework framework:  a) the mixing law parameterization, b) values of the mixing law parameters, and c) how to solve the optimization problem for $\p$. After providing experiment details (Section~\ref{sec:details}), we discuss these three components in order (Section~\ref{sec:linear_model}-\ref{sec:optimization}).

\subsection{Experiment Details}
\label{sec:details}

\textbf{Data settings.} We use a sampled version of SlimPajama~\citep{cerebras2023slimpajama, yoon2023slimpajama}, a pre-processed version of the RedPajama pretraining dataset~\citep{together2023redpajama}. 
SlimPajama consists of $7$ data groups: ArXiv, Books, CommonCrawl, C4 \citep{2019t5}, Github, StackExchange, and Wikipedia.
To develop a fine-grained understanding of data mixing, we create 6 settings by extracting combinations of these groups. We study three settings with $m = 2$: Arxiv/Stackexchange, Github/C4, and Book/StackExchange. We study two settings with $m=3$: Arxiv/Book/StackExchange and CommonCrawl/Github/Wikipedia. Finally, we study mixing over the full SlimPajama dataset with $m=7$.

\textbf{Models.} We train 160M parameter GPT-style decoder-only LLMs with batch size $8$ and context length $2048$. For $m = 2, 3$, we train for $5$K steps, and for $m = 7$, we train for $40$K steps.

\textbf{Training sweeps.} To assess the true loss-proportion relationship and compare it to the assumptions made by existing methods, we conduct training sweeps over different mixture proportions, denoted as $\mathcal{P}$. For $m = 2$, we set $\mathcal{P} = \{[0.1, 0.9], [0.2, 0.8], \dots, [0.9, 0.1]\}$. For $m=3$ and $7$, we set $\mathcal{P}$ equal to $10$ $\p$'s and $40$ $\p$'s drawn from the Dirichlet distribution with $\alpha = 1.0$ and $1.5$, respectively.

\subsection{Mixing law parameterization}\label{sec:linear_model}

We examine whether existing methods' mixing law parameterizations---log-linear static and linear dynamic---capture the true loss-proportion relationship. By empirically fitting them to loss-proportion pairs, we find that both parameterizations are indeed well-specified.
Full results for both mixing laws are in Table \ref{tab:fit} in Appendix \ref{app:mixing_law}. We discuss the generality of these parameterizations across training scales and other datasets, as well as higher-order parameterizations, in Appendix~\ref{app:checkpoints}.

\textbf{Setup.} For the \emph{log-linear static mixing law}, we study if there exists $A, b, c$ such that $\Lval{i}(\p)$ can be expressed as $c_i + b_i \exp(\sum_{j = 1}^m -A_{ij} p_j)$ for all $i \in [m]$. We fit the parameters using full training runs on $\mathcal{P}$. For the \emph{linear dynamic mixing law}, we study if there exists $A^t$ such that $\Lval{i}^{t+1}(\p)$ can be expressed as $\Lval{i}^t(\p) - \sum_{j = 1}^m A_{ij}^t p_j^t$, for all $i \in [m]$ ($b^t$ is absorbed into $A^t$). To fit $A^t$, we select a timestep $t$ and train on a static proportion $p^0 \in \mathcal{P}$ for all $p^1, \dots, p^t$ until time $t$, and at $t+1$ we sweep the values of $p^{t+1} \in \mathcal{P}$.

\textbf{Results.} On average across our $6$ data settings, the mean squared error (MSE) of the fitted log-linear static mixing law is $8.9 \times 10^{-4}$, and the $R^2$ coefficient of determination is 0.991. 
The average MSE of the fitted linear dynamic mixing law is $1.0 \times 10^{-4}$ and the $R^2$ is 0.947.
See Figure~\ref{fig:parameterization} for examples.
Since both parameterizations have high $R^2$ and low MSE, we conclude that they capture the true loss-proportion relationship well and are of high fidelity.

\begin{figure}
    \centering
    \includegraphics[width=0.47\linewidth]{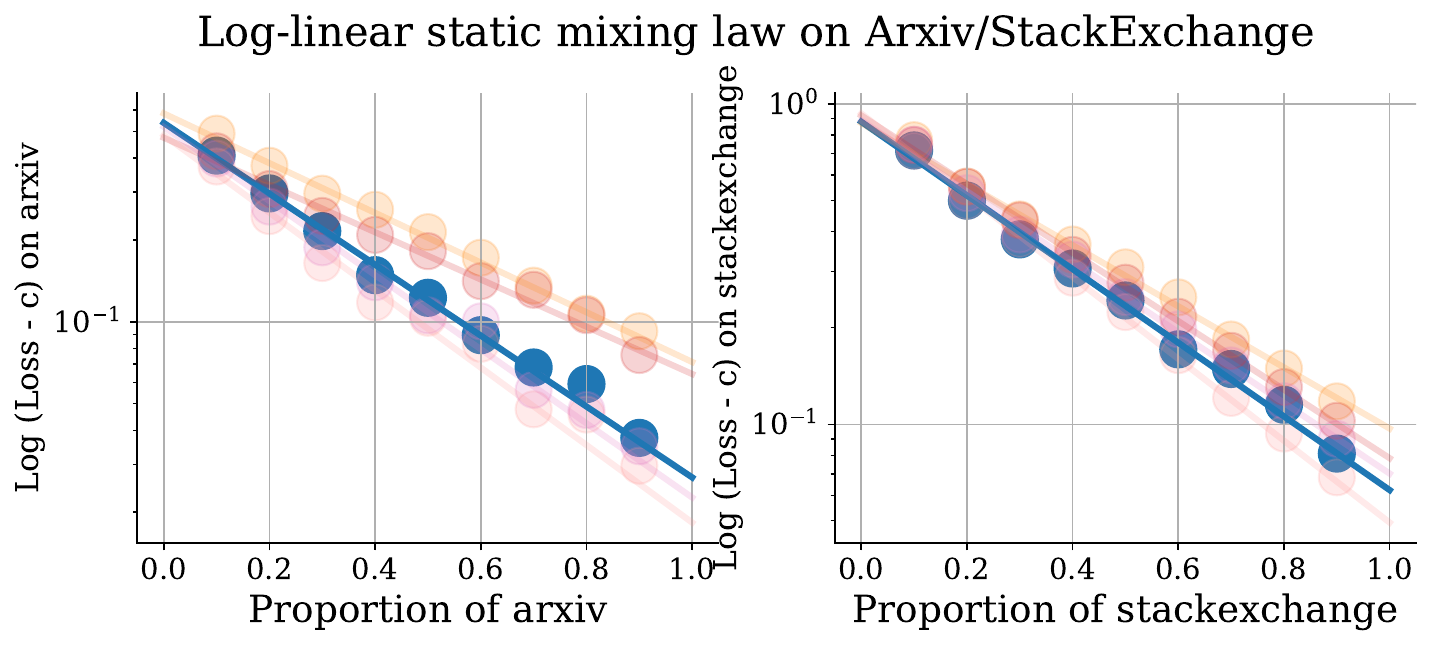}
    \includegraphics[width=0.47\linewidth]{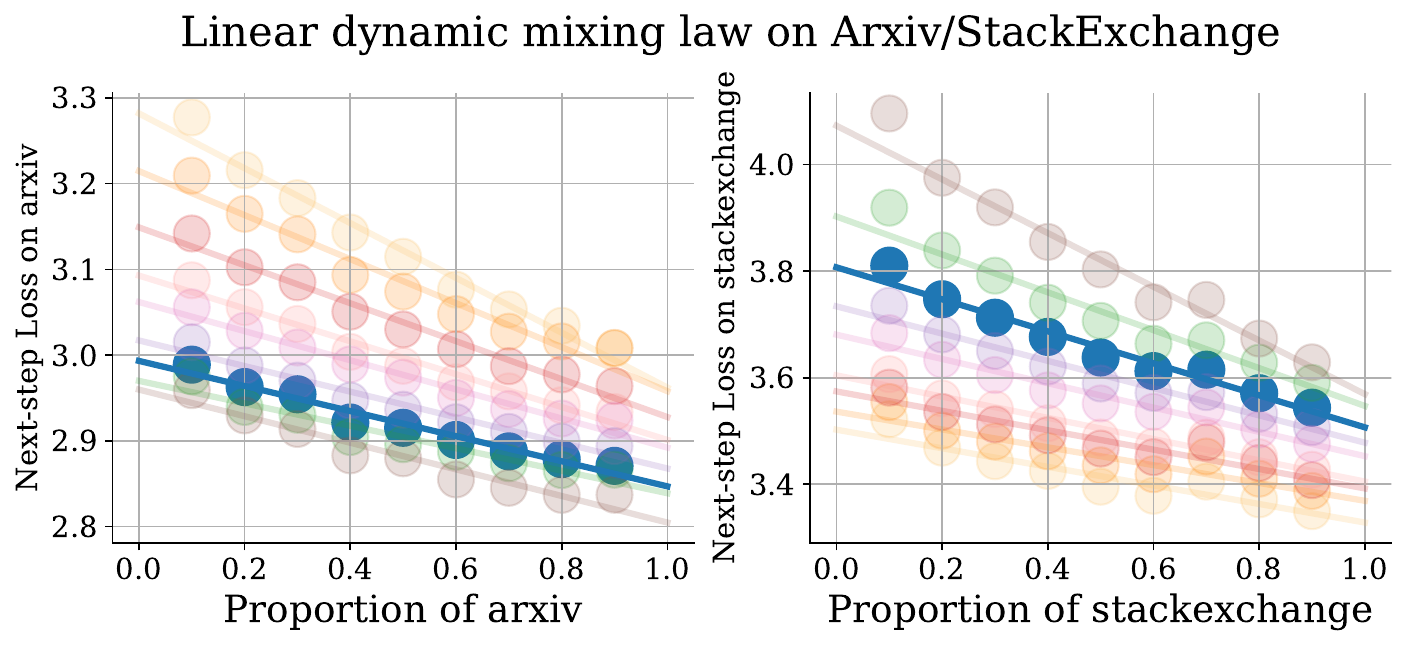}
    \caption{Left: $p_i$ vs $\log(\Lval{i}(\p) - c_i)$ with fitted static log-linear mixing law. Right: $p_i^t$ vs $\Lval{i}(\p)$ with fitted linear dynamic mixing law. Colors represent random seeds (left) and initial $p^0 \in \mathcal{P}$ (right, blue is ${0.7, 0.3}$). Both laws fit the true loss-proportion relationship well.}
    \label{fig:parameterization}
\end{figure}

\subsection{Values of mixing law parameters}\label{sec:a_matrix}

As shown in Table \ref{tab:framework}, each method sets the parameters of its mixing law differently. We study how close the method-specific parameters are to the optimal parameters that are obtained when fitting the method's mixing law to the true loss-proportion relationship, and if these parameter disparities are reflected in method performance. We find that existing methods' differences in mixing law parameters are largely responsible for their performance.
We omit studying DML since its parameters are fitted from full training runs and hence differ from the optimal in estimation error only.

\begin{wrapfigure}[18]{r}{0.45\textwidth} 
    \centering
\includegraphics[width=0.45\textwidth]{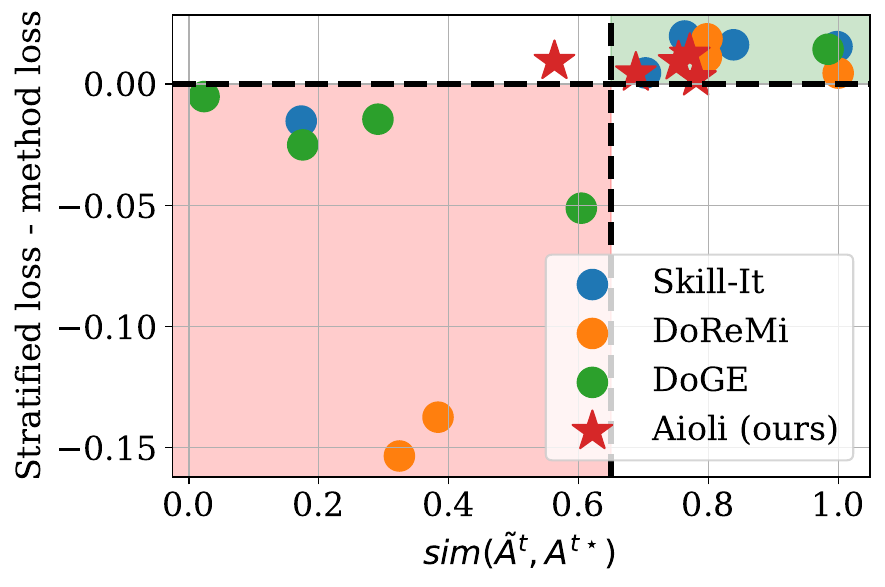} 
   \caption{Improvement over stratified sampling versus optimality of $A^t$. Each dot represents a method applied to a dataset. The red region shows that existing methods are worse than stratified on at least 1 dataset. The vertical dashed line serves as a visual aid.} 
    \label{fig:parameters}
\end{wrapfigure}

\textbf{Setup.} For Skill-It, DoReMi, and DoGE, we select a step $t$ and obtain the method-specific $A^t$. We then sweep $\mathcal{P}$ for the next round $t+1$. This sweep is used to approximate an optimal $A^{t \star}$ that captures the true loss-mixture relationship, $L_{\text{val}}^{t+1}(\p) = L_{\text{val}}^t(\p) - A^{t \star} p^t$, as well as fit a $b^t \in \R$ used for scaling $A^t$ (details in Appendix~\ref{app:mix_params}). We study the relationship between $\tilde{A}^t := b^t A^t$ and $A^{t \star}$, and how it is related to the performance of the method.

To express similarity between $\tilde{A}^t$ and $A^{t \star}$ in a way that is reflected in performance, we observe that from Lemma~\ref{lemma:egd}, $p^t$ is updated using the column sum of $A^t$, $\mathbf{1}^\top A^t$. Moreover, the magnitude of $A^t$ is not critical to performance since the step size $\eta$ can always be tuned to control this.  
Therefore, we compare the vectors $\tilde{a}^t = \mathbf{1}^\top \tilde{A}^t / \|\mathbf{1}^\top \tilde{A}^t\|_2$ and $a^{t \star} = \mathbf{1}^\top A^{t\star} / \|\mathbf{1}^\top A^{t\star}\|_2$. Finally, we note that the order of the elements of $\tilde{a}^t$ determines the update direction from $p^t$ to $p^{t+1}$ in Lemma~\ref{lemma:egd}. Therefore, we propose a similarity score that is an average of cosine similarity and the Spearman rank correlation,  $\text{sim}(\tilde{A}^t, A^{t\star}) = 0.5 \text{cossim} (\tilde{a}^t, a^{t\star}) + 0.5 \text{Spearman}(\tilde{a}^t, a^{t\star})$. This metric is bounded between $-1$ and $1$, where $1$ indicates $\tilde{a}^t = a^{t\star}$ and $-1$ indicates $\tilde{a}^t = -a^{t\star}$.

\textbf{Results.} 
In Figure~\ref{fig:parameters}, we plot each method's $\text{sim}(\tilde{A}^t, A^{t \star})$ versus each method's improvement over the stratified sampling baseline, which sets $p_i = 1/m$ for all $i \in [m]$, for each dataset in the $m=2, 3$ data settings. We find that no existing online method works well across all datasets (also see Table~\ref{tab:unrestricted}), and that our metric and loss improvement have a moderate positive correlation ($R^2=0.491$). This suggests that $A^t$'s accuracy is critical to the performance of online methods, and that existing methods' $A^t$ are not consistently accurate across the datasets. In Appendix \ref{app:a_star_properties}, we give more details on the structure of $A^{t \star}$, providing intuition for why existing methods' parameters cannot express it.

\subsection{Solving strategy}
\label{sec:optimization}

We study the assumptions made in how existing methods solve the \mixingframework optimization problem. We find that the greedy approximation used by EGD, $\text{minimize}_{p^t} \sum_{i = 1}^m \Lval{i}^{t+1}(\p)$, does not significantly compromise performance compared to full optimization of dynamic proportions, which has an exponentially large solution space. In particular, we study if greedily selecting $p^t$ from $\mathcal{P}$ at each $t$ yields the optimal dynamic proportions in $\mathcal{P}^T$, and we find that this holds in $2$ out of $3$ data settings (Table~\ref{tab:greedy}).
This suggests that the greedy approximation can simplify optimization without substantial performance loss. We also comment on other possible solving strategies in Appendix~\ref{app:solving}.

\section{\name: a Method for Improved Data Mixing}\label{sec:method}

To validate our insights from Section~\ref{sec:analysis}, we develop \name, an online method derived from the \mixingframework framework. We have three takeaways from section \ref{sec:analysis}:
\begin{itemize}[itemsep=0.00pt,topsep=0pt,leftmargin=15pt]
    \item[a)] A linear dynamic mixing law, $\Lval{i}^{t+1}(\p) = \Lval{i}^t(\p) - \sum_{j = 1}^m A_{ij}^t p_j^t$ for all $i \in [m]$, can capture the loss-proportion relationship with high fidelity (Section \ref{sec:linear_model}).
    \item[b)] Existing online methods often set the parameters $A^t$ to be very different from true $A^{t\star}$ (Section \ref{sec:a_matrix}).
    \item[c)] Exponentiated gradient descent can recover near-optimal performance while simplifying the optimization problem, avoiding an exponential solution space (Section \ref{sec:optimization}).
\end{itemize}

We thus directly specify the linear dynamic mixing law parameterization and EGD as two out of three \mixingframework components of \name since we found that their assumptions generally hold in practice. According to Lemma~\ref{lemma:egd}, the update rule given these two components is $p_j^{t+1} \propto p_j^t \exp(\eta \sum_{i=1}^m A_{ij}^t)$ ($b^t$ is absorbed into $A^t$).
Thus, our primary mandate in creating \name is to construct and utilize an $A^t$ that is an accurate estimate of the true $A^{t \star}$ in the linear dynamic mixing law, which existing online methods fail to achieve.

\textbf{Estimating $A^{t \star}$.}
To build intuition, we first consider a high-cost naive approach. For each round, we could conduct a training sweep of $m$ different proportions $p^{t, 1}, \dots, p^{t, m}$, and observe each resulting change in loss. 
We could then solve a system of $m$ equations for each $i$: $\Lval{i}^{t} - \Lval{i}^{t+1}(p^{t, s}) = \sum_{j = 1}^m A_{ij}^t p_j^{t, s}$ for $s \in [m]$, obtaining vectors $A_{1}^t, \dots, A_{m}^t$. However, this approach effectively requires $m$ extra training runs. 

\name similarly solves a system of equations, but it computes loss changes per sweep mixture without requiring extra training. First, it allocates $\delta$ fraction of the training round for learning $A^t$. Second, it partitions this $\delta$ into $K=mk$ intervals and trains according to an interleaved order on $p^{t, 1}, \dots, p^{t,m}$. After training on each $p^{t, j}$, we record the resulting change in validation losses, and we average over all of $p^{t, j}$'s intervals. Intuitively, the interleaving ensures that the model is trained on each $p^{t, j}$ for several intervals throughout $\delta$, which can approximate if we were to train on $p^{t, j}$ for the entire $\delta$ (which approximates the entire round). This procedure is outlined in \subroutine (Alg.~\ref{alg:learnparams}), with more details in Appendix~\ref{app:aioli_continued} and Figure~\ref{fig:aioli_intuition}.

\textbf{\name. } First, we set $p^0$ to be uniform. In each round, we estimate $A^t$ using \subroutine and then normalize the entries of $A^t$, producing $\bar{A}^t$.
Otherwise, $A^t$ decreases along with loss over time, resulting in the first few $p^t$ updates being much larger in magnitude than others.
Then, we update the proportions using $p_j^{t} \propto p_j^{t-1} \exp(\eta \sum_{i=1}^m \bar{A}_{ij}^t)$, as in Lemma~\ref{lemma:egd}, and train for the remainder of that round using $p_j^{t}$. 

Finally, we design \name so that it can also be used to improve other data mixing methods, which we study in Section~\ref{sec:restricted}. Mixture proportions can be updated using \name either from the start of training or from the middle of a run. In the latter case,
we denote an initial static mixture $\p^{\text{init}} \in \triangle^m$ and initial number of steps $S_{\text{init}}$. If $S_{\text{init}}$ is nonzero, \name trains according to $\p_{\text{init}}$ for the first $S_{\text{init}}$ steps before updating the mixture proportions. \name is presented in Algorithm \ref{alg:name}.

\begin{algorithm}[htb]
   \caption{\name}
   \label{alg:name}
\begin{algorithmic}[1]
   \State {\bfseries Input:}  data $\Dtrain$,  $\Dval$, model $f^1$. Initial steps $S_{\text{init}}$, initial proportions $\p^{\text{init}} \in \triangle^m$. $T$ rounds over $S - S_{\text{init}}$ remaining steps, $\delta$ fraction per round for learning parameters, learning rate $\eta$, one-hot smoothing factor $\varepsilon$. 
   \State If $S_{\text{init}} \neq 0$, train $f^1$ on $\p^{\text{init}}$ for $S_{\text{init}}$ steps. 
   \State Set $p^0 = \text{Unif}(m)$.
   \For{$t = 1, \dots, T$}
   \State Set $A^t, f^{t+\delta} \gets$ \subroutine$(\Dtrain, \Dval, \delta, f^t, \varepsilon)$ (Alg.~\ref{alg:learnparams}), and normalize $A^t$ to get $\bar{A}^t$.
   \State $p_j^{t} \propto p_j^{t-1} \exp(\eta \sum_{i = 1}^m \bar{A}_{ij}^t)$ for all $j \in [m]$.
   \State Train model $f^{t+\delta}$ with $\frac{S}{T}(1 - \delta)$ steps from mixture $p^{t}$ over $\Dtrain$. Obtain updated $f^{t+1}$.    
   \EndFor
\end{algorithmic}
\end{algorithm}

\begin{algorithm}[htb]
   \caption{\subroutine}
   \label{alg:learnparams}
\begin{algorithmic}[1]
   \State {\bfseries Input:} $\Dtrain, \Dval$, $\delta$, model $f^t$, number of sweeps $k$, one-hot smoothing factor $\varepsilon$.
   \State Split the fraction of a training round $\delta$ into $K$ intervals, where $K = m k$.
   \State Set $\beta = 0_{m, m}$
   \State Define $p^{t, i} = (1-\varepsilon)\mathbf{1}_i + \varepsilon \text{Unif}(m)$ for $i \in [m]$, and define $P = [p^{t, 1}, \dots, p^{t, m}] \in \triangle^{m \times m}$
   \State Randomly shuffle $k$ instances of each $i \in [m]$ to create an order $\mathcal{I} \in [m]^K$.
    \For{$\tau = 1, \dots, K$}
        \State Let $j = \mathcal{I}_{\tau}$. Train model on mixture $p^{t, j}$ of $\Dtrain$ for one interval, obtain $f^{t+\tau \delta/K}$.
        \For{$i \in [m]$}
            \State Update $\beta_{ij} \gets \beta_{ij} + \Lval{i}(f^{t+(\tau-1)\delta/K}) - \Lval{i}(f^{t+\tau\delta/K})$ with loss difference on $\Dval^i$.
        \EndFor
    \EndFor
    \State Update $\beta \gets  \frac{\beta}{k}$.
    \State Set $A_i^t = P^{-1} \beta_i$ for each $i \in [m]$.
   \State \textbf{Return}  $A^t \in \R^{m \times m}, f^{t+\delta}$
\end{algorithmic}
\end{algorithm}
\section{Experimental Results}\label{sec:exp}

We evaluate all methods in the \mixingframework framework, including \name, in two settings. First, we consider an \textbf{unrestricted} additional training budget setting to assess how \name compares to other methods in their original form, since each method uses a different number of extra training runs to learn proportions (Section~\ref{sec:unrestricted}).
Second, we consider a \textbf{restricted} training budget setting to assess if \name can enhance existing methods in practical, budget-constrained conditions, where existing methods have less than a full training run to learn mixing proportions (Section~\ref{sec:restricted}). Hyperparameters and experimental details, including proportion trajectories are available in Appendix \ref{app:experiments}. Downstream evaluation, ablations, experiments on larger models, and results adapting \name to an out-of-domain setting are in Appendix~\ref{app:results}.

\textbf{Data settings and models.} We use the same data settings and models as in Section~\ref{sec:details}, where we train for $S = 5$K steps for $m= 2, 3$-group settings and $S=40$K steps for the full SlimPajama.

\textbf{Baselines and evaluation.} We consider three online methods (Skill-It, DoGE, DoReMi) and one offline method (DML). We also consider grid search (GS), which sweeps training runs and selects $\p$ with the lowest average validation loss, and stratified sampling, which sets $p_i = \frac{1}{m}$ for all $i \in [m]$. 
For each method, we report the average test perplexity per group of the trained model. This metric is considered a proxy for downstream performance~\citep{fan2024dogedomainreweightinggeneralization} and also represents the objective in the data mixing problem.

\subsection{Unrestricted Setting}\label{sec:unrestricted}

\textbf{Setup.} 
We allow methods up to $10S$ additional training steps to learn the mixture proportions. Approaches like grid search and DML can use the entire budget (searching and fitting over $10$ full runs), while Skill-It, DoReMi, and DoGE use $mS$, $2S$, and $S$ extra training steps, respectively (see Section~\ref{sec:existing}). Stratified sampling and \name use no extra training steps. We evaluate \name with $S_{\text{init}}=0$. 

\textbf{Results.} In Table~\ref{tab:unrestricted}, we find that \name robustly outperforms stratified sampling in all 6 data settings by an average of 0.274 perplexity points, while all other methods do worse than stratified sampling on at least 1 set of data groups by up to 6.9 points. 
The performance of \name and other online methods is additionally reflected in Figure~\ref{fig:parameters}, in which we find that \name's $A^t$ similarity with $A^{t \star}$ is correlated with performance. While \name's parameter similarity is not always the highest, we note that its lowest similarity score is much higher than that of other methods, providing evidence that \name's parameter estimation procedure is more consistently accurate than that of other methods. Lastly, regarding offline methods, we hypothesize that their poor performance on settings with larger $m$ is due to the training budget being limited to $10S$, and that increasing this budget would eventually allow them to perform well.

\subsection{Restricted Setting}\label{sec:restricted}

\textbf{Motivation.} We introduce the restricted setting because practitioners may not have the resources or desire to complete multiple full training runs, especially as recent LLMs are trained for longer and on more data \citep{muennighoff2023scaling}. As a result, practitioners may only use data mixing methods on shortened runs, producing learned proportions that may be suboptimal on the full run. We study if \name is able to improve performance by dynamically adjusting previously learned proportions throughout the full training run.

\textbf{Setup.} We allow all existing methods up to $0.5S$ additional training steps to learn the mixture proportions. This requires methods to learn $\p^{\text{method}}$ over shorter runs of $S_{\text{method}}$ steps each. For instance, grid search will conduct $10$ runs of length $S/20$ (see Table~\ref{tab:allocations}). We evaluate each method by using $\p^{\text{method}}$ learned from shorter runs to train the model on the full run of $S$ steps.
We use \name to dynamically adjust each $\p^{\text{method}}$ throughout the full run. That is, for each existing method, we run \name with $\p^{\text{init}} = \p^{\text{method}}$ and $S_{\text{init}} = S_{\text{method}}$, referring to this as \name+method.

\begin{table}[]
    \centering
    \scriptsize
        \caption{Difference in average test perplexity compared to stratified sampling in the unrestricted setting, where all methods can use $\le10$ extra runs to learn $\p$. Negative values (\textcolor{darkgreen}{green}) = improvement. 
        A=Arxiv, B=Books, GH=GitHub, SE=StackExchange, W=Wikipedia.}
    \begin{tabular}{c | r | r| r | r | r | r | c | c} 
        \toprule
        Method & A/SE & GH/C4 & B/SE & A/B/SE & CC/GH/W & SlimPajama & \# $<$ stratified & \# extra runs \\
        \midrule 
        Stratified & $16.532$ & $35.991$ & $47.192$ &  $35.114$ & $41.583$ & $26.426$ & - & 0 \\
        \midrule 
        GS & \textcolor{darkgreen}{$-0.399$} & \textcolor{darkgreen}{$-0.407$} & \textcolor{darkgreen}{$-0.645$} & \textcolor{darkgreen}{$-0.247$} & $0.298$ & $0.490$ & 4 & 10 \\
        DML & \textcolor{darkgreen}{$-0.241$} & \textcolor{darkgreen}{$-0.110$} & \textcolor{darkgreen}{$-0.644$} & \textcolor{darkgreen}{$-0.599$} & $0.242$ & $1.641$ & 4 & 10 \\
         Skill-It & \textcolor{darkgreen}{$-0.326$} & $0.551$ & \textcolor{darkgreen}{$-0.728$} & \textcolor{darkgreen}{$-0.568$} & \textcolor{darkgreen}{$-0.195$} & \textcolor{darkgreen}{$-0.184$} & 5 & $m$ \\
        
         DoReMi & \textcolor{darkgreen}{$-0.307$} & $5.303$ & \textcolor{darkgreen}{$-0.217$} & \textcolor{darkgreen}{$-0.393$} & $6.898$ & $0.703$ & 3 & 2 \\
        DoGE & $0.419$ & $0.184$ & \textcolor{darkgreen}{$-0.678$} & $1.843$ & $0.604$ & $0.949$ & 1 & 1 \\
       \name & \textcolor{darkgreen}{$-0.205$} & \textcolor{darkgreen}{$-0.340$} & \textcolor{darkgreen}{$-0.439$} & \textcolor{darkgreen}{$-0.226$} & \textcolor{darkgreen}{$-0.196$} & \textcolor{darkgreen}{$-0.240$} & 6 & 0 \\
        \bottomrule
    \end{tabular}
    \label{tab:unrestricted}
    \vspace{-1em}
\end{table}

\textbf{Results.} In Table~\ref{tab:restricted}, we find that  adding \name to any existing method that learns proportions over shorter runs improves average test perplexity per group in 28 out of 30 settings, by an average of 1.202 and a maximum of 12.012 points. 
Furthermore, \name can help methods that initially underperform stratified sampling surpass it, such as DoGE across all settings. In some settings, such as Books/StackExchange, \name improves methods that already outperform stratified sampling.
This shows that \name can enhance a wide variety of static proportions, regardless of their initial performance. 
For the two settings where \name underperforms the base method, the base method already outperforms stratified, and adding \name maintains this trend, worsening perplexity by at most 0.025 points.

\begin{table}[]
    \centering
    \caption{Average test perplexity in the restricted setting, where each method learns $\p$ on shortened runs, and \name+method dynamically adjusts $\p$ throughout training. \textcolor{darkgreen}{green}=\name+method outperforms method.}
    \resizebox{0.8\textwidth}{!}{
    \small
    \begin{tabular}{c | l | l| l | l | l | l} 
        \toprule
        Method & Arxiv/SE & GH/C4 & Books/SE & Arxiv/Books/SE & CC/GH/Wiki & SlimPajama \\
        \midrule 
        GS & $16.573$ & $36.345$ & $47.063$ &  $35.174$ & $42.767$ & $27.741$ \\
        \name + GS & \textcolor{darkgreen}{$16.388\ $} & \textcolor{darkgreen}{$35.925\ $} & \textcolor{darkgreen}{$46.667\ $} & \textcolor{darkgreen}{$34.705\ $} & \textcolor{darkgreen}{$41.378\ $} & \textcolor{darkgreen}{$25.654\ $} \\
        \midrule 
        DML & $16.659$ & $36.658$ & $46.846$ & $34.585$ & $42.731$ & $37.696$ \\
        \name + DML & \textcolor{darkgreen}{$16.277$} & \textcolor{darkgreen}{$35.856$} & \textcolor{darkgreen}{$46.710$} & \textcolor{darkgreen}{$34.529$} & \textcolor{darkgreen}{$41.595$} & \textcolor{darkgreen}{$25.654\ $} \\
        \midrule 
        Skill-it & $16.246$ & $37.255$ & $46.667$ & $34.539$ & $42.069$ & $26.734$ \\
        \name + Skill-it & \textcolor{black}{$16.261$} & \textcolor{darkgreen}{$36.153$} & \textcolor{darkgreen}{$46.586$} & \textcolor{black}{$34.565$} & \textcolor{darkgreen}{$41.732$} & \textcolor{darkgreen}{$26.073\ $} \\
        \midrule
        DoReMi & $16.522$ & $37.812$ & $46.489$ & $34.934$ & $42.738$ & $28.762$ \\
        \name + DoReMi & \textcolor{darkgreen}{$16.347$} & \textcolor{darkgreen}{$35.626$} & \textcolor{darkgreen}{$46.163 $} & \textcolor{darkgreen}{$34.770\ $} & \textcolor{darkgreen}{$41.800$} & \textcolor{darkgreen}{$26.587\ $} \\
        \midrule
        DoGE & $16.853$ & $35.795$ & $46.743$ & $35.775$ & $41.790$ & $32.301$ \\
        \name + DoGE & \textcolor{darkgreen}{$16.473\ $} & \textcolor{darkgreen}{$35.632$} & \textcolor{darkgreen}{$46.145\ $} & \textcolor{darkgreen}{$34.771\ $} & \textcolor{darkgreen}{$41.378$} & \textcolor{darkgreen}{$26.073$} \\
        \midrule 
    \end{tabular}
    }
    \label{tab:restricted}
    \vspace{-1.5em}
\end{table}

\section{Related Work}

\paragraph{Data mixing. }
Beyond the data mixing methods explored in our framework, \citet{albalak2023efficientonlinedatamixing} frames online data mixing as a multi-armed bandit problem with loss as the reward function. 
In concurrent work, \citet{jiang2024adaptivedataoptimizationdynamic} also set data mixtures online and adaptively by using a credit assignment score that predicts how data from each domain affects loss on that domain. In our language, \citet{jiang2024adaptivedataoptimizationdynamic} use a diagonal $A^t$ matrix, and the values on the diagonal are defined by their credit assignment function and the per-group losses.
Recent works have also studied how to mix data on smaller models and use these learned proportions on larger models~\citep{kang2024autoscaleautomaticpredictioncomputeoptimal, ge2024datamixingefficientbivariate, liu2024regmixdatamixtureregression}. 
In a similar vein, \citet{na2024scalabledataablationapproximations} show that one can simulate a model trained on a particular data mixture by averaging together models trained on different (possibly disjoint) partitions of data groups.
\citet{thrush2024perplexitycorrelations} mixes data to optimize performance on downstream tasks, constructing an $A^t$-like interaction matrix by using pretrained model perplexities.

\paragraph{Curriculum Learning.}
\citet{bengio2009curriculum} initially introduced curriculum learning as training models over samples from easiest to hardest. While early work focused on manually designed curricula, later work emphasizes model-driven ones \citep{hacohen2019dynamic, varshney-mishra-and-chitta-baral-2022-model,fan2023irreducible, mindermann2022prioritized}. Curricula can encourage skills-based generalization \citep{huang2024layingfoundationfirstinvestigating}, or emphasize high quality data to improve downstream task performance \citep{blakeney2024does}. 
Online mixing methods can be also viewed as curriculum learning over data groups.

\paragraph{Data Selection.}
A common way to curate datasets besides mixing is to select data at the per-sample level~\citep{albalak2024survey}. Techniques here can be broadly classified as data filtering, data matching, and data condensation. In data filtering, low-quality samples are removed using simple heuristics, such as GitHub file lengths~\citep{together2023redpajama, touvron2023llamaopenefficientfoundation}, or via deduplication~\citep{abbas2023semdedup, tirumala2023d4, lee2022deduplicatingtrainingdatamakes}. In data matching, samples that are most similar to a reference dataset are selected. Similarity can be defined in terms of embeddings~\cite{xie2023dataselectionlanguagemodels}, gradients~\citep{xia2024less, engstrom2024dsdm}, or directly using machine learning models to score samples~\citep{brown2020language, grave-etal-2018-learning, moore-lewis-2010-intelligent}. Lastly, data condensation aims to identify a subset of samples that captures the full training dataset's properties. Selection mechanisms include using gradients, model predictions, and embedding distances~\citep{toneva2018an, paul2021deep, sorscher2023neuralscalinglawsbeating}.

\paragraph{Hyperparameter Optimization and Truncation Bias.}
Many data mixing methods utilize extra training runs to learn the static mixture proportions before the final training run. This allows us to view data mixing as a hyperparameter optimization problem in $\p$. 
\cite{ye2024datamixinglawsoptimizing} and \cite{liu2024regmixdatamixtureregression} mitigate the inefficiency of grid search in higher dimensions by combining it with data mixing laws to impose additional structure. However, both grid search and these offline methods can have poor performance when $\p$ is searched for or fitted on shorter runs, as in the restricted setting.
To understand these results, we note that many popular hyperparameter optimization methods carefully control truncation, and some runs are allowed to continue longer than others \citep{li2018hyperband,swersky2014freeze,domhan2015speeding}. Thus, generic hyperparameter optimization methods may also prove effective for tuning data mixes.

\section{Discussion}

We introduce the \mixingframework framework, which unifies existing data mixing methods by viewing them as solutions to a common optimization problem involving an implicit method-dependent mixing law.
Using this framework, we find that existing methods perform poorly on some datasets due to inaccurate mixing law parameters. This insight inspires \name, whose performance gains are rooted in its ability to estimate parameters $A^{t}$ of the linear dynamic mixing law throughout training.

\textbf{Limitations and Future Work } \name incurs extra inference cost via the repeated evaluations in \subroutine(Alg.~\ref{alg:learnparams}). This can be reduced by computing $L_{\text{val}}$ over a subset of $\Dval$, and by using each $A^t$ for longer (decreasing $T$). 
Another direction is understanding the role of data group partitions. For example, C4 is a subset of CommonCrawl, and it is unclear if disjoint groups could improve performance.

The \mixingframework framework itself is an invitation for future work. It shows that data mixing methods can be improved and analyzed by studying their assumptions on how models learn from data. 
By exposing such assumptions, \mixingframework identifies key axes for improvement (mixing law parameterization, parameter estimation, and how to solve for $\p$), which we hope will inspire new principled data mixing methods.

\subsection{Reproducibility Statement} 

See Appendix \ref{sec:app_proofs} for the full proofs on how to express Skill-it, DoReMi, and DoGE using the \mixingframework framework. See Appendix \ref{app:analysis} for details on how to reproduce our analyses of mixing law parametrization validity, $A^t$ parameter fit, and assessing whether greedy optimization is sufficient for data mixing. Finally, to reproduce the experimental results, please see Appendix~\ref{app:experiments}.

\paragraph{Code release.} Code for reproducing our results is available at \url{https://github.com/HazyResearch/aioli}.

\subsection{Ethics Statement}
Our work focuses on improving the efficiency and performance of language model training. While our research does not directly address ethical concerns, it can contribute to more responsible AI development by optimizing training, which can reduce computational costs and energy consumption. 
\section{Acknowledgments}

We thank Sabri Eyuboglu, Neel Guha, Ben Viggiano, Dan Biderman, Dan Fu, Michael Wornow, Jon Saad-Falcon, Alyssa Unell, Owen Dugan, Jerry Liu, and Gautam Machiraju  for their feedback. We thank Stanford NLP for providing compute and research support. This work was supported in part through the NYU IT High Performance Computing resources, services, and staff expertise. This research project has benefited from the Microsoft Accelerate Foundation Models Research (AFMR) grant program.

We gratefully acknowledge the support of NIH under No. U54EB020405 (Mobilize), NSF under Nos. CCF2247015 (Hardware-Aware), CCF1763315 (Beyond Sparsity), CCF1563078 (Volume to Velocity), 1937301 (RTML), and 1922658 (NRT-HDR: FUTURE); US DEVCOM ARL under Nos. W911NF-23-2-0184 (Long-context) and W911NF-21-2-0251 (Interactive Human-AI Teaming); ONR under Nos. N000142312633 (Deep Signal Processing); Stanford HAI under No. 247183; NXP, Xilinx, LETI-CEA, Intel, IBM, Microsoft, NEC, Toshiba, TSMC, ARM, Hitachi, BASF, Accenture, Ericsson, Qualcomm, Analog Devices, Google Cloud, Salesforce, Total, the HAI-GCP Cloud Credits for Research program,  the Stanford Data Science Initiative (SDSI), the Samsung Advanced Institute of Technology (under the project Next Generation Deep Learning: From Pattern Recognition to AI), the NSF Graduate Research Fellowship (MYH), and members of the Stanford DAWN project: Meta, Google, and VMWare. The U.S. Government is authorized to reproduce and distribute reprints for Governmental purposes notwithstanding any copyright notation thereon. Any opinions, findings, and conclusions or recommendations expressed in this material are those of the authors and do not necessarily reflect the views, policies, or endorsements, either expressed or implied, of NIH, ONR, or the U.S. Government.

\newpage

\bibliography{references}
\bibliographystyle{plainnat}

\newpage
\appendix

\section*{Appendix}\label{appendix:front_matter}

In Appendix~\ref{app:glossary}, we provide a glossary of notation used in the paper. In Appendix~\ref{app:framework_details}, we discuss how additional data mixing methods are related to the \mixingframework framework and provide proofs that existing methods can be expressed in our framework. In Appendix~\ref{app:analysis}, we provide additional results on our analysis of existing data mixing methods. In Appendix~\ref{app:experiments} we provide additional details for our results in Section~\ref{sec:exp}, and in Appendix~\ref{app:results} we provide additional results, including downstream evaluation and ablations.

\section{Notation}\label{app:glossary}

The glossary is given in Table~\ref{table:glossary} below.

\begin{table*}[hp!]
\centering
\small
\begin{tabular}{l l}
\toprule
Symbol & Used for \\
\midrule
$m$ & The number of data groups. Examples of data groups include a pre-training domain or an instruction-tuning task. \\
$D_{\text{train/val/test}}$ & Training, validation, and test datasets comprised of $m$ groups, where $D_{(\cdot)}^i$ is group $i$'s training/validation/test data. \\
$N$ & Total number of samples from $\Dtrain$ to train on. \\
$S$ & Number of steps to train for (i.e., $S = N \times \text{batch size}$). \\
$T$ & Number of rounds to divide training into, where each round is $\frac{S}{T}$ steps. \\
$\p$ & Mixture proportions are $\p = (p^1)$ for $T = 1$ (static) and $\p = (p^1, \dots, p^T)$ for $T>1$ (dynamic), \\
& where $p^t = [p_1^t, \dots, p_m^t] \in \triangle^m$ is a probability distribution.  \\
$f$ & A language model (can be either pre-trained or initialized from scratch). \\
$f(\p, t)$ & The model $f$ at the beginning of round $t$ after being trained on $p^1, \dots, p^{t-1}$ so far. \\
$L_{\text{train/val/test}}(f)$ & 
$L_{\text{train}}(f) = (\Ltrain{1}(f), \dots, \Ltrain{m}(f))$ is the vector of $f$'s training losses over each data group; \\
& similarly defined for validation and test losses. \\
$L_{(\cdot)}^t(\p)$ & Shorthand for $L_{(\cdot)}(f(\p, t))$. When dealing with static mixtures, we also use $L_{(\cdot)}(\p)$. \\
$A^t$ & Parameter matrix $A^t \in \R^{m \times m}$ used in mixing laws~\eqref{eq:mixinglaw}, capturing cross-group interactions. \\
& See Table~\ref{tab:framework} for instantiations. \\
$b^t, c^t$ & Group-specific parameters $b^t, c^t \in \R^m$ used in mixing laws~\ref{eq:mixinglaw}. Note that the value of $c^t$ does not impact the \\
& \mixingframework framework, and neither does $b^t$ when all $b_i^t$ are equal. \\
$\sigma$ & Either $\sigma: \R \rightarrow \R = \text{Id}$ or $\exp$. \\
$Z^t$ & Used for normalization in proportion update rule.  \\
$\eta$ & Step size $\eta > 0$ used in proportion update rule. \\ 
$\mathcal{P}$ & The set of mixture proportions that comprises a training sweep. \\
$A^{t \star}$ & Approximately optimal $A^t$ for the linear dynamic mixing law, obtained by fitting \\
& $L{\text{val}}^{t+1}(\p) = L{\text{val}}^t(\p) - A^{t \star} \p$ over training sweeps. \\
$\tilde{A}^{t}$ & Method-specific $\tilde{A}^t = b^t A^t$, where $A^t$ is obtained directly from the method and \\
& $b^t \in \R$ is learned from training sweeps. \\
$\text{sim}(\tilde{A}^t, A^{t \star})$ & Similarity between method-specific and optimal $A^t$, defined as an average of cosine similarity and \\
& Spearman rank correlation over $A^t$'s normalized column sums. \\
$\varepsilon$ & one-hot smoothing factor used to define $p^{t, i} = (1-\varepsilon) \mathbf{1}_i + \varepsilon \text{Unif}(m)$, smoothed one-hot distributions \\
& we use to learn $A^t$ in \name. \\
$\delta$ & The fraction per round dedicated to learning $A^t$ in \name.  \\
$k$ & Number of sweeps per group to average $A^t$ estimates over in \name. \\
$\p^{\text{init}}$ & Initial mixture $\p^{\text{init}} \in \triangle^m$ that \name can dynamically adjust.  \\
$S_{\text{init}}$ & Number of steps to train according to $\p^{\text{init}}$. \\
\toprule
\end{tabular}
\caption{
	Glossary of variables and symbols used in this paper.
}
\label{table:glossary}
\end{table*}

\section{\mixingframework framework details}\label{app:framework_details}

\subsection{Additional existing methods}\label{sec:app_existing}

We comment on two other popular data mixing methods, Online Data Mixing (ODM)~\citep{albalak2023efficientonlinedatamixing} and RegMix~\citep{liu2024regmixdatamixtureregression}. 

In ODM~\citep{albalak2023efficientonlinedatamixing}, data mixing is framed as a multi-armed bandit problem, where each arm is a data group that a batch is trained on, and the reward function is defined in terms of the training loss of each group. ODM uses the EXP3 algorithm to explore training on different data groups. $p^t$, which is used to determine which group the entire training batch is comprised of, is updated according to $p_j^{t+1} = (1 - m\varepsilon_t) \frac{\exp(\varepsilon_{t-1} R_j^t)}{\sum_{i = 1}^m \exp(\varepsilon_{t-1} R_i^t)} + \varepsilon_t$. $\varepsilon_t$ is an exploration rate, and the reward function is $R_j^t = \alpha R_j^{t-1} + (1 - \alpha) \frac{\Ltrain{j}^t(\p)}{p_j^t}$ if the $j$th group is selected at time $t$; otherwise, $R_j^t = R_j^{t-1}$. While the exploration and the smoothing of $p^t$ and $R^t$ make this method not directly expressible in our framework, we note that the update rule can be loosely interpreted as allocating larger proportions to groups that have high loss. This update rule does not consider cross-group interactions and is thus similar to DoReMi's update rule, which utilizes a diagonal $A^t$ defined in terms of current loss.

RegMix~\citep{liu2024regmixdatamixtureregression} conducts many training runs on smaller models at shorter scales. Similar to DML~\citep{ye2024datamixinglawsoptimizing}, a regression model is fit to these runs and used to predict mixture proportions for a longer run on a larger model. They consider using a linear regression model, i.e., the mixing law $\Lval{i}(\p) = c_i - \sum_{j = 1}^m A_{ij} p_j^t$, but find that the $R^2$ is relatively low (0.87). Instead, their main approach uses LightGBM, a tree-based gradient boosting approach, i.e., using an ensemble of non-linear decision trees as a mixing law. We note that \name could be used in conjunction with RegMix in their settings, an exciting direction for future work.

\subsection{Proofs for section~\ref{sec:existing}}\label{sec:app_proofs}

\subsubsection{Background on Exponentiated Gradient Descent}

We provide background on exponentiated gradient descent (EGD) taken from~\citet{kakade_lecture22}.
In EGD, we have a sequence of decisions $w^1, \dots, w^T$, where $w^t = [w_1^t, \dots, w_m^t] \in \triangle^m$. We also have a sequence of cost functions $c^1, \dots, c^T: \triangle^m \rightarrow \R$. To minimize the total cost $\sum_{t = 1}^T c^t(w^t)$, the EGD update rule sets $w^0 = \text{Unif}(m)$, and updates according to $w_j^{t+1} = \frac{w_j^t \exp(-\eta \triangledown_j c^t(w^t))}{Z_t}$. $Z_t$ ensures that $w^{t+1} \in \triangle^m$, $\eta$ is a step size, and $\triangledown_j c^t(w^t)$ denotes $\frac{\partial c^t(w^t)}{\partial w_j^t}$. EGD is known to have certain regret guarantees on the value of costs incurred by playing $w^1, \dots, w^T$ versus always playing the best fixed point in hindsight: $\sum_{t = 1}^T c^t(w^t) - \inf_{w \in \triangle^m} \sum_{t=1}^T c^t(w)$.

We now are ready to prove Lemma~\ref{lemma:egd}.

\egd*

\begin{proof}
    The cost function at each timestep in our setting is $\sum_{i = 1}^m \Lval{i}^{t+1}(\p)$, and the decision we make is $p^t$. The mixing law constraint in~\eqref{eq:mixinglaw} with $\sigma = \text{Id}$ is $\Lval{i}^{t+1}(\p) = c_i^t - b_i^t \sum_{j = 1}^m A_{ij}^t p_j^t$ for all $i \in [m]$, so our objective~\eqref{eq:framework} can be written as  
    \begin{align}
        \sum_{i = 1}^m \bigg( c_i^t - b_i^t \sum_{j = 1}^m A_{ij}^t p_j^t \bigg).
    \end{align}

    The gradient of this expression with respect to $p_j^t$ for $j \in [m]$ is  $-\sum_{i = 1}^m b_i^t A_{ij}^t$. Plugging this into the EGD update rule, we obtain the update $p_j^{t+1} = \frac{1}{Z_t} p_j^t \exp(\eta \sum_{i = 1}^m b_i^t A_{ij}^t)$.
\end{proof}

\subsubsection{Proof of Theorem~\ref{thm:unify}}

To prove Theorem~\ref{thm:unify}, we write out individual propositions~\ref{prop:skill},~\ref{prop:doremi},~\ref{prop:doge} for expressing each online method in the \mixingframework framework.

By our definition of what it means to express a method in \mixingframework, we must consider how each method 1) trains $f$ and 2) sets $p^t$. We must see if this procedure can be replicated by solving some specification of the \mixingframework optimization problem in our data mixing setup.

Critically, note that this definition of ``expression'' does not claim that the optimization problems proposed in existing methods are exactly the same as the \mixingframework optimization problem. Instead, we are stating that the training procedures used in their methods can be equivalently viewed as a way of solving the \mixingframework optimization problem subject to certain assumptions on the loss-proportion relationship.

\begin{restatable}[Skill-It Derivation]{proposition}{skillit}\label{prop:skill}
    Using \textnormal{a)} a linear dynamic parameterization $\Lval{i}^{t+1}(\p) = \Lval{i}^t(\p) - b^t \sum_{j=1}^m A_{ij}^{t} p_j^t$, \textnormal{b)} parameters $A_{ij}^{t} = \Lval{i}^t(\p) \cdot (\Lval{i}^{T+1}(\mathbf{1}_j) - \Lval{i}^1(\mathbf{1}_j))/\Lval{i}^1(\mathbf{1}_j)$, and \textnormal{c)} exponentiated gradient descent (EGD) to solve for $\p$, the \mixingframework framework~\eqref{eq:framework} can express Skill-It.
\end{restatable}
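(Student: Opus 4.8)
The plan is to show that Skill-It's two defining operations---how it updates the proportions $p^t$ and how it trains $f$---are reproduced exactly by instantiating the \mixingframework optimization problem with the stated parameterization, parameters, and solver. Since the definition of ``expressing'' a method only requires matching its training procedure and its rule for setting $\p$, and Skill-It trains $f$ by sampling $\Dtrain$ according to the current $p^t$ (which is precisely the training assumption baked into \mixingframework), the entire substance of the argument reduces to matching the \emph{proportion update rule}.

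First I would recall Skill-It's concrete update from Section~\ref{sec:existing}, namely $p_j^{t+1} \propto p_j^t \exp\big(\eta \sum_{i=1}^m A^{\text{SG}}_{ij} \Lval{i}^t(\p)\big)$ with $A^{\text{SG}}_{ij} = (\Lval{i}^{T+1}(\mathbf{1}_j) - \Lval{i}^1(\mathbf{1}_j))/\Lval{i}^1(\mathbf{1}_j)$, and note that this is the target I must recover. Next I would invoke Lemma~\ref{lemma:egd} with the linear dynamic mixing law from part (a). The key move is to identify the constant term of that law with the current loss, i.e.\ take $c_i^t = \Lval{i}^t(\p)$ and take the coefficient $b_i^t = b^t$ to be constant across $i$. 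Because $\Lval{i}^t(\p)$ depends only on the history $p^1, \dots, p^{t-1}$ and not on the decision variable $p^t$, it is a constant under the gradient, so it drops out of the EGD derivation and Lemma~\ref{lemma:egd} yields $p_j^{t+1} \propto p_j^t \exp\big(\eta b^t \sum_{i=1}^m A_{ij}^t\big)$.

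Finally I would substitute the Skill-It parameters from part (b), $A_{ij}^t = \Lval{i}^t(\p)\, A^{\text{SG}}_{ij}$, into this expression. Since $A_{ij}^t$ is itself a fixed number at round $t$ (again because the current loss is constant with respect to $p^t$), this gives $p_j^{t+1} \propto p_j^t \exp\big(\eta b^t \sum_{i=1}^m \Lval{i}^t(\p)\, A^{\text{SG}}_{ij}\big)$, which coincides with Skill-It's rule once the scalar $b^t$ is folded into the effective learning rate $\eta$. I would close by observing that this matched update rule, together with the shared training-by-sampling step, establishes that the two procedures are equivalent constructions.

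The main obstacle, though a mild one, is the bookkeeping around the two roles of the coefficients: reconciling the scalar $b^t$ of the proposition's mixing law with the vector $b_i^t$ appearing in the general statement of Lemma~\ref{lemma:egd}, and carefully justifying that the current-loss factor $\Lval{i}^t(\p)$---both the one absorbed into $c_i^t$ and the one multiplying $A^{\text{SG}}_{ij}$---is genuinely constant in $p^t$, so that it passes through the EGD gradient as a coefficient rather than contributing a spurious term. Once that constancy is cleanly asserted, the algebra collapses immediately onto the loss-weighted column sum $\sum_i \Lval{i}^t(\p) A^{\text{SG}}_{ij}$ that characterizes Skill-It, and the absorption of $b^t$ into $\eta$ accounts for the only remaining discrepancy.
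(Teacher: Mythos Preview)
Your proposal is correct and follows essentially the same approach as the paper: both arguments note that Skill-It's training-by-sampling matches the \mixingframework setup, then match the EGD update rule from Lemma~\ref{lemma:egd} against Skill-It's rule $p_j^{t+1} \propto p_j^t \exp(\eta \sum_i A^{\text{SG}}_{ij} \Lval{i}^t(\p))$ by placing the loss factor $\Lval{i}^t(\p)$ into the mixing-law coefficients. The only presentational difference is that the paper first matches $b_i^t \leftrightarrow \Lval{i}^t(\p)$ and $A_{ij}^t \leftrightarrow A^{\text{SG}}_{ij}$ and then absorbs the loss into $A$ (invoking Lemma~\ref{lemma:c_to_l} to justify setting $c_i^t = \Lval{i}^t(\p)$), whereas you work directly with the proposition's stated form where the absorption has already been done; both routes collapse to the same identity.
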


\begin{proof}

The Skill-It algorithm sets $p^t$ in each round and then samples from $\Dtrain$ according to $p^t$ to train $f$ for a round. This training procedure is directly specified in our data mixing problem setup (Section~\ref{sec:problem_setup}). Therefore, we simply need to show that the Skill-It update rule can be converted into a linear dynamic mixing law.
By comparing Lemma~\ref{lemma:egd} and the Skill-It update rule $p_j^{t+1} = \frac{1}{Z_t} \cdot p_j^t \exp\big(\eta \sum_{i = 1}^m A_{ij}^{\text{SG}} \Lval{i}^t(\p) \big)$, we can match $A_{ij}^t$ in the lemma with $A_{ij}^{\text{SG}}$ in Skill-It, and we can match $b_i^t$ in the lemma with $\Lval{i}^t(\p)$. Therefore, Lemma~\ref{lemma:egd} tells us that using $\Lval{i}^{t+1}(\p) = c_i^t -  b^t \sum_{j = 1}^m \Lval{i}^t(\p) A_{ij}^{\text{SG}} p_j^t$ in the \mixingframework framework with exponentiated gradient descent recovers Skill-It (since the $b^t$ and $c_i^t$ can be dropped and are only used for scaling $A^t$). 

Using the definition of $A_{ij}^{\text{SG}}$, we can rewrite the mixing law as $\Lval{i}^{t+1}(\p) = c_i^t -  b^t\sum_{j = 1}^m A_{ij}^{t, \text{Skill-It}} p_j^t$ where $A_{ij}^{t, \text{Skill-It}} = \Lval{i}^t(\p) (\Lval{i}^{T+1}(\mathbf{1}_j) - \Lval{i}^1(\mathbf{1}_j))/\Lval{i}^1(\mathbf{1}_j)$. Lastly, note that we can replace $c_i^t$ with any other value, including $\Lval{i}^t(\p)$, due to the fact that $p^t$ has $m-1$ degrees of freedom (see Lemma~\ref{lemma:c_to_l}).

We note that \cite{chen2023skillit} explicitly specify their mixing law in equation 2 of their paper, along with the same objective function as ours in the \mixingframework framework.
\end{proof}

\begin{restatable}[DoReMi Derivation]{proposition}{doremi}\label{prop:doremi}
Using \textnormal{a)} a linear dynamic parameterization $\Lval{i}^{t+1}(\p) = \Lval{i}^t(\p) - b^t \sum_{j = 1}^m A_{ij}^{t} p_j^t$, \textnormal{b)} parameters $A_{ij}^{t} = \min \{\Ltrain{i}^t(\p) - \Ltrain{i}(\fref) , 0\}$ for $i = j$ and $A_{ij} = 0$ otherwise, and \textnormal{c)} EGD to solve for $\p$, the \mixingframework framework~\eqref{eq:framework} can express DoReMi's proxy model. 
\end{restatable}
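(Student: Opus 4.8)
The plan is to follow the same template as the Skill-It derivation (Proposition~\ref{prop:skill}), since by our notion of ``expression'' we only need to show two things: that DoReMi's way of training the proxy model $f$ is already captured by the \mixingframework data-mixing setup, and that DoReMi's update rule for $p^t$ coincides with the EGD update rule of Lemma~\ref{lemma:egd} under the claimed choice of parameters. The first point requires a small remark rather than real work: DoReMi weights its training objective by $p^{t+1}$ (training the proxy to minimize $\sum_i p_i^{t+1}\Ltrain{i}(f)$), which differs superficially from sampling $\Dtrain$ according to $p^{t+1}$, but in expectation minimizing a $p$-weighted sum of per-group losses is the same as minimizing the loss under the mixture that draws group $i$ with probability $p_i$. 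So training the proxy model is an instance of the round-based training in Section~\ref{sec:problem_setup}, and the substantive content is matching the two update rules.

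For the update-rule matching, I would write down the EGD rule from Lemma~\ref{lemma:egd}, namely
\begin{align}
p_j^{t+1} = \frac{1}{Z^t}\, p_j^t \exp\Big(\eta \sum_{i=1}^m b_i^t A_{ij}^t\Big),
\end{align}
and place it beside the DoReMi update rule $p_j^{t+1} \propto p_j^t \exp\big(\eta \max\{\Ltrain{j}^t(\p) - \Ltrain{j}(\fref),0\}\big)$. The goal is to choose $b^t$ and a matrix $A^t$ so that $\sum_{i=1}^m b_i^t A_{ij}^t$ equals (up to an overall positive scale absorbed into $\eta$) the DoReMi exponent for each $j$. Taking $A^t$ diagonal collapses the inner sum to a single term $b_j^t A_{jj}^t$, so the whole cross-group structure disappears and only the $j$th diagonal entry controls $p_j$'s update. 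Setting $A_{jj}^t = \min\{\Ltrain{j}^t(\p) - \Ltrain{j}(\fref),0\}$ and $A_{ij}^t = 0$ for $i\neq j$, and taking $b^t$ constant, the exponent becomes a positive multiple of $\min\{\Ltrain{j}^t(\p)-\Ltrain{j}(\fref),0\}$. The one genuine wrinkle is the sign mismatch: DoReMi uses $\max\{\cdot,0\}$ of the excess loss $\Ltrain{j}-\Ltrain{j}(\fref)$, whereas our definition uses $\min\{\cdot,0\}$. I would resolve this by noting that in the mixing law~\eqref{eq:mixinglaw} the term enters as $-b_i^t A_{ij}^t p_j^t$ (a minus sign), so after the gradient the sign flips and $\min\{\cdot,0\}$ becomes $\max\{-\,\cdot,0\}=\max\{\Ltrain{j}(\fref)-\Ltrain{j}^t,0\}$; I need to verify carefully that this reproduces DoReMi's intended reweighting (up-weighting groups whose loss exceeds the reference) rather than its negation, possibly up to the convention of which loss is subtracted from which.

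The step I expect to be the main obstacle is precisely this sign/orientation bookkeeping together with the absorption of $b^t$ into $A^t$ and $\eta$ — making sure that after passing through the minus sign in the mixing law, the gradient in Lemma~\ref{lemma:egd}, and the step-size rescaling, the resulting exponent matches DoReMi's $\max\{\Ltrain{j}^t-\Ltrain{j}(\fref),0\}$ up to a positive constant. Once that alignment is confirmed, I would close by invoking the same freedom used in the Skill-It proof: $c_i^t$ is irrelevant because $p^t$ has only $m-1$ degrees of freedom (Lemma~\ref{lemma:c_to_l}), and an equal $b^t$ across groups only rescales the step size. This establishes that instantiating \mixingframework with the diagonal parameters, the linear dynamic mixing law, and EGD reproduces DoReMi's proxy model exactly, completing the proposition.
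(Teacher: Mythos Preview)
Your overall approach matches the paper's proof exactly: reduce DoReMi's proxy training to the sampling-based setup via the expected-loss equivalence (the paper formalizes this as Lemma~\ref{lemma:weighted_obj}), then identify parameters so that the EGD rule of Lemma~\ref{lemma:egd} reproduces DoReMi's multiplicative update, and finally invoke Lemma~\ref{lemma:c_to_l} to replace $c_i^t$ by $\Lval{i}^t(\p)$. So the template is correct.

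The one place where your argument goes wrong is the sign resolution. Lemma~\ref{lemma:egd} already absorbs the minus sign from the mixing law: its proof takes the gradient of $c_i^t - b_i^t\sum_j A_{ij}^t p_j^t$ and plugs it into the EGD update, yielding $p_j^{t+1}\propto p_j^t\exp(\eta\sum_i b_i^t A_{ij}^t)$ with a \emph{positive} exponent in $A^t$. There is no further flip to invoke. With the diagonal choice and a constant positive $b^t$, the exponent is literally $\eta b^t A_{jj}^t$, so to recover DoReMi's $\exp(\eta\max\{\Ltrain{j}^t-\Ltrain{j}(\fref),0\})$ you need $A_{jj}^t=\max\{\Ltrain{j}^t-\Ltrain{j}(\fref),0\}$, not $\min$. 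Your proposed fix --- pulling another minus out of the mixing law --- would give $\max\{\Ltrain{j}(\fref)-\Ltrain{j}^t,0\}$, which reweights in the wrong direction.

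This is not a flaw in your reasoning so much as a typo in the proposition statement (and in Table~\ref{tab:framework}): the paper's own proof silently writes $A_{ii}^t=\max\{\Ltrain{i}^t(\p)-\Ltrain{i}(\fref),0\}$ when matching to Lemma~\ref{lemma:egd}, contradicting the $\min$ in the statement. Once you use $\max$, the matching is immediate and no sign gymnastics are needed.
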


\begin{proof}
When training the proxy model for DoReMi, $p^t$ is set in each round, and then $f$ is updated to minimize $\sum_{i = 1}^m p_i^t \Ltrain{i}(f)$.
Using Lemma~\ref{lemma:weighted_obj}, we establish that DoReMi's weighted training objective at each timestep is equal in expectation to the objective of training on data sampled from $p^t$, which is what our problem setup focuses on. Having established that the training procedure is the same in expectation, we now need to show that the DoReMi $p^t$ update rule can be converted into a linear dynamic mixing law. 
By comparing Lemma~\ref{lemma:egd} and the DoReMi update rule $p_j^{t+1} \propto p_j^t \exp(\eta \max\{\Ltrain{j}^t(\p) - \Ltrain{j}(\fref), 0\})$, we can match $A_{ij}^t$ in the lemma with $0$ for $i \neq j$, and $A_{ii}^t$ with $\max\{\Ltrain{j}^t(\p) - \Ltrain{j}(\fref), 0\}$. Therefore, Lemma~\ref{lemma:egd} tells us that using $\Lval{i}^{t+1} = c_i^t - b^t \sum_{j = 1}^m A_{ij}^t p_j^t$ with $A_{ii}^t = \max\{\Ltrain{j}^t(\p) - \Ltrain{j}(\fref), 0\}$ can express the DoReMi proxy model training. We include $b^t$ to allow for scaling $A^t$, but since this does not impact the optimal $\p$, it is not in the update rule. Lastly, applying Lemma~\ref{lemma:c_to_l} lets us write the mixing law as $\Lval{i}^{t+1} = \Lval{i}^t(\p) - b^t \sum_{j = 1}^m A_{ij}^t p_j^t$.

We comment on the fact that DoReMi's proxy model is trained with a DRO (distributionally robust optimization) min-max objective, namely, $\text{minimize}_f \;\text{maximize}_{p} \sum_{i = 1}^m p_i \Ltrain{i}^{T+1}(f)$. This objective, which differs from our data mixing objective, yields the $p^t$ gradient ascent and $f^t$ gradient descent updates. However, we are still able to express this training procedure in the \mixingframework framework, since our claim is: if we assume that the $\Lval{i}^{t+1} = \Lval{i}^t(\p) - b^t \sum_{j = 1}^m A_{ij}^{t, \text{DRM}} p_j^t$ mixing law captures the relationship between $L_{\text{val}}^t$ and $p^t$, then training according to the DoReMi proxy run should not only guide $f$ and $\p$ to optimize the DRO objective, but also to optimize the average validation loss per group.

\end{proof}

\begin{restatable}[DoGE Derivation]{proposition}{doge}\label{prop:doge}
Using \textnormal{a)} a linear dynamic parameterization $\Lval{i}^{t+1}(\p) = \Lval{i}^t(\p) - b^t \sum_{j = 1}^m A_{ij}^{t} p_j^t$, \textnormal{b)} parameters $A_{ij}^{t} = \langle \triangledown \Lval{i}^t(\p), \triangledown \Ltrain{j}^t(\p)\rangle$ for all $i, j \in [m]$, and \textnormal{c)} EGD to solve for $\p$, the \mixingframework framework~\eqref{eq:framework} can express DoGE's proxy model.
\end{restatable}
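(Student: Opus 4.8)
The plan is to mirror the structure of the DoReMi derivation (Proposition~\ref{prop:doremi}), since DoGE's proxy model is trained with the same kind of $p^t$-weighted training objective. I would first record the two ingredients that specify the DoGE algorithm: in each round it sets $p^t$ via the update rule $p_j^{t+1} \propto p_j^t \exp\big(\eta \langle \triangledown \Ltrain{j}(f^t), \sum_{i=1}^m \triangledown \Lval{i}(f^t)\rangle\big)$, and it updates $f$ to minimize the weighted training loss $\sum_{i=1}^m p_i^t \Ltrain{i}(f)$. The first task is to argue that this weighted-objective training is equivalent, in expectation, to sampling $\Dtrain$ according to $p^t$ and training on the resulting batch, which is exactly the training procedure assumed in the \mixingframework setup (Section~\ref{sec:problem_setup}). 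This is precisely what Lemma~\ref{lemma:weighted_obj} provides, so I would invoke it just as the DoReMi proof does.

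Having matched the training procedure, the remaining task is to convert the DoGE $p^t$ update into a linear dynamic mixing law by comparison with the EGD update rule of Lemma~\ref{lemma:egd}, $p_j^{t+1} = \frac{1}{Z^t} p_j^t \exp(\eta \sum_{i=1}^m b_i^t A_{ij}^t)$. The key algebraic step is to expand the DoGE exponent using bilinearity and symmetry of the inner product: $\langle \triangledown \Ltrain{j}(f^t), \sum_{i=1}^m \triangledown \Lval{i}(f^t)\rangle = \sum_{i=1}^m \langle \triangledown \Lval{i}^t(\p), \triangledown \Ltrain{j}^t(\p)\rangle$. Reading this off against the lemma, I would set $b_i^t = 1$ for all $i$ and $A_{ij}^t = \langle \triangledown \Lval{i}^t(\p), \triangledown \Ltrain{j}^t(\p)\rangle = A_{ij}^{t,\text{DoGE}}$, so that $\sum_{i=1}^m b_i^t A_{ij}^t$ is exactly DoGE's exponent. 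This shows that the \mixingframework optimization problem with the mixing law $\Lval{i}^{t+1}(\p) = c_i^t - b^t \sum_{j=1}^m A_{ij}^{t,\text{DoGE}} p_j^t$, solved by EGD, reproduces DoGE's proportions.

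Finally, I would close the gap between the generic constant $c_i^t$ appearing in the mixing law and the desired form $\Lval{i}^t(\p)$: as in the Skill-It and DoReMi proofs, Lemma~\ref{lemma:c_to_l} lets me replace $c_i^t$ by $\Lval{i}^t(\p)$ without changing the optimizer, since $p^t$ has only $m-1$ degrees of freedom and an additive constant does not affect the EGD direction. This yields the stated mixing law $\Lval{i}^{t+1}(\p) = \Lval{i}^t(\p) - b^t \sum_{j=1}^m A_{ij}^{t,\text{DoGE}} p_j^t$.

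I expect the one genuine subtlety — not a computational obstacle — to be the same as in the DoReMi case: DoGE is originally derived as a first-order Taylor approximation to a bi-level optimization problem, not as a minimizer of average validation loss. So the substantive content is not the algebra but the interpretive claim that the DoGE training trajectory can be equivalently realized by solving the \mixingframework problem under the assumed gradient-alignment mixing law; the matching to Lemma~\ref{lemma:egd} is what makes this precise, and the inner-product expansion is the only nontrivial manipulation.
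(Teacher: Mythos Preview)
Your proposal is correct and follows essentially the same approach as the paper's proof: invoke Lemma~\ref{lemma:weighted_obj} to match DoGE's weighted training objective to the \mixingframework training setup, expand the DoGE exponent via bilinearity of the inner product to read off $A_{ij}^t = \langle \triangledown \Lval{i}^t(\p), \triangledown \Ltrain{j}^t(\p)\rangle$ against Lemma~\ref{lemma:egd}, and then apply Lemma~\ref{lemma:c_to_l} to replace $c_i^t$ with $\Lval{i}^t(\p)$. The paper's proof is slightly terser (it does not explicitly write out the bilinearity expansion or the bi-level optimization caveat you raise), but the logical content is identical.
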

\begin{proof}
    When training the proxy model for DoGE, $p^t$ is set in each round, and then $f$ is updated to minimize $\sum_{i = 1}^m p_i^t \Ltrain{i}(f)$. Using Lemma~\ref{lemma:weighted_obj}, we establish that DoGE's weighted training objective at each timestep is equal in expectation to the objective of training on data sampled from $p^t$. Next, we show that the DoGE update rule can be converted into a linear dynamic mixing law. By comparing Lemma~\ref{lemma:egd} and the DoGE update rule $p_j^{t+1} \propto p_j^t \exp(\eta \langle \triangledown \Ltrain{j}(f^t), \sum_{i = 1}^m \triangledown \Lval{i}(f^t) \rangle)$, we can see that $A_{ij}^t$ in the Lemma can be matched with $\langle \triangledown \Ltrain{j}(f^t), \triangledown \Lval{i}(f^t) \rangle$.  Therefore, using the mixing law $\Lval{i}^{t+1} = c_i^t - b^t \sum_{j = 1}^m A_{ij}^t p_j^t$ with $A_{ij}^t = \langle \triangledown \Ltrain{j}(f^t), \triangledown \Lval{i}(f^t) \rangle$ allows \mixingframework to express DoGE proxy model training. Again, $b^t$ is included for scaling but does not impact optimization, and by applying Lemma~\ref{lemma:c_to_l}, we can replace $c_i^t$ with $\Lval{i}^t(\p)$.
\end{proof}

\begin{lemma}\label{lemma:c_to_l}
Let $L_i^{t+1}(\p) = c_i^t - \sum_{j = 1}^m A_{ij}^t p_j^t$ for some $c^t$ and $A^t$. Then, there exists an $B_{ij}^t$ such that $L_i^{t+1}(\p) = L_i^t(\p) - \sum_{j = 1}^m B_{ij}^t p_j^t$.
\end{lemma}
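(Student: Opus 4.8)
The plan is to exploit the single structural fact that makes the constant term $c_i^t$ essentially free to choose: the current proportions lie on the simplex, so $\sum_{j=1}^m p_j^t = 1$. The other ingredient is that $L_i^t(\p)$, being the loss at the \emph{start} of round $t$, depends only on $p^1, \dots, p^{t-1}$ and not on the variable $p^t$ of the mixing law; hence, when we view both sides as functions of $p^t \in \triangle^m$, the quantity $L_i^t(\p)$ is a constant. The task is therefore to match two affine functions of $p^t$ that have different constant terms, which the simplex constraint lets us do by folding the constant mismatch uniformly into the linear coefficients.

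Concretely, I would set $\gamma_i := L_i^t(\p) - c_i^t$ and define $B_{ij}^t := A_{ij}^t + \gamma_i$ for every $j \in [m]$ (i.e.\ shift each row of $A^t$ by the row-dependent constant $\gamma_i$). The verification is a one-line computation: substituting this choice gives $L_i^t(\p) - \sum_{j=1}^m B_{ij}^t p_j^t = L_i^t(\p) - \sum_{j=1}^m A_{ij}^t p_j^t - \gamma_i \sum_{j=1}^m p_j^t$, and since $\sum_{j=1}^m p_j^t = 1$ this collapses to $L_i^t(\p) - \sum_{j=1}^m A_{ij}^t p_j^t - \gamma_i = c_i^t - \sum_{j=1}^m A_{ij}^t p_j^t = L_i^{t+1}(\p)$, as desired.

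There is no real obstacle here; the content is entirely the observation that $p^t$ has only $m-1$ degrees of freedom on the simplex, which is exactly the remark invoked in Propositions~\ref{prop:skill},~\ref{prop:doremi}, and~\ref{prop:doge} to replace $c_i^t$ by $\Lval{i}^t(\p)$. If I wanted to be maximally careful I would note that the identity is required to hold for all $p^t \in \triangle^m$, so it is the constraint $\sum_j p_j^t = 1$ (and nothing about the specific interior structure of the simplex) that is being used; one could equally phrase the argument as adding a multiple of the all-ones functional $p^t \mapsto \mathbf{1}^\top p^t \equiv 1$ to absorb the constant. Either framing yields the explicit $B^t$ above and closes the lemma immediately.
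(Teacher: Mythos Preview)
Your proof is correct and essentially the same as the paper's: both exploit the simplex constraint $\sum_j p_j^t = 1$ to absorb the mismatched constant into the linear coefficients, and both arrive at exactly the same choice $B_{ij}^t = A_{ij}^t - c_i^t + L_i^t(\p)$. The only difference is cosmetic: the paper first eliminates $p_m^t = 1 - \sum_{j<m} p_j^t$ and then re-expands, whereas you fold in the all-ones functional directly, which is a bit cleaner.
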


\begin{proof}
    Since $p^t \in \triangle^m$, we can write the probability $p_m^t$ as $1 - \sum_{j = 1}^{m-1} p_j^t$. Then, the first equation can be written as
    \begin{align}
        L_i^{t+1}(\p) &= c_i^t - \sum_{j = 1}^{m-1} A_{ij}^t p_j^t - A_{im}^t \bigg(1 - \sum_{j = 1}^{m-1} p_j^t \bigg) \\
        &= c_i^t - \sum_{j = 1}^{m-1} (A_{ij}^t - A_{im}^t) p_j^t - A_{im}^t \nonumber \\
        &= L_i^t(\p) - \sum_{j = 1}^{m-1} (A_{ij}^t - A_{im}^t) p_j^t - (A_{im}^t - c_i^t + L_i^t(\p)) \nonumber \\
        &= L_i^t(\p) - \sum_{j = 1}^{m-1} (A_{ij}^t - A_{im}^t + A_{im}^t - c_i^t + L_i^t(\p)) p_j^t - (A_{im}^t - c_i^t + L_i^t(\p))(1 - \sum_{j = 1}^{m-1} p_j^t) \nonumber \\
        &= L_i^t(\p) - \sum_{j = 1}^{m-1} (A_{ij}^t - c_i^t + L_i^t(\p)) p_j^t - (A_{im}^t - c_i^t + L_i^t(\p))(1 - \sum_{j = 1}^{m-1} p_j^t). \nonumber
    \end{align}

    Let $B_{ij}^t = A_{ij}^t - c_i^t + L_i^t(\p)$ for all $j \in [m]$. Then, this equation becomes
    \begin{align}
        L_i^{t+1}(\p) &= L_i^t(\p) - \sum_{j=1}^{m-1} B_{ij}^t p_j^t - B_{im}^t (1 - \sum_{j = 1}^{m-1} p_j^t) \\
        &= L_i^t(\p) - \sum_{j = 1}^m B_{ij}^t p_j^t. \nonumber
    \end{align}
\end{proof}

\begin{lemma}\label{lemma:weighted_obj}
    Let $L_{B}^t(f, p)$ be the total training loss of $f$ on a batch of size $B$ sampled from $\Dtrain$ according to $p \in \triangle^m$, and let $L_{B, i}^t(f, p)$ be the total training loss on samples from group $i$ in that batch. Then, the average loss over a uniformly sampled batch weighted by $p^t$ is equal in expectation to the average loss per group over a batch sampled according to $p^t$:
    \begin{align}
        \E{}{\sum_{i = 1}^m p_i^t L_{B, i}^t(f, \text{Unif}(m)) }= \E{}{ \frac{L_B^t(f, p^t)}{m}}
    \end{align}
\end{lemma}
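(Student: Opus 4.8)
The plan is to reduce both sides of the identity to the common expression $\frac{B}{m}\sum_{i=1}^m p_i^t \bar{\ell}_i(f)$, where $\bar{\ell}_i(f) := \E{x \sim \Dtrain^i}{\ell(f, x)}$ denotes the expected per-example loss of the fixed model $f$ on group $i$. The entire argument rests on linearity of expectation, together with the observation that conditional on a sampled example belonging to group $i$, its loss distribution depends only on $\Dtrain^i$ and not on the mixture proportions used to draw the batch. Throughout, $f$ is held fixed and the expectation $\E{}{\cdot}$ is taken over the randomness of the batch (both the group labels and the within-group sample draws).

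First, I would fix the sampling model: a batch of size $B$ drawn according to $q \in \triangle^m$ consists of $B$ independent examples, where each example selects a group $i$ with probability $q_i$ and then draws a sample from $\Dtrain^i$. Writing $Z_1, \dots, Z_B$ for the independent group labels of the $B$ examples, the total loss attributed to group $i$ is precisely the sum of the per-example losses over those examples whose label equals $i$, so $L_{B,i}^t(f, q)$ decomposes as a sum of i.i.d.-in-expectation contributions.

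Second, by linearity of expectation over the $B$ independent draws I would compute $\E{}{L_{B,i}^t(f, q)} = B\, q_i\, \bar{\ell}_i(f)$: each of the $B$ examples lands in group $i$ with probability $q_i$ and, given that it does, contributes expected loss $\bar{\ell}_i(f)$. Summing over $i$ gives $\E{}{L_B^t(f, q)} = B \sum_{i=1}^m q_i \bar{\ell}_i(f)$. I would then substitute into each side. For the left-hand side, taking $q = \text{Unif}(m)$ yields $\E{}{L_{B,i}^t(f, \text{Unif}(m))} = \frac{B}{m}\bar{\ell}_i(f)$, hence $\E{}{\sum_{i=1}^m p_i^t L_{B,i}^t(f, \text{Unif}(m))} = \frac{B}{m}\sum_{i=1}^m p_i^t \bar{\ell}_i(f)$. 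For the right-hand side, taking $q = p^t$ in the summed identity gives $\E{}{L_B^t(f,p^t)/m} = \frac{B}{m}\sum_{i=1}^m p_i^t \bar{\ell}_i(f)$. The two expressions coincide, which proves the claim.

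I do not expect a genuine obstacle here, as the lemma is elementary; the only point requiring care is the factorization in the second step, namely that the conditional expected loss $\bar{\ell}_i(f)$ is independent of the sampling proportions $q$. This holds because samples within a group are drawn from $\Dtrain^i$ in the same way regardless of $q$, so the per-group loss magnitude decouples cleanly from the group-selection distribution and the expectations factor as claimed.
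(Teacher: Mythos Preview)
Your proposal is correct and follows essentially the same route as the paper: both sides are reduced to $\tfrac{B}{m}\sum_{i=1}^m p_i^t\,\bar{\ell}_i(f)$ via linearity of expectation and the observation that the per-group expected loss is independent of the mixture proportions. The only cosmetic difference is that the paper treats the group counts as deterministic (exactly $B/m$ or $Bp_i^t$ samples per group) whereas you model them as random via i.i.d.\ group labels; your version is slightly more general since it does not require $B/m$ or $Bp_i^t$ to be integers.
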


\begin{proof}
    Let each group $i$ consist of samples $x$ from the distribution $\mathcal{P}_i$, and let $\tilde{L}_{\text{train, i}}(f) = \E{x \sim \mathcal{P}_i}{\ell(f, x)}$ be the population-level loss on group $i$, where $\ell(f, x)$ is $f$'s loss on sample $x$.

    If a batch is uniformly sampled, each group has $B/m$ samples. We can then write $L_{B, i}^t(f, \text{Unif}(m)) = \sum_{k = 1}^{B/m} \ell(f, x_k^i)$, where $x_k^i$ is the $k$th sample of group $i$. Then,
    \begin{align}
        \E{}{\sum_{i = 1}^m p_i^t L_{B, i}^t(f, \text{Unif}(m)) } = \E{}{\sum_{i = 1}^m p_i^t \sum_{k = 1}^{B/m} \ell(f, x_k^i)} = \sum_{i = 1}^m \frac{p_i^t B}{m}  \tilde{L}_{\text{train, i}}(f).
    \end{align}

    Next, if a batch is sampled according to $p^t$, then group $i$ has $B p_i^t$ samples in the batch. We can then write $L_B^t(f, p^t) = \sum_{i = 1}^m \sum_{k = 1}^{p_i^t B} \ell(f, x_k^i)$. Then, 
    \begin{align}
        \E{}{ \frac{L_B^t(f, p^t)}{m}} = \E{}{\sum_{i = 1}^m \sum_{k = 1}^{p_i^t B} \frac{\ell(f, x_k^i)}{m}} = {\sum_{i = 1}^m \frac{p_i^t B}{m} \tilde{L}_{\text{train, i}}(f)}.
    \end{align}

    This hence establishes the equivalence in expectation between a weighted training objective and training on data sampled according to $\p$.

\end{proof}
\section{Analysis Details}
\label{app:analysis}

\subsection{Mixing Law Parameterization}
\label{app:mixing_law}

\begin{table}[ht]
    \centering
    \caption{Comparison of log-linear static and linear dynamic mixing law parameterizations across different data settings with MSE and $R^2$ metrics. Both log-linear and linear dynamic mixing laws fit the relationship between mixing proportions and losses well.}
    \begin{tabular}{lcccccc}
        \toprule
        \multirow{2}{*}{\textbf{Parameterization}} & \multicolumn{2}{c}{\textbf{Arxiv/SE}} & \multicolumn{2}{c}{\textbf{GH/C4}} & \multicolumn{2}{c}{\textbf{Books/SE}} \\
        & MSE & $R^2$ & MSE & $R^2$ & MSE & $R^2$ \\
        \midrule
        Log-linear static & 2e-4 & 0.990 & 5e-4 & 0.989 & 6e-4 & 0.987 \\
        Linear dynamic & 2e-4 & 0.936 & 1e-4 & 0.948 & 4e-5 & 0.926 \\
        \midrule
        & \multicolumn{2}{c}{\textbf{Arxiv/Books/SE}} & \multicolumn{2}{c}{\textbf{CC/GH/Wiki}} & \multicolumn{2}{c}{\textbf{SlimPajama}} \\
        & MSE & $R^2$ & MSE & $R^2$ & MSE & $R^2$ \\
        \midrule
        Log-linear static & 6e-4 & 0.991 & 0.001 & 0.989 & 0.002 & 0.997 \\
        Linear dynamic & 6e-5 & 0.957 & 1e-4 & 0.975 & 5e-6 & 0.938 \\
        \bottomrule
    \end{tabular}
    \label{tab:fit}
\end{table}

We describe how we performed the linear and log-linear parameterization experiments. For the log-linear static parameterizations, we train our model on $\p \in \mathcal{P}$ sweeps and fit the parameters using code provided in~\citet{ye2024datamixinglawsoptimizing} (i.e., using PyTorch and L-BFGS to minimize the Huber loss of the mixing law). We do this over $5$ random seeds for $k= 2, 3$ and over 3 seeds for the full SlimPajama.

For the linear dynamic parameterizations, for $k=2, 3$ we train the model for $2000$ steps according to some $p^0 \in \mathcal{P}$, and then sweep over $\mathcal{P}$ for the next 100 steps. We do this for one random seed, performing $|\mathcal{P}|^2$ total runs. For the full SlimPajama setting, we train the model for $10000$ steps using stratified sampling, and then sweep over $\mathcal{P}$ for the next $5000$ steps. We fit the parameters using Pytorch and L-BFGS.

\subsubsection{Additional parameterization experiments}
\label{app:checkpoints}

\paragraph{Parameterization across checkpoints.} We investigate whether the log-linear static and linear dynamic mixing laws remain well-specified in later stages of training and on other datasets.
To do so, we take various Pythia 160M checkpoints ~\citep{biderman2023pythia}, sweep mixing proportions, and fit the linear dynamic and log-linear static  mixing laws. We train for $2000$ steps according to the learning rates and learning rate scheduler reported in~\citep{biderman2023pythia}. We fit the static mixing law on full runs of $2000$ steps, and the linear dynamic mixing law at $t=500$, after which we do a training sweep over the next $500$ steps.
In Tables \ref{tab:72} and \ref{tab:pretraining}, we find that the strong fit for log-linear static mixing laws
continues to hold during pre-training at checkpoint 72K (roughly halfway through training Pythia-160M) and after pre-training, with an average $R^2$ of 0.982 and 0.991, respectively. However, the linear dynamic mixing law's $R^2$ coefficient is lower, averaging 0.815 at checkpoint 72K and 0.830 at the end of pre-training. It thus may be interesting to further study if the dynamics of the loss-proportion relationship evolve in a structured way throughout training, or if these results are due to more noise in how models learn at later stages of training. 

\paragraph{Parameterization across other sets of data groups.} In Figure \ref{fig:nonlinear}, we identify an example set of data groups that exhibits a \textit{non-linear} relationship between loss and proportion: Books/C4 from SlimPajama. For these two data groups, we see that as the proportion of Books increases while C4 decreases, the loss on Books starts \emph{increasing} past a certain $\p$, suggesting quite counterintuitively that performance on Books is optimized by allocating some proportion to C4.
In this case, neither log-linear static or linear dynamic mixing laws have good fit to the proportion-loss relationship, as neither can represent the non-linearity. 
In particular, the average MSE and $R^2$ for the log-linear static mixing law is $0.003$ and $0.558$, respectively, and the average MSE and $R^2$ for the linear dynamic mixing law is $0.0002$ and $0.721$.

Fortunately, because these nonlinearities exist on the boundary of the simplex and tend to incur high loss, they tend to have little impact on the optimization of $\p$, which strives to minimize the average loss.
For instance, we found that the optimal proportion according to~\citet{ye2024datamixinglawsoptimizing}'s log-linear static mixing law on one random seed was $[0.176, 0.824]$, and the true optimal from grid search was $[0.2, 0.8]$.
However, it is important to further investigate this non-linear phenomenon on additional data groups and training regimes, which we defer to future work.

\begin{table}[ht]
    \centering
    \caption{Comparison of log-linear static and linear dynamic mixing law parameterizations when training from the 72K Pythia-160M checkpoint.}
    \begin{tabular}{lcccccc}
        \toprule
        \multirow{2}{*}{\textbf{Parameterization}} & \multicolumn{2}{c}{\textbf{Arxiv/SE}} & \multicolumn{2}{c}{\textbf{GH/C4}} & \multicolumn{2}{c}{\textbf{Books/SE}} \\
        & MSE & $R^2$ & MSE & $R^2$ & MSE & $R^2$ \\
        \midrule
        Log-linear static & 2e-4 & 0.975 & 7e-5 & 0.992 & 2e-4 & 0.981 \\
        Linear dynamic & 4e-4 & 0.834 & 7e-4 & 0.815 & 6e-4 & 0.796 \\
        \bottomrule
    \end{tabular}
    \label{tab:72}
\end{table}

\begin{table}[ht]
    \centering
        \caption{Comparison of log-linear static and linear dynamic mixing law parameterizations when training from the pre-trained Pythia-160M.}
    \begin{tabular}{lcccccc}
        \toprule
        \multirow{2}{*}{\textbf{Parameterization}} & \multicolumn{2}{c}{\textbf{Arxiv/SE}} & \multicolumn{2}{c}{\textbf{GH/C4}} & \multicolumn{2}{c}{\textbf{Books/SE}} \\
        & MSE & $R^2$ & MSE & $R^2$ & MSE & $R^2$ \\
        \midrule
        Log-linear static & 3e-6 & 0.994 & 4e-6 & 0.992 & 6e-6 & 0.986 \\
        Linear dynamic & 5e-5 & 0.896 & 8e-5 & 0.824 & 1e-4 & 0.769 \\
        \bottomrule
    \end{tabular}
    \label{tab:pretraining}
\end{table}

\begin{figure}
    \centering
    \includegraphics[width=0.45\linewidth]{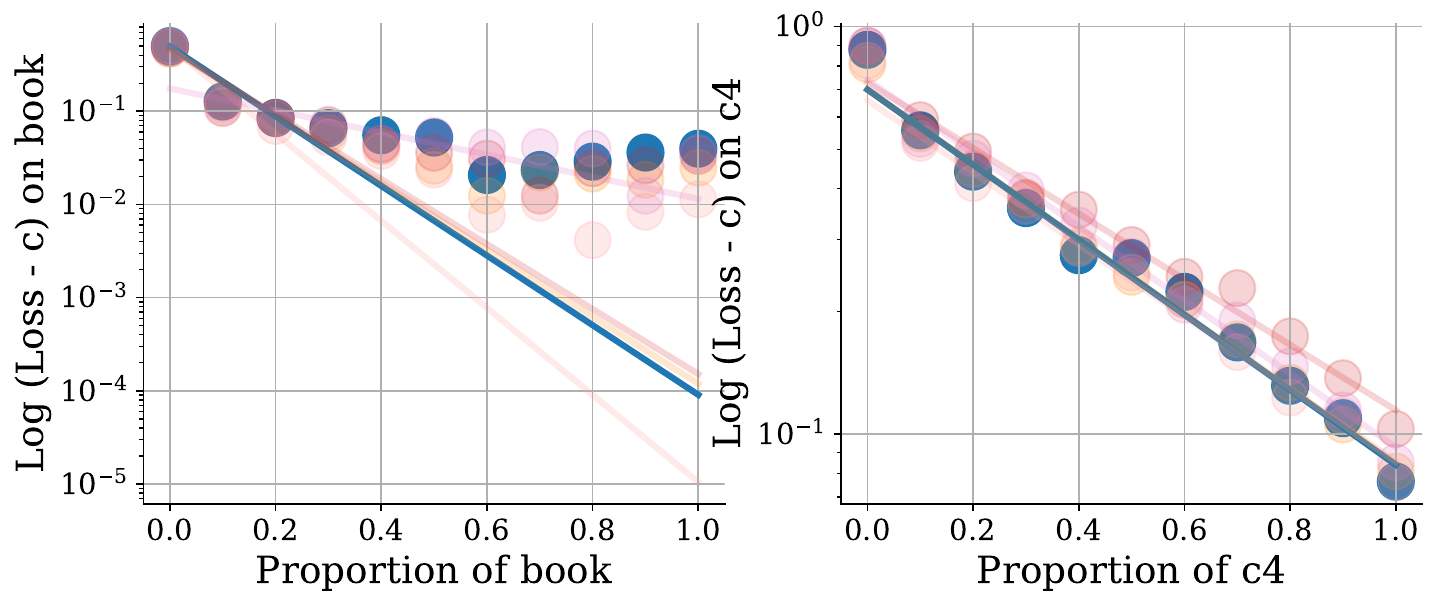}
    \includegraphics[width=0.45\linewidth]{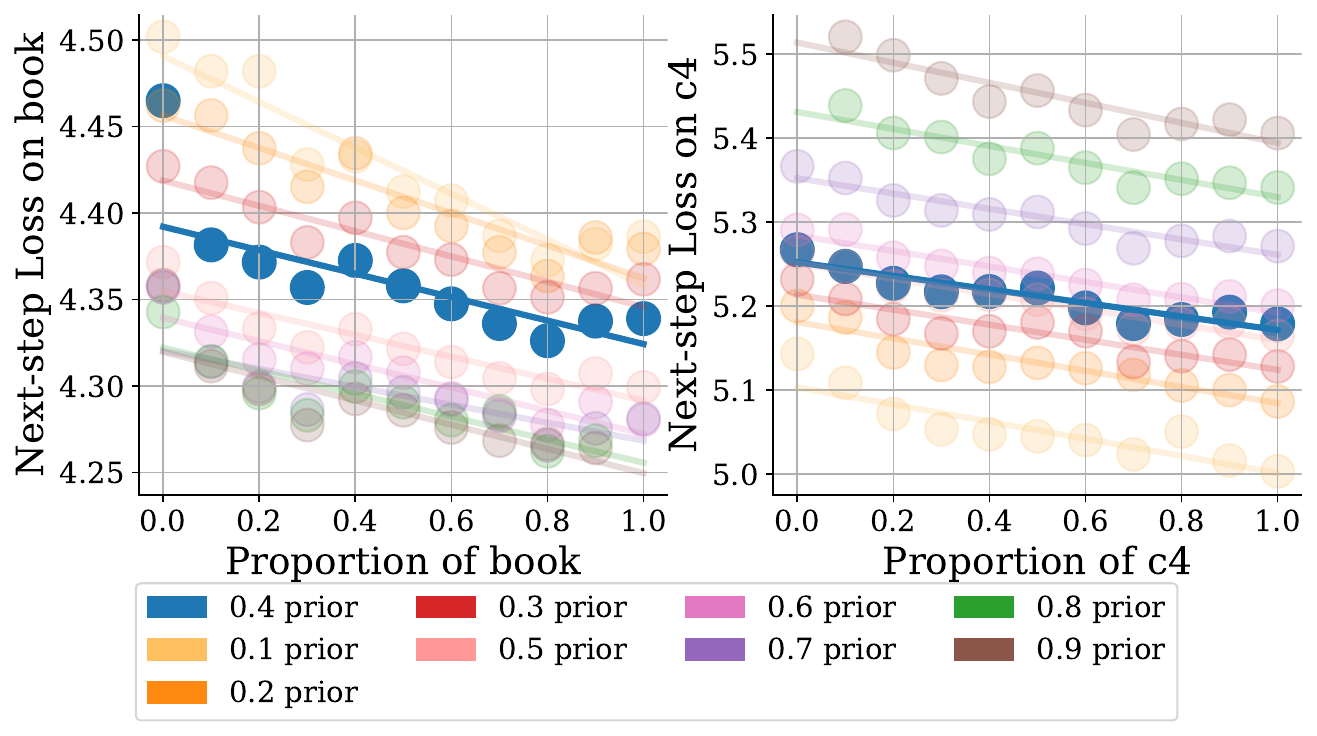}
    \caption{Top: Log-linear static mixing law fit on Books/C4 across 5 random seeds. Bottom: Linear dynamic mixing law fit on Books/C4 on 1 random seed. Each color is a different initial mixture $p^0 \in \mathcal{P}$ trained for $2000$ steps, and the fitting sweeps are done over $100$ additional steps.}
    \label{fig:nonlinear}
\end{figure}

\subsubsection{Parameterization on instruction-tuning mixtures}\label{app:instruction}

Previously, we studied if training on SlimPajama (from scratch, at a pre-training checkpoint, and at the end of pre-training) exhibited linear dynamic or log-linear static mixing. We now study if supervised fine-tuning on a mixture of task types exhibits similar mixing laws. The data mixing groups we consider are instruction-following tasks. It is important to know how to optimally mix these groups so that the model can follow a variety of instructions, as shown by how existing datasets consist of a diverse set of commands~\cite{supernaturalinstructions, vicuna2023, alpaca, longpre2023flan, zhou2023lima, narayan2024cookbook}.

We select $m=9$ tasks from Natural Instructions~\cite{naturalinstructions, supernaturalinstructions}: AbductiveNLI, BoolQ, HellaSwag, MathQA, PIQA, SemEval, SQuAD 1.1, SST2, and XSum. We selected tasks with many samples, prioritizing diversity of capabilities and formats. We construct validation and test splits that are 100 samples per group. More information is provided in Table~\ref{tab:instruction_info}.

\begin{table}[h]
\centering
\caption{Overview of Instruction Tasks}
\begin{tabular}{l|l|l|l}
\toprule
\textbf{Task} & \textbf{Task number in Natural Instructions} & \textbf{\# Samples} & \textbf{Output Format} \\
\toprule
AbductiveNLI~\cite{ch2019abductive} & \texttt{task067} & 6499 & Open-ended \\
BoolQ~\cite{clark-etal-2019-boolq} & \texttt{task380} & 6500 & Yes/No \\
HellaSwag~\cite{zellers-etal-2019-hellaswag} & \texttt{task1389} & 6494 & Multiple choice \\
MathQA~\cite{Amini_2019} & \texttt{task1420} & 6452 & Multiple choice \\
PIQA~\cite{Bisk2019PIQA} & \texttt{task080} & 6500 & Open-ended \\
SemEval~\cite{Wang_2020} & \texttt{task295} & 5996 & Multiple choice \\
SQuAD 1.1~\cite{rajpurkar-etal-2016-squad} & \texttt{task075} & 6498 & Open-ended \\
SST2~\cite{socher-etal-2013-recursive} &  \texttt{task363} & 6495 & Pos/Neg \\
XSum~\cite{xsum-emnlp} & \texttt{task1290} & 6493 & Open-ended \\
\bottomrule
\end{tabular}
\label{tab:instruction_info}
\end{table}

To conduct the sweeps, we set $\mathcal{P}$ to be $50$ mixing proportions drawn from the Dirichlet distribution with $\alpha=1.5$. For the static parameterization, we conduct 50 training runs over $\mathcal{P}$, for $1000$ steps each, and we do this over $5$ random seeds. For the dynamic parameterization, we train on $10$ proportions from $\mathcal{P}$ for 500 steps and then sweep over the entire $\mathcal{P}$ for the next $100$ steps. We do this over $1$ random seed. We ensure there are no repeated samples in training. We use a pre-trained Pythia-160M model~\cite{biderman2023pythia}, consistent with the rest of our experiments, and use a linear scheduler with learning rate 1e-5 and 100 warmup steps.

Our results are in Table~\ref{tab:instruction}. In addition to displaying the averaged MSE and $R^2$ across all $9$ groups, we also display per-group results. We find that the log-linear static mixing law attains an average $R^2$ of 0.888 over these instruction tasks. However, the linear dynamic mixing law only attains an average $R^2$ of 0.419. Interestingly, we observe that the $4$ instruction tasks that involve open-ended generation have higher $R^2$ (average of $0.73$) while the binary and multiple choice tasks have a lower $R^2$ (average of $0.17$) for the linear dynamic law. We hypothesize that this is because tasks that do not require open-ended generation are easier to learn and more susceptible to overfitting. We observed that their validation losses often plateau before $500$ steps, and increasing the proportions after this point does not consistently decrease loss. Finally, we also include a log-linear dynamic mixing law---that is, $\log (\Lval{i}^t(\p)) = \log (\Lval{i}^{t-1}(\p)) - \sum_{j = 1}^m A_{ij}^t p_j^t$. This can be thought of as a piecewise version of the log-linear static mixing law, and we find that this slightly improves MSE and $R^2$ compared to the linear dynamic mixing law.

\begin{table}[ht]
   \centering
   \caption{Comparison of log-linear static, linear dynamic, and log-linear dynamic mixing law parameterizations over instruction-tuning tasks in terms of MSE and $R^2$.}
   \begin{tabular}{lcccccc}
       \toprule
       \multirow{2}{*}{\textbf{Task}} & \multicolumn{2}{c}{\textbf{Log-linear static}} & \multicolumn{2}{c}{\textbf{Linear dynamic}} & \multicolumn{2}{c}{\textbf{Log-linear dynamic}} \\
       & MSE & $R^2$ & MSE & $R^2$ & MSE & $R^2$ \\
       \midrule
       AbductiveNLI & 3e-4 & 0.939 & 4e-4 & 0.586 & 4e-5 & 0.599 \\
       BoolQ & 1e-3 & 0.941 & 8e-2 & 0.215 & 2e-2 & 0.276 \\
       HellaSwag & 6e-4 & 0.848 & 6e-3 & 0.225 & 2e-3 & 0.256 \\
       MathQA & 8e-4 & 0.787 & 6e-3 & 0.090 & 2e-3 & 0.115 \\
       PIQA & 5e-4 & 0.916 & 3e-4 & 0.754 & 2e-5 & 0.761 \\
       SemEval & 9e-4 & 0.974 & 4e-3 & 0.239 & 3e-3 & 0.254 \\
       SQuAD 1.1 & 8e-3 & 0.947 & 4e-3 & 0.742 & 9e-4 & 0.766 \\
       SST2 & 3e-3 & 0.662 & 2e-2 & 0.082 & 4e-2 & 0.118 \\
       XSum & 1e-4 & 0.977 & 1e-4 & 0.838 & 1e-5 & 0.841 \\
       \midrule
       Average & 2e-3 & 0.888 & 1e-2 & 0.419 & 8e-3 & 0.443 \\
       \bottomrule
   \end{tabular}
   \label{tab:instruction}
\end{table}

\paragraph{Checking for interactions among groups.} It is natural to ask whether a \textit{linear} mixing law is sufficient to model how mixing proportions affect the loss. In linear regression, such assumptions are often evaluated using visual diagnostics called \textit{residual plots} \citep{montgomery2021introduction}. Residual plots graph the prediction error from each data point (the \textit{residuals}) in order to reveal different kinds of structure. For example, it is common to plot the residual against the predicted value to check for nonlinearity.

Figure~\ref{fig:interaction-diagnostics} shows several such residual plots for the dynamic mixing law experiments with 3 domains (Arxiv, Books, and Stackexchange). The figure checks for interactions when predicting Arxiv's loss. The corresponding plots for the other domains look similar.

The top row visualizes the residuals inside the simplex. If strong interactions were present, then they would cause clustered patterns in the residuals---regions where the linear model consistently gives predictions that are too low or too high. Strong patterns do not seem apparent.

The bottom three rows plot the residuals against different interaction terms. A consistent trend in the residuals above or below zero would suggest the term captures a meaningful interaction. The scatter plots show no consistent trend. The first three charts on the bottom row hint that a small interaction could be present in those cases; however, it is difficult to say without larger samples. Considering the linear model's excellent fit and high $R^2$, if such an interaction is present then it is likely small.

To summarize: the linear model seems sufficient. While we can not rule out the possibility of small interactions, the diagnostics do not reveal any major departures from linearity that might compel us to use a more complex model.

\begin{figure}
    \centering
    \includegraphics[width=\linewidth]{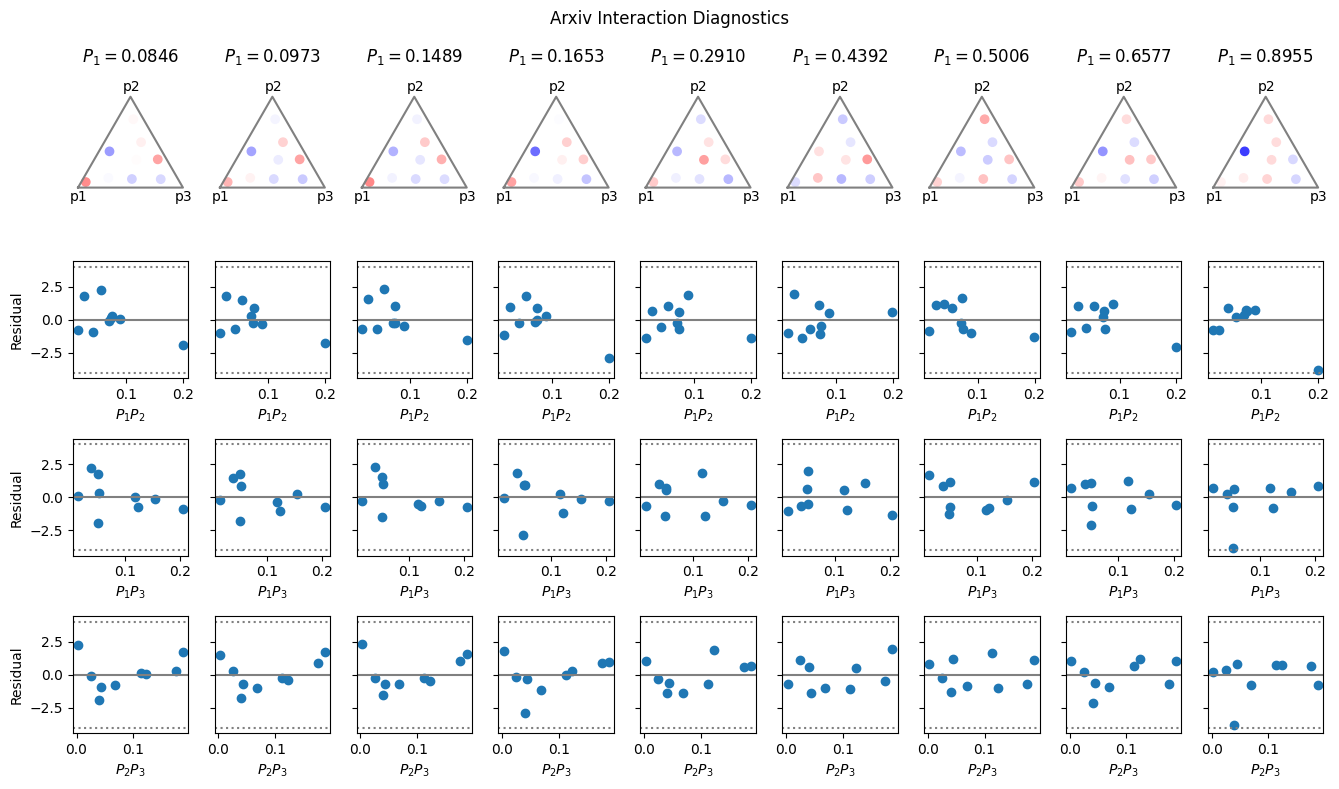}
    \caption{Residuals plots to check for interactions in the dynamic mixing law experiments with 3 domains (Arxiv, Books, and StackExchange). The target loss is Arxiv. Columns correspond to different initial mixing proportions. Data points show the (externally studentized) residuals of different mixing proportions after fitting the linear mixing law. Top row: Each point in the simplex corresponds to a different mixture of the 3 domains, with its color giving the residual's value at that point (red is positive, blue is negative). Bottom 3 rows: each row shows the residual plotted against a different interaction term: $P_1 P_2$, $P_1 P_3$, and $P_2 P_3$. Dotted gray lines show upper and lower 99\% confidence limits for the residuals, assuming the linear regression assumptions hold.}
    \label{fig:interaction-diagnostics}
\end{figure}

\subsection{Values of mixing law parameters}
\label{app:mix_params}

We explain how to compare method-specific $A^t$'s to an approximation of the true $A^{t \star}$. First, after performing method-specific initialization, such as training reference models, we run each online method (Skill-It, DoReMi's proxy model DoGE's proxy model, Skill-it, and \name) for $t$ steps. For Skill-It, DoReMi, and DoGE, we use the unrestricted setting configuration of hyperparameters presented in Section \ref{app:experiments}.
For \name, we analyze the parameters of \name+GS from the restricted setting, since we found that this had less noisy fluctuation in the weights than in the unrestricted setting.
For $m=2$, we set $t=1000$ for Skill-It and $t=500$ for DoGE, DoReMi, and \name since Skill-It is updated less frequently. For $m=3$, we set $t=1000$ for DoGE, DoReMi and Skill-It, and $t=1500$ for \name. We then checkpoint the language model and the method's $A^t$. For DoGE and DoReMi, we compute a smoothed $A^t = \frac{1}{100}\sum_{i = 1}^{100} A^{t-100+i}$ because each $A^t$ is computed at the batch level, and can thus be noisy. For \name, we also smooth the $A^t$ by averaging the previous timestep parameters.

To approximate $A^{t \star}$, we then run a training sweep of $p^t$ over $\mathcal{P}$ for 100 steps on the checkpoint. We use this training sweep to fit $A^{t \star}$ from the dynamic mixing law $\Lval{i}^{t+1}(\p) = \Lval{i}^t(\p) - \sum_{j = 1}^m A_{ij}^{t \star} p_j^t$.

Before we compare parameters, we scale $A^t$ by some $b^t$ where $\Lval{i}^{t+1}(\p) = \Lval{i}^t(\p) - b^t \sum_{j = 1}^m A_{ij}^t p_j^t$ for all $i \in [m]$. This is allowed since $b^t$ does not influence the optimal $\p$ and does not need to be in the update rule. We fit a single $b^t$ across each group's mixing law and set $\tilde{A}^t = b^t A^t$.
We can then compare $A^t$ and $A^{t \star}$ using the metric $\text{sim}(\tilde{A}^t, A^{t\star}) = 0.5 \text{cossim} (\tilde{a}^t, a^{t\star}) + 0.5 \text{Spearman}(\tilde{a}^t, a^{t\star})$, which we proposed in Section \ref{sec:a_matrix}. 

\subsubsection{Properties of $A^{t \star}$}\label{app:a_star_properties}

We discuss some properties of $A^{t \star}$, finding that 1) $A^{t\star}$ can vary significantly across time, and 2) $A^{t\star}$ needs to be modeled as a full matrix. To do this, for each initial mixture $p^0 \in \mathcal{P}$, we train for $t=2000$ steps and then sweep over $\mathcal{P}$ for the next $100$ steps. We repeat this setup for $t = 4000$ to obtain $A^{2000 \star}$ and $A^{4000 \star}$. We do this experiment for Arxiv/Stackexchange and Github/C4.

\textbf{Extent of time variation of $A^t$.}
We find that the column sums of $A^t$ can change order over time, meaning that the $p^t$ ``changes direction'' in terms of which group has the largest proportion. In particular, for $p^0 = [0.5, 0.5]$ and Github/C4, we have that
\begin{align}
    A^{2000 \star} = \begin{bmatrix}
        0.148 & 0.011 \\
        -0.013 & 0.087
    \end{bmatrix} \qquad 
    A^{4000 \star} = \begin{bmatrix}
        0.015 & 0.001 \\
       0.001 & 0.015
    \end{bmatrix}
\end{align}

The column sums are $\mathbf{1}^\top A^{2000 \star} = [0.135, 0.098]$ and $\mathbf{1}^\top A^{4000 \star} = [0.016, 0.017]$, showing that the ordering of proportions of the groups changes. This suggests that the optimal $p^t$ can change significantly across time, prioritizing Github initially and later C4, which is also reflected for Github/C4 in the greedy row of Table~\ref{tab:greedy}. 

However, for Arxiv/Stackexchange, the column sums of $A^{2000 \star}$ and $A^{4000 \star}$ never change in terms of the ordering of proportions of the data groups, across all $p^0 \in \mathcal{P}$. As a result, the optimal $p^t$ never changes direction. 
This suggests that how much $A^t$ varies in ordering over time depends on the data groups. As a result, methods like Skill-It, which use a time-invariant $A^{\text{SG}}$ multiplied by validation loss, may not be able to match the true $A^{t \star}$ if the groups' validation losses do not change in ranking across time, which we observe in Github/C4.

\textbf{Modeling $A^{t \star}$ as a full vs diagonal matrix.} We find that modeling the off-diagonal entries of $A^{t, \star}$ is important. For each sweep, we fit both $A^{t\star}$ as described above and a diagonal matrix $A_d^{t \star}$. We compare if the column sums of $A^{t \star}$ and $A_d^{t \star}$ differ in the order of elements.

We find that for Arxiv/StackExchange, $p^0 = 0.4$, and both $t=2000$ and $t=4000$, setting $p^t$ based on the full matrix would put a larger proportion on StackExchange, while setting $p^t$ based on the diagonal matrix would put a larger weight on ArXiv. In particular, the full and diagonal matrices for $t=2000$ are
\begin{align}
    A^{2000 \star} = \begin{bmatrix}
        0.249 & 0.058 \\
        0.025 & 0.224
    \end{bmatrix} \qquad 
    A_d^{2000 \star} = \begin{bmatrix}
        0.284 & 0 \\
        0 & 0.238
    \end{bmatrix}
\end{align}

The second column sum is larger for $A^{2000 \star}$ and smaller for $A_d^{2000 \star}$. We also have similar findings on Github/C4; for $p^0 = 0.6$ and $t = 2000$, we have 
\begin{align}
    A^{2000 \star} = \begin{bmatrix}
        0.119 & 0.027 \\
        -0.010 &  0.104
    \end{bmatrix} \qquad 
    A_d^{2000 \star} = \begin{bmatrix}
        0.135 & 0 \\
        0 & 0.098
    \end{bmatrix}
\end{align}

Using the diagonal matrix for Github/C4 would result in prioritizing training on Github, even though the full matrix suggests that C4 should be prioritized. Therefore, it is important to model $A^{t\star}$ as a full matrix. As a result, methods like DoReMi, which use a diagonal $A^t$, can perform suboptimally.

\subsection{Solving strategy }
\label{app:solving}

\begin{figure}
    \centering
    \includegraphics[width=0.5\linewidth]{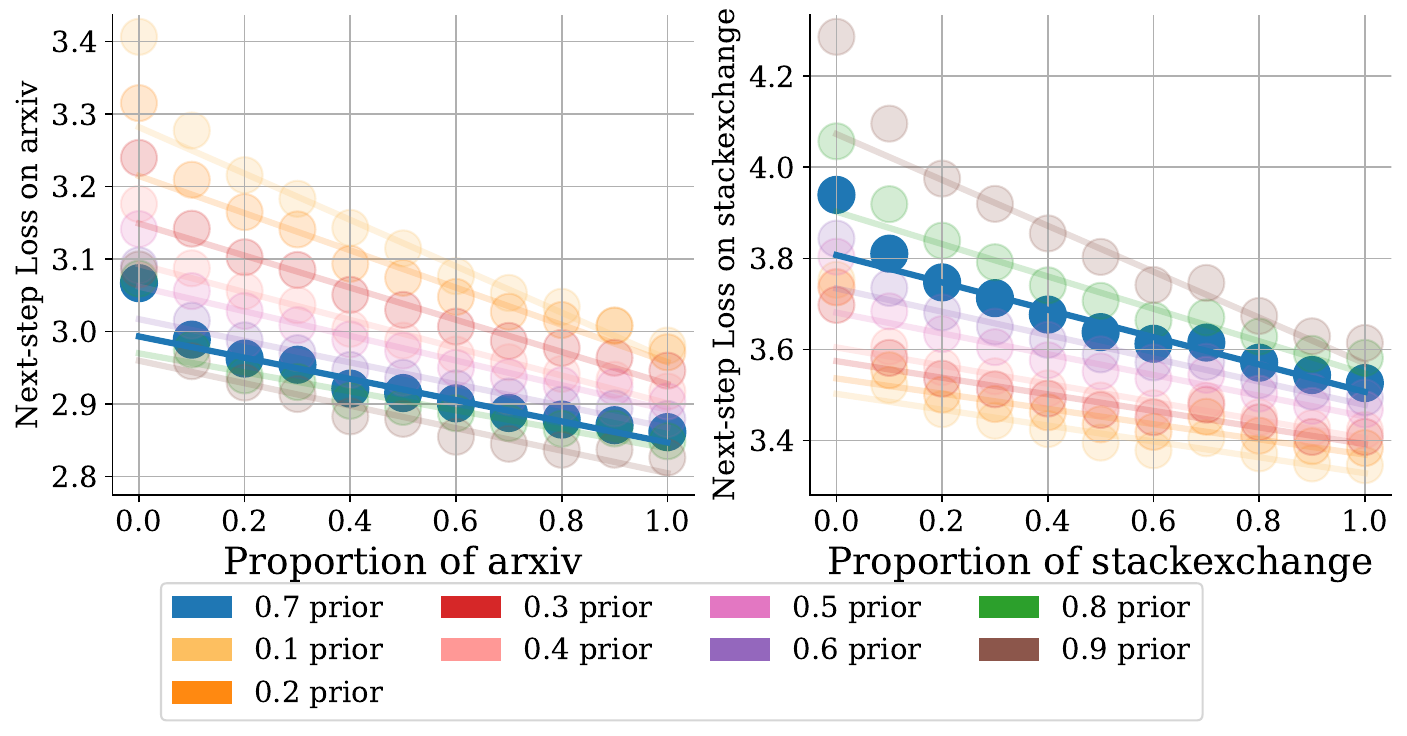}
    \caption{The linear dynamic parameterization results from Figure~\ref{fig:parameterization} (right), with $p^t = [0, 1]$ and $[1, 0]$ also plotted. We see that the linear dynamics are misspecified at $p_i^t = 0$ for both $i$.}
    \label{fig:extrema}
\end{figure}

We present our results on examining the assumptions made in how existing methods solve the \mixingframework optimization problem. All online methods use exponentiated gradient descent, which updates $p^t$ using the gradient at the current timestep. This involves a greedy approximation of the objective function. We study if the greedy approximation yields a $\p$ is close to the true optimal $\p$.

For $m=2$ data settings, we take our $S=5000$ steps and split it into $T=2$ rounds. We perform a brute-force sweep at each round over $\mathcal{P}$, which sweeps $p_1 = 0.1, 0.2, \dots, 0.9$. In total over one random seed, we conduct $81$ training runs for each of Arxiv/Stackexchange, Github/C4, and Books/Stackexchange.

We determine the greedy-approximate $\p$ by selecting the best $p^1$. Then, conditioning on this $p^1$, we select the best $p^2$. We report what the greedy $\p$ and its performance is in the first row of Table~\ref{tab:greedy}, and we report the optimal $\p$ and its performance in the second row. Note that this protocol does not depend on the mixing law or a method for setting $\p$.

We find that for Arxiv/StackExchange and Books/StackExchange, the greedy proportions and the optimal proportions are identical. However, for Github/C4, the greedy approximation fails to recover the optimal proportions. Therefore, the greedy approximation recovers the optimal dynamic proportions in $2$ out of $3$ cases. 

\begin{table}[ht]
    \centering
    \caption{Comparison of the greedily selected $p^1,~p^2$ versus the optimal $p^1,~p^2$ for a $T=2$ rounds data mixing problem. On 2 out of 3 datasets, the greedily selected proportions match the optimal proportions.}
    \small
    \begin{tabular}{lcccccc}
        \toprule
        \multirow{2}{*}{\textbf{Solving}} & \multicolumn{2}{c}{\textbf{Arxiv/SE}} & \multicolumn{2}{c}{\textbf{GH/C4}} & \multicolumn{2}{c}{\textbf{Books/SE}} \\
        & $p_1^1, p_1^2$ & Avg test PPL & $p_1^1, p_1^2$ & Avg test PPL & $p_1^1, p_1^2$ & Avg test PPL \\
        \midrule
        Greedy & $0.4, 0.4$ & $16.039$ & $0.6, 0.4$ & $36.525$ & $0.3, 0.6$ & $45.513$ \\
        Optimal & $0.4, 0.4$ & $16.039$ & $0.3, 0.6$ & $34.709$ & $0.3, 0.6$ & $45.513$ \\
        \bottomrule
    \end{tabular}
    \label{tab:greedy}
\end{table}

Beyond exponentiated gradient descent, one may wonder if exactly solving the greedy objective could suffice. For the linear dynamic mixing law $L^{t+1}(\p) = L^t(\p) - A^t p^t$, the optimal $p^t$ is $\mathbf{1}_j$, where $j = \arg\max \sum_{i = 1}^m A_{ij}^t$. However, we find in Figure~\ref{fig:extrema} that the loss-proportion relationship can be nonlinear at the edge of the simplex where $p^t = \mathbf{1}_j$. 
Exponentiated gradient descent, which uses entropy regularization, is hence able to implicitly avoid extreme $\p$ where the linear mixing law is misspecified and thus is a practical technique for \mixingframework.

\section{Additional Algorithmic Details}
\label{app:aioli_continued}

In \name, \subroutine is used in each round to learn $A^t$. Then, $A^t$ is used to compute $p^t$, which is used for training during the round.
We provide a derivation of \subroutine by first presenting a naive, high-cost method for estimating $A^t$ (Appendix~\ref{app:naive_aioli}). This involves checkpointing the model at each round, running a training sweep over the round and observing the changes in validation losses, and fitting $A^t$ to these changes.
Then, we layer on two modifications that compute slightly different loss changes, helping lower the cost of estimation. First, we shorten the training sweep to be only over a fraction of the round, $\delta$, and use these shortened changes in validation losses to fit $A^t$ (Appendix~\ref{app:shortened_aioli}). Second, we simulate a simultaneous training sweep by partitioning the $\delta$ fraction of the round into many small parts, interleaving the different sweep mixtures at a fine granularity and averaging the loss changes for each sweep mixture (Appendix~\ref{app:final_aioli}). This idea, with similarity to concepts like time-division multiplexing in signal processing \citep{chaparro-tdm}, enables \name to require no extra training while trading off accuracy of the estimate. We provide a sketch of our derivation in Figure~\ref{fig:aioli_intuition}.

\begin{figure}
    \centering
    \includegraphics[width=0.7\linewidth]{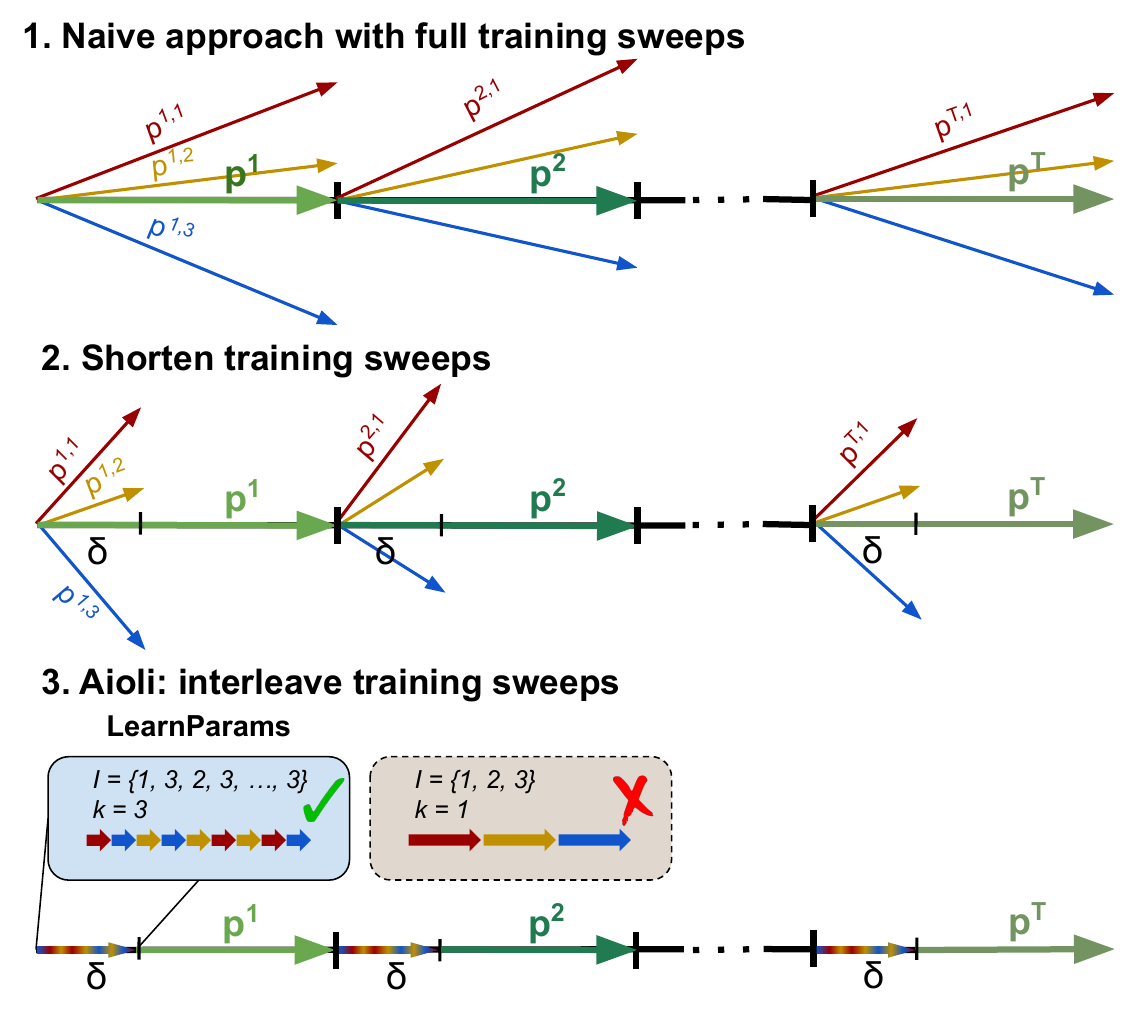}
    \caption{Derivation of \name. Top: a naive high-cost approach where training sweeps are conducted to fit $A^t$ at each round (Appendix~\ref{app:naive_aioli}). Middle: a modification that shortens the training sweeps used to learn $A^t$ (Appendix~\ref{app:shortened_aioli}). Bottom: a final modification that interleaves the sweep mixtures at a high frequency (large $k$) in one single run, enabling \name's \subroutine to require no additional training (Appendix~\ref{app:final_aioli}).}
    \label{fig:aioli_intuition}
\end{figure}

\subsection{Naive training sweep approach}\label{app:naive_aioli}

This approach is depicted in Figure~\ref{fig:aioli_intuition} (top). By conducting a training sweep over round $t$, we can use a linear system of equations to estimate $A^t$ from the linear dynamic mixing law $\Lval{i}^{t+1}(\p) = \Lval{i}^t(\p) - \sum_{j=1}^m A_{ij}^t p_j^t$. Let $p^{t, 1}, p^{t, 2}, \dots p^{t, m} \in \triangle^m$ comprise a training sweep over the duration of round $t$. First, we checkpoint the model $f^t$, and for simplicity denote $f^t$'s validation loss on group $i$ as $\Lval{i}^t$. For each $p^{t, j}$, we train $f^t$ for the entire round using $p^{t, j}$. We then record how much the validation loss on each group changes, $\Lval{i}^t - \Lval{i}^{t+1}(p^{t, j})$ for all $i \in [m]$. By the end of this procedure on each $p^{t, j}$, we have the following system of equations for each $i \in [m]$:
\begin{align}
    \sum_{j = 1}^m A_{ij}^t p_j^{t, 1} &= \Lval{i}^t - \Lval{i}^{t+1}(p^{t, 1}) \\
    \sum_{j = 1}^m A_{ij}^t p_j^{t, 2} &= \Lval{i}^t - \Lval{i}^{t+1}(p^{t, 2}) \nonumber \\
    &\vdots \nonumber \\
    \sum_{j = 1}^m A_{ij}^t p_j^{t, m} &= \Lval{i}^t - \Lval{i}^{t+1}(p^{t, m})  \nonumber 
\end{align}

This is a system of linear equations with $m$ unknowns: $A_{i1}, \dots, A_{im}$. We can write it in matrix form as:
\begin{align}
    \begin{bmatrix}
        p_1^{t, 1} & p_2^{t, 1} & \dots & p_m^{t, 1} \\
        p_1^{t, 2} & p_2^{t, 2} & \dots & p_m^{t, 2} \\
        \vdots & & & \\
        p_1^{t, m} & p_2^{t, m} & \dots & p_m^{t, m}
    \end{bmatrix} \begin{bmatrix}
        A_{i1}^t \\
        A_{i2}^t  \\
        \vdots  \\
        A_{im}^t 
    \end{bmatrix} = \begin{bmatrix}
        \Lval{i}^t - \Lval{i}^{t+1}(p^{t, 1}) \\
        \Lval{i}^t - \Lval{i}^{t+1}(p^{t, 2})  \\
        \vdots  \\
        \Lval{i}^t - \Lval{i}^{t+1}(p^{t, m}) 
    \end{bmatrix} \label{eq:learnparams_system}
\end{align}

Let $P \in \R^{m\times m}$ be the leftmost matrix and $\beta_i \in \R^m$ be the vector on the right hand side. Then, we can write $A_i^t = P^{-1} \beta_i$. We solve this system for each $i \in [m]$ to obtain $A^t$.

The advantage of this method is that it directly estimates the optimal $A^{t \star}$ that is used in the mixing law. However, it requires $m$ sweeps per round, because the key quantity we must observe to learn $A^t$ is $\Lval{i}^t - \Lval{i}^{t+1}(\p)$: the change in loss after training through the \emph{entire} round $t$. As a result, this approach requires $m$ extra full training runs to learn $A^t$. Below, we will describe how we can compute cheaper alternatives to $\Lval{i}^t - \Lval{i}^{t+1}(\p)$.

\subsection{Modification 1: shortening training sweeps}\label{app:shortened_aioli}

This modification is depicted in Figure~\ref{fig:aioli_intuition} (middle). A simple way to reduce the number of extra training runs needed to estimate $A^t$ is to train on each mixture $p^{t, j}$ for less than a round. Let $\delta$ denote the fraction of the round we use for the training sweep. Then, our system of equations in~\ref{eq:learnparams_system} uses $\Lval{i}^t - \Lval{i}^{t+\delta}(p^{t, j})$; we simply record the loss difference over $\delta$ of the round rather than the entire round, and use this to solve for $A^t$. Now, this approach effectively requires $m\delta$ extra training runs; however, this cost is still linear in the number of data groups. Moreover, there is some inaccuracy incurred by using $\delta$ of a round to approximate the entire round.

\subsection{Modification 2: ``interleaving'' training sweeps}\label{app:final_aioli}

This modification is depicted in Figure~\ref{fig:aioli_intuition} (bottom). Our final modification to derive \subroutine is to convert the training sweep---where we checkpoint the model and execute $m$ separate runs for $\delta$ of a round---into one round without requiring any checkpointing or rolling back of training. Our intuition is that if we interleave different mixtures sequentially at a high frequency, we can simulate executing these mixtures simultaneously. This is similar to a concept in signal processing called time-division multiplexing, in which two or more signals or bit streams are transferred appearing simultaneously as sub-channels in one communication channel, but are physically taking turns on the channel\footnote{\url{https://en.m.wikipedia.org/wiki/Time-division_multiplexing}}.

Formally, we break down the $\delta S/ T$ steps allocated for learning $A^t$ into $K$ intervals, where $K = mk$ and $k$ is the number of sweeps per mixture. We construct an interleaved order of $p^{t,1}, \dots, p^{t,m}$ over these $K$ intervals, and we denote their index order as $\mathcal{I} \in [m]^K$. Let $\mathcal{I}_{\tau}$ denote the mixture at the $\tau$th position in $\mathcal{I}$.
We can denote the model at the end of each interval as $t + \delta/K, t + 2 \delta / K, \dots, t + \delta$. During the $\tau$th interval, we train on one $p^{t, \mathcal{I}_{\tau}}$ and observe the change in loss, $\Lval{i}(f^{t + (\tau-1)\delta/K}) - \Lval{i}(f^{t + \tau \delta/K})$ for each validation group $i$.  Let $\mathcal{T}_j = \{\tau: \mathcal{I}_\tau = j\}$ be all the intervals where $p^{t, j}$ is assigned. We approximate $\Lval{i}^t - \Lval{i}^{t+1}(p^{t, j})$ with $\frac{1}{|\mathcal{T}_j|} \sum_{\tau \in \mathcal{T}_j}^k \Lval{i}(f^{t + (\tau-1)\delta/K}) - \Lval{i}(f^{t + \tau\delta/K})$. These approximated loss differences are then used to recover $A^t$ from the system of linear equations.

Lastly, note that the choice of $k$ controls the interleaving frequency and the bias of the estimated $A^t$. Suppose that $k = 1$. This means that each mixture is only assigned to one interval, and this could be at the beginning, middle, or end of the $\delta$ round. Then, the change in loss is a poor approximation of the original quantity $\Lval{i}^t - \Lval{i}^{t+1}(\p)$ due to dependence on time. 
However, as we increase $k$, the mixture $p^{t, j}$ will be trained on in the beginning, middle, and end of the $\delta$ round, allowing for a less time-biased estimate of the loss change.

With this modification, \subroutine now requires \emph{no extra training}. However, there are still some performance tradeoffs. First, in order to save compute, our estimate of $A^t$ via the shortened interleaved sweeps is less accurate than the naive approach. Second, without rolling back training, \name has both an ``explore'' and ``exploit'' phase, where the former learns $A^t$ over $\delta$ of the round and the latter uses $A^t$ to set $p^t$ and mix data accordingly for the remainder of the round. If $\delta$ is large, the estimate of $A^t$ may be relatively more accurate. However, training for longer on the sweep mixtures $p^{t, 1}, \dots p^{t, m}$ may be suboptimal for the performance of the model. Moreover, the training duration that utilizes the $p^t$ that is updated using the more accurate $A^t$ is now shortened. Therefore, adjusting $\delta$ is key to ensuring that $A^t$ is accurate and the model performs well.

\section{Experimental Details}
\label{app:experiments}

\subsection{Data}

To obtain a test set, we shuffle and split the validation set from SlimPajama-6B \citep{cerebras2023slimpajama,yoon2023slimpajama} in half.

To perform training sweeps and emulate grid searches in static settings for $m=3,7$, we oversampled from the Dirichlet with $\alpha=1$ by 4$x$ the number of points and then hierarchically merged closest points into a centroid until we obtained $x$ points. For example, to obtain 10 points in the $7$-dimensional simplex for SlimPajama-full, we would sample 40 points in the simplex and hierarchically merge closest points until 10 points remain. This is to ensure that near-duplicate $\p$'s are not included in the sweep. This procedure is used in Grid Search (GS) and DML in Section~\ref{sec:exp} and in our analysis in Section~\ref{sec:analysis}

\subsubsection{Training} 

Here, we discuss the training setups for the restricted and unrestricted settings. For the $m=2,3$ settings, we train a 160M model using Pythia-160M's configuration for $S =5000$ steps and results are averaged over 5 random seeds. For $m=7$, we train a 160M model using Pythia-160M's configuration for $S=40000$ steps results are averaged over 3 random seeds. All settings use FlashAttention \citep{dao2022flashattention}, batch size of 8, context size of 2048, and cosine learning rate decay from a starting learning rate of 5e-5 to 1e-5 with 500 steps of learning rate warmup. 

For the $m= 2, 3$ settings, experiments were run on a NVIDIA RTX 6000 Ada Generation GPU. For the $m=7$ setting, experiments were run on a NVIDIA A100 80 GB GPU.

\paragraph{Restricted versus unrestricted.} Both the restricted and unrestricted settings share the same length of the final training runs (5000 and 40000 steps, as above). The unrestricted setting gives all methods up to 10 training runs to initialize mixing algorithm parameters, or $10S$ steps, while the restricted setting give $0.5S$ steps. See Table \ref{tab:allocations} for training budget allocations in each setting. \name and stratified sampling do not use extra training runs.

\begin{table}[h!]
\caption{Training budget allocations for restricted and unrestricted settings.}
\label{tab:allocations}
\centering
\begin{tabular}{cccc}
\toprule
\textbf{Setting} & $m$ & \textbf{Method} & \textbf{Runs within training budget} \\ \midrule
Unrestricted & $2$ & DML & 10 runs, 5000 steps \\ 
 & & Skill-it & 2 runs, 5000 steps \\ 
 & & DoReMi & 2 runs, 5000 steps \\
  & & DoGE & 1 run, 5000 steps \\ \midrule
 & $3$ & DML & 10 runs, 5000 steps \\ 
 & & Skill-it & 3 runs, 5000 steps \\ 
 & & DoReMi & 2 runs, 5000 steps \\
  & & DoGE & 1 run, 5000 steps \\ \midrule
 & $7$ & DML & 10 runs, 40000 steps \\ 
 & & Skill-it & 7 runs, 40000 steps \\ 
 & & DoReMi & 2 runs, 40000 steps \\
  & & DoGE & 1 run, 40000 steps \\ \midrule \midrule
Restricted & $2$ & DML & 10 runs, 250 steps \\ 
 & & Skill-it & 2 runs, 1250 steps \\ 
 & & DoReMi & 2 runs, 1250 steps \\
  & & DoGE & 1 run, 2500 steps \\ \midrule
 & $3$ & DML & 10 runs, 250 steps \\ 
 & & Skill-it & 3 runs, 833 steps \\ 
 & & DoReMi & 2 runs, 1250 steps \\
  & & DoGE & 1 run, 2500 steps \\ \midrule
 & $7$ & DML & 10 runs, 2000 steps \\ 
 & & Skill-it & 7 runs, 2814 steps \\ 
 & & DoReMi & 2 runs, 10000 steps \\
  & & DoGE & 1 run, 20000 steps \\ \bottomrule
\end{tabular}

\end{table}

\subsection{Data mixing methods} 

\paragraph{\name-specific hyperparameters} In the unrestricted setting, we found it sometimes helpful to use an exponential moving average with proportion $\gamma$ over $A^t$ for \name. Formally, the standard $p^t$ update rule in Algorithm~\ref{alg:name} can be unrolled as $p_j^{t+1} \propto p_j^0 \exp(\eta \sum_{\tau=1}^t \sum_{i = 1}^m A_{ij}^{\tau})$, which places equal weight on every $A_{ij}^{\tau}$. To incorporate the EMA, we define $A^1_{\text{ema}} = \bar{A}^1$ and $A^t_{\text{ema}} = (1 - \gamma) \bar{A}^t + \gamma A^{t-1}_{\text{ema}}$. We then use the update rule $p_j^{t+1} \propto p_j^0 \exp(\eta A_{\text{ema}}^{t})$.
This allows \name to gradually decay the contributions of $A^t$, such that the value of $p^t$ is less dependent on earlier proportions in the training. 

We summarize the hyperparameters used in \name, providing their default values as well as guidelines for how to set them. Refer to Algorithm~\ref{alg:name} and~\ref{alg:learnparams} to see how they are used:
\begin{itemize}[itemsep=0.00pt,topsep=0pt,leftmargin=10pt]
    \item Number of rounds $T$: we set this to $20$ in all experiments. Larger $T$ means more frequent updates to the mixture proportions.
    \item Sweeps $k$: we set this to be $4$ for $m=2, 3$ and $2$ for the full SlimPajama experiments. We did not adjust this hyperparameter otherwise. Intuitively, a larger $k$ will give a more accurate $A^t$, because this means that each $p^{i, t}$ will be trained on more frequently throughout the $\delta$ proportion of the round; however, this will also result in less of the round being allocated to exploiting $A^t$ via using $p^t$. 
    \item $\varepsilon$ one-hot smoothing factor: we set this to be $0.75$ in all experiments. In general, $\varepsilon$ must be set between $0$ and $1$, where $0$ results in the training sweep using one-hot mixture proportions to learn $A^t$, which means that each batch only consists of one data group and can result in poor learning dynamics. $\varepsilon = 1$, on the other hand, means that our training sweep would only consist of uniform proportions.  
    \item EGD step size $\eta$: we sweep $\{0.1,~0.2,~0.3,~0.5\}$, with higher $\eta$ resulting in greater magnitude of the proportion update.
    \item Proportion of round $\delta$ dedicated to learning $A^t$: We use $\delta = 0.128, 0.288, 0.007$ for $m=2, 3, 7$, respectively. Intuitively, a larger $\delta$ will give more accurate $A^t$ because the parameter is learned on more data, but this will also result in less of the round being allocated to exploiting $A^t$ via using $p^t$.
    \item EMA parameter $\gamma$: we sweep None, $0.1, 0.5$. Intuitively, None means that the $p^t$ update is equally dependent on all previous $p^t$'s, while a small $\gamma = 0$ means that the $p^t$ update is only a function of the current $A^t$. 
\end{itemize}

For the last three hyperparameters, $\eta, \delta, \gamma$, we used different values of them in different experiments. Tables~\ref{tab:hyperparam-m2u},~\ref{tab:hyperparam-m2r},~\ref{tab:hyperparam-m3u},~\ref{tab:hyperparam-m3r},~\ref{tab:hyperparam-m7u}, and~\ref{tab:hyperparam-m7r} list exact values for the unrestricted and restricted settings for $m=2, 3, 7$. In addition, Appendix~\ref{app:sensitivity} provides results on hyperparameter sensitivity for $\eta, \delta$, and $\gamma$.

\begin{table}[ht!]
\centering
\caption{\textbf{Unrestricted} hyperparameter values for each data mixing algorithm for experiments where $m=2$ (corresponding to Table~\ref{tab:unrestricted} results).}
\label{tab:hyperparam-m2u}
\begin{tabular}{@{}lp{6cm}l@{}}
\toprule
\textbf{Data groups} & \textbf{Hyperparameter} & \textbf{Value} \\ \midrule
\textbf{arXiv/SE} & $\cdot$ proportion of round $\delta$  & 0.128 \\
& $\cdot$ EGD learning rate $\eta$ & 0.2 \\
& $\cdot$ EMA parameter $\gamma$  & 0.1 \\
\midrule
\textbf{GitHub/C4} & $\cdot$ proportion of round $\delta$  & 0.128 \\
& $\cdot$ EGD learning rate $\eta$ & 0.3 \\
& $\cdot$ EMA parameter $\gamma$  & 0.5 \\  \midrule
\textbf{Books/SE} & $\cdot$ proportion of round $\delta$  & 0.128 \\
& $\cdot$ EGD learning rate $\eta$ & 0.1 \\
& $\cdot$ EMA parameter $\gamma$  & None \\
\bottomrule
\end{tabular}
\end{table}

\begin{table}[ht!]
\centering
\caption{\textbf{Restricted} hyperparameter values for each data mixing algorithm for experiments where $m=2$ (corresponding to Table~\ref{tab:restricted} results).}
\label{tab:hyperparam-m2r}
\begin{tabular}{@{}lp{6cm}l@{}}
\toprule
\textbf{Data groups} & \textbf{Hyperparameter} & \textbf{Value} \\ \midrule
\textbf{arXiv/SE} & $\cdot$ proportion of round $\delta$  & 0.128 \\
& $\cdot$ EGD learning rate $\eta$ & 0.2 \\
& $\cdot$ EMA parameter $\gamma$  & None \\
\midrule
\textbf{GitHub/C4} & $\cdot$ proportion of round $\delta$  & 0.128 \\
& $\cdot$ EGD learning rate $\eta$ & 0.2 \\
& $\cdot$ EMA parameter $\gamma$  & None \\  \midrule
\textbf{Books/SE} & $\cdot$ proportion of round $\delta$  & 0.128 \\
& $\cdot$ EGD learning rate $\eta$ & 0.2 \\
& $\cdot$ EMA parameter $\gamma$  & None \\
\bottomrule
\end{tabular}
\end{table}

\begin{table}[ht!]
\centering
\caption{\textbf{Unrestricted} hyperparameter values for each data mixing algorithm for experiments where $m=3$ (corresponding to Table~\ref{tab:unrestricted} results).}
\label{tab:hyperparam-m3u}
\begin{tabular}{@{}lp{6cm}l@{}}
\toprule
\textbf{Data groups} & \textbf{Hyperparameter} & \textbf{Value} \\ \midrule
\textbf{arXiv/Books/SE} & $\cdot$ proportion of round $\delta$  & 0.288 \\
& $\cdot$ EGD learning rate $\eta$ & 0.5 \\
& $\cdot$ EMA parameter $\gamma$  & None \\
\midrule
\textbf{CommonCrawl/GitHub/Wiki} & $\cdot$ proportion of round $\delta$  & 0.288 \\
& $\cdot$ EGD learning rate $\eta$ & 0.3 \\
& $\cdot$ EMA parameter $\gamma$  & 0.5 \\
\bottomrule
\end{tabular}
\end{table}

\begin{table}[ht!]
\centering
\caption{\textbf{Restricted} hyperparameter values for each data mixing algorithm for experiments where $m=3$ (corresponding to Table~\ref{tab:restricted} results).}
\label{tab:hyperparam-m3r}
\begin{tabular}{@{}lp{6cm}l@{}}
\toprule
\textbf{Data groups} & \textbf{Hyperparameter} & \textbf{Value} \\ \midrule
\textbf{arXiv/Books/SE} & $\cdot$ proportion of round $\delta$  & 0.288 \\
& $\cdot$ EGD learning rate $\eta$ & 0.2 \\
& $\cdot$ EMA parameter $\gamma$  & None \\
\midrule
\textbf{CommonCrawl/GitHub/Wiki} & $\cdot$ proportion of round $\delta$  & 0.288 \\
& $\cdot$ EGD learning rate $\eta$ & 0.2 \\
& $\cdot$ EMA parameter $\gamma$  & None \\  
\bottomrule
\end{tabular}
\end{table}

\begin{table}[ht!]
\centering
\caption{\textbf{Unrestricted} hyperparameter values for each data mixing algorithm for experiments where $m=7$ (corresponding to Table~\ref{tab:unrestricted} results).}
\label{tab:hyperparam-m7u}
\begin{tabular}{@{}lp{6cm}l@{}}
\toprule
\textbf{Data groups} & \textbf{Hyperparameter} & \textbf{Value} \\ \midrule
\textbf{SlimPajama, full} & $\cdot$ proportion of round $\delta$  & 0.07 \\
& $\cdot$ EGD learning rate $\eta$ & 0.2 \\
& $\cdot$ EMA parameter $\gamma$  & 0.1 \\
\bottomrule
\end{tabular}
\end{table}

\begin{table}[ht!]
\centering
\caption{\textbf{Restricted} hyperparameter values for each data mixing algorithm for experiments where $m=7$ (corresponding to Table~\ref{tab:restricted} results).}
\label{tab:hyperparam-m7r}
\begin{tabular}{@{}lp{6cm}l@{}}
\toprule
\textbf{Data groups} & \textbf{Hyperparameter} & \textbf{Value} \\ \midrule
\textbf{SlimPajama, full} & $\cdot$ proportion of round $\delta$  & 0.07 \\
& $\cdot$ EGD learning rate $\eta$ & 0.2 \\
& $\cdot$ EMA parameter $\gamma$  & 0.1 \\
\bottomrule
\end{tabular}
\end{table}

\paragraph{Baseline hyperparameters. } We consulted the original papers and implementations to determine how to set the hyperparameters for each baseline, ensuring that the updated proportions were changing significantly but not oscillating under these configurations.
\begin{itemize}[itemsep=0.00pt,topsep=0pt,leftmargin=10pt]
    \item \textbf{Skill-It}: the hyperparameters are the number of rounds $T$, the EGD learning rate $\eta$, and the multiplicative weights window $w$. Our default configuration was $T=10$, $\eta=0.2$, and $w=3$. However, we made two exceptions in the unrestricted setting after conducting a sweep over $T \in \{5, 10\}$ and $\eta \in \{0.1, 0.2, 0.5, 0.8\}$; for GitHub/C4, we used $T=5$ and $\eta=0.1$, and for Books/StackExchange, we used $\eta=0.8$.
    \item \textbf{DoReMi}: the hyperparameters are the EGD learning rate $\eta$ and a smoothing factor $\varepsilon$ (0 = no smoothing). For all experiments, we set $\eta = 0.01$ and $\varepsilon = 1e-3$.
    \item \textbf{DoGE}: the hyperparameters are the EGD learning rate $\eta$, the smoothing factor $\varepsilon$, and the proportion of the training batch that is allocated for the validation dataset $r$; this is needed to compute the gradient dot-product at each step. We use $\varepsilon=0$ for all experiments. For $m=2$, we set $r=0.25$ and for $m=3, 7$, we set $r=0.5$. For all experiments besides Github/C4 and SlimPajama, we use $\eta=0.01$. For Github/C4, we use $\eta=0.1$ and for SlimPajama we used $\eta=0.1$ and $\eta=0.03$ for unrestricted and restricted settings, respectively. 
\end{itemize}

\paragraph{Weight trajectories.} In Table~\ref{tab:proportions}, we provide the mixture proportions for each method (averaged across training steps) for each dataset on one random seed. In Figure~\ref{fig:aioli_weights_2}, we provide all of \name's proportion trajectories throughout training in both the unrestricted and restricted settings on one random seed for the $m=2$ settings. In Figure~\ref{fig:aioli_weights_abse} and Figure~\ref{fig:aioli_weights_ccgw}, we provide \name's trajectories in the unrestricted and restricted settings on one random seed for Arxiv/Books/StackExchange and CommonCrawl/Github/Wikipedia, respectively. All of our trajectories demonstrate that \name can significantly adjust proportions over time, and that conditioning on different initial proportions can drastically change the behavior of \name.\footnote{Note that for the restricted setting, \name's trajectory consists of using the base method for a certain amount of steps, and then roughly reverting to the uniform distribution before adjusting the proportions. This is expected behavior, since our initial proportions $p^0$ are uniform in Algorithm~\ref{alg:name}; this avoids a ``biased'' proportion update.}

\begin{table}[ht]
\centering
\caption{Average proportions over the entire training trajectory for the unrestricted setting, on one random seed.}
\label{tab:proportions}
\begin{tabular}{ccc}
\toprule
Data groups & Method & Average Proportions \\
\midrule
arXiv/SE & \textbf{Grid search} & [0.4, 0.6] \\
& \textbf{DML} & [0.404, 0.596] \\
& \textbf{Skill-it }& [0.437, 0.563] \\
& \textbf{DoReMi} & [0.37, 0.63] \\
& \textbf{DoGE} & [0.624, 0.376] \\
& \textbf{\name} & [0.507, 0.493] \\ \midrule \midrule

GitHub/C4 & \textbf{Grid search} & [0.3, 0.7] \\
& \textbf{DML} & [0.46, 0.54] \\
& \textbf{Skill-it }& [0.583, 0.417] \\
& \textbf{DoReMi} & [0.858, 0.142] \\
& \textbf{DoGE} & [0.352, 0.648] \\
& \textbf{\name} & [0.505, 0.495] \\ \midrule \midrule

Books/SE & \textbf{Grid search} & [0.3, 0.7] \\
& \textbf{DML} & [0.381, 0.619] \\
& \textbf{Skill-it }& [0.316, 0.684] \\
& \textbf{DoReMi} & [0.286, 0.714] \\
& \textbf{DoGE} & [0.325, 0.675] \\
& \textbf{\name} & [0.456, 0.544] \\ \midrule \midrule

arXiv/Books/SE & \textbf{Grid search} & [0.291, 0.306, 0.403] \\
& \textbf{DML} & [0.245, 0.277, 0.477] \\
& \textbf{Skill-it }& [0.292, 0.238, 0.469] \\
& \textbf{DoReMi} & [0.318, 0.180, 0.502] 
] \\
& \textbf{DoGE} & [0.592, 0.132, 0.276]  \\
& \textbf{\name} & [0.342, 0.275, 0.383] \\ \midrule \midrule

CC/GitHib/Wiki & \textbf{Grid search} & [0.291, 0.306, 0.403] \\
& \textbf{DML} & [0.157, 0.472, 0.371] \\
& \textbf{Skill-it }& [0.275, 0.3, 0.425] \\
& \textbf{DoReMi} & [0.101, 0.714, 0.185]
] \\
& \textbf{DoGE} & [0.536, 0.220, 0.244] \\
& \textbf{\name} & [0.342, 0.325, 0.333] \\ \midrule \midrule

SlimPajama, full  & \textbf{Grid search} & [0.202, 0.022, 0.28, 0.038, 0.018, 0.376, 0.064] \\
(A/B/C4/CC/G/SE/W) & \textbf{DML} & [0.042, 0, 0, 0.579, 0, 0.249, 0.013] \\
& \textbf{Skill-it }& [0.098, 0.111, 0.204, 0.103, 0.138, 0.266, 0.076] \\
& \textbf{DoReMi} & [0.08, 0.047, 0.057, 0.11, 0.467, 0.078, 0.157] \\
& \textbf{DoGE} & [0.056, 0.162, 0.343, 0.28, 0.038, 0.067, 0.051] \\
& \textbf{\name} & [0.142, 0.143, 0.143, 0.144, 0.140, 0.144, 0.143] \\
\hline
\end{tabular}

\end{table}

\begin{figure}[h]
    \centering
    \includegraphics[width=0.32\linewidth]{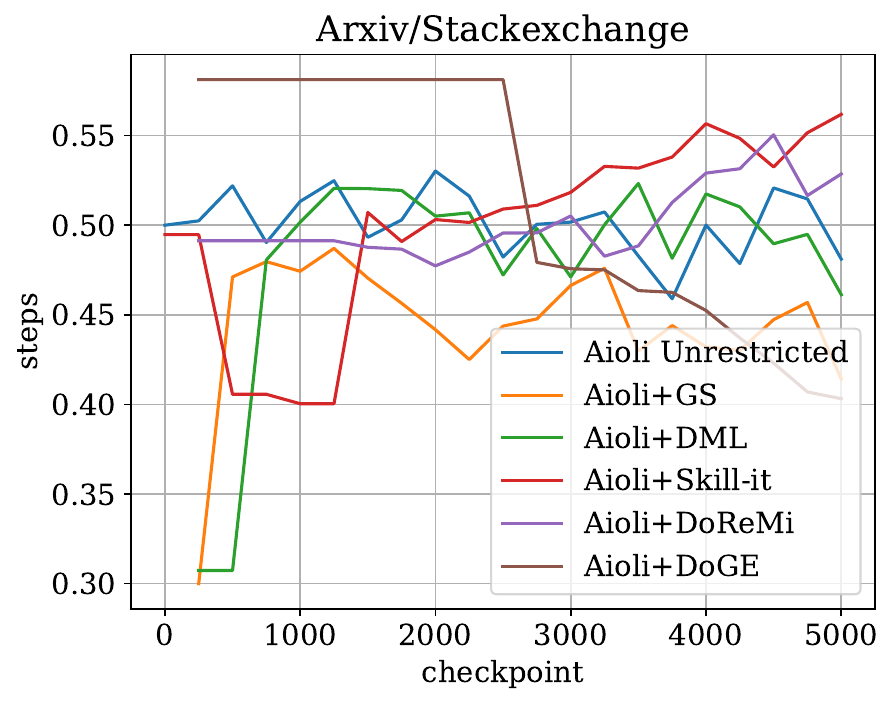}
    \includegraphics[width=0.32\linewidth]{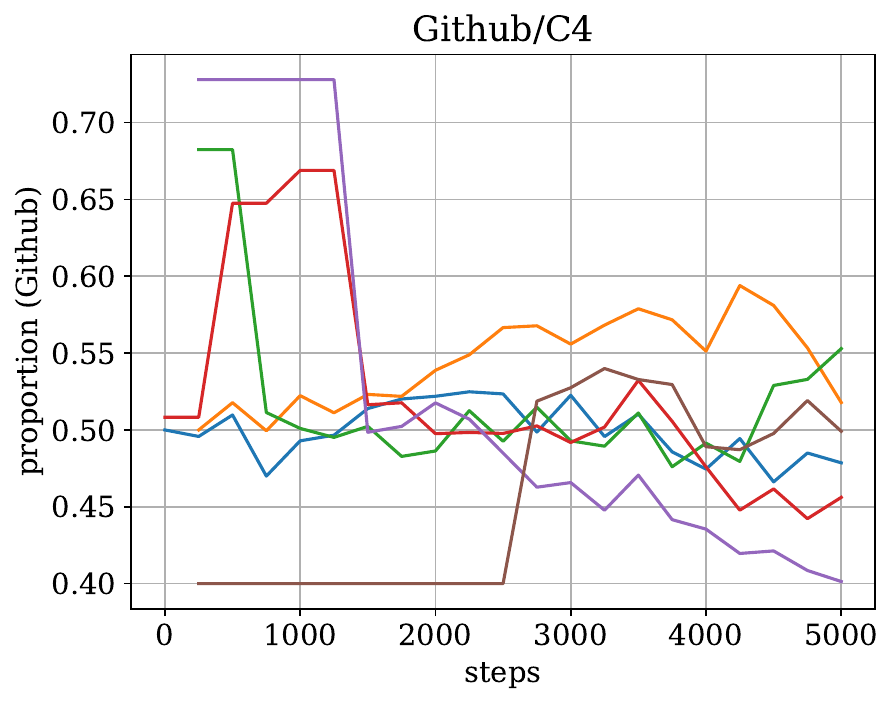}
    \includegraphics[width=0.32\linewidth]{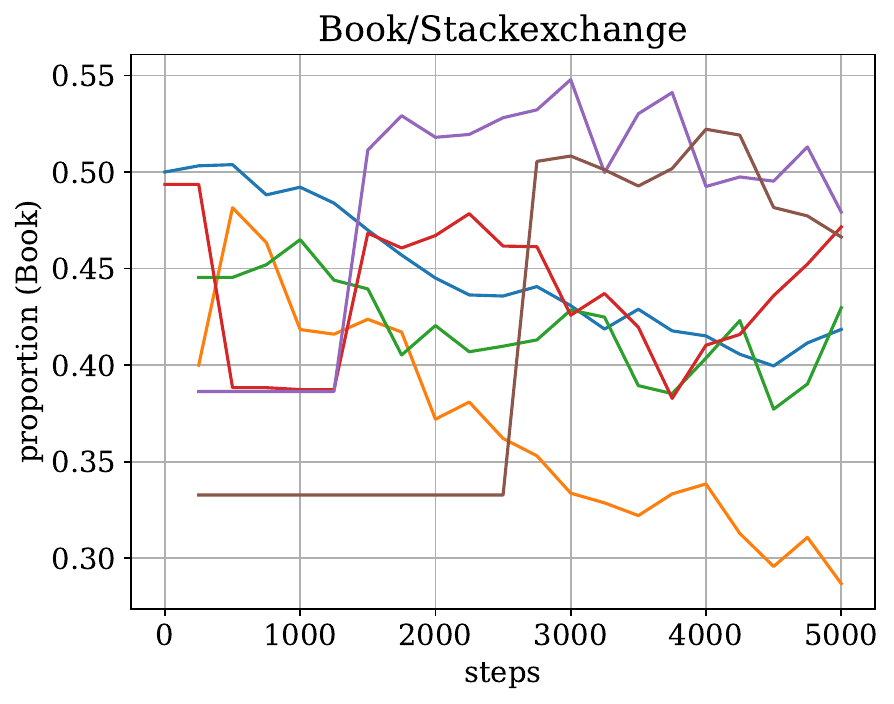}
    \caption{\name's proportions throughout training for both unrestricted and restricted settings on Arxiv/StackExchange, Github/C4, and Book/StackExchange. These trajectories show that \name meaningfully alters the mixture proportions over time.}
    \label{fig:aioli_weights_2}
\end{figure}

\begin{figure}[h]
    \centering
    \includegraphics[width=1\linewidth]{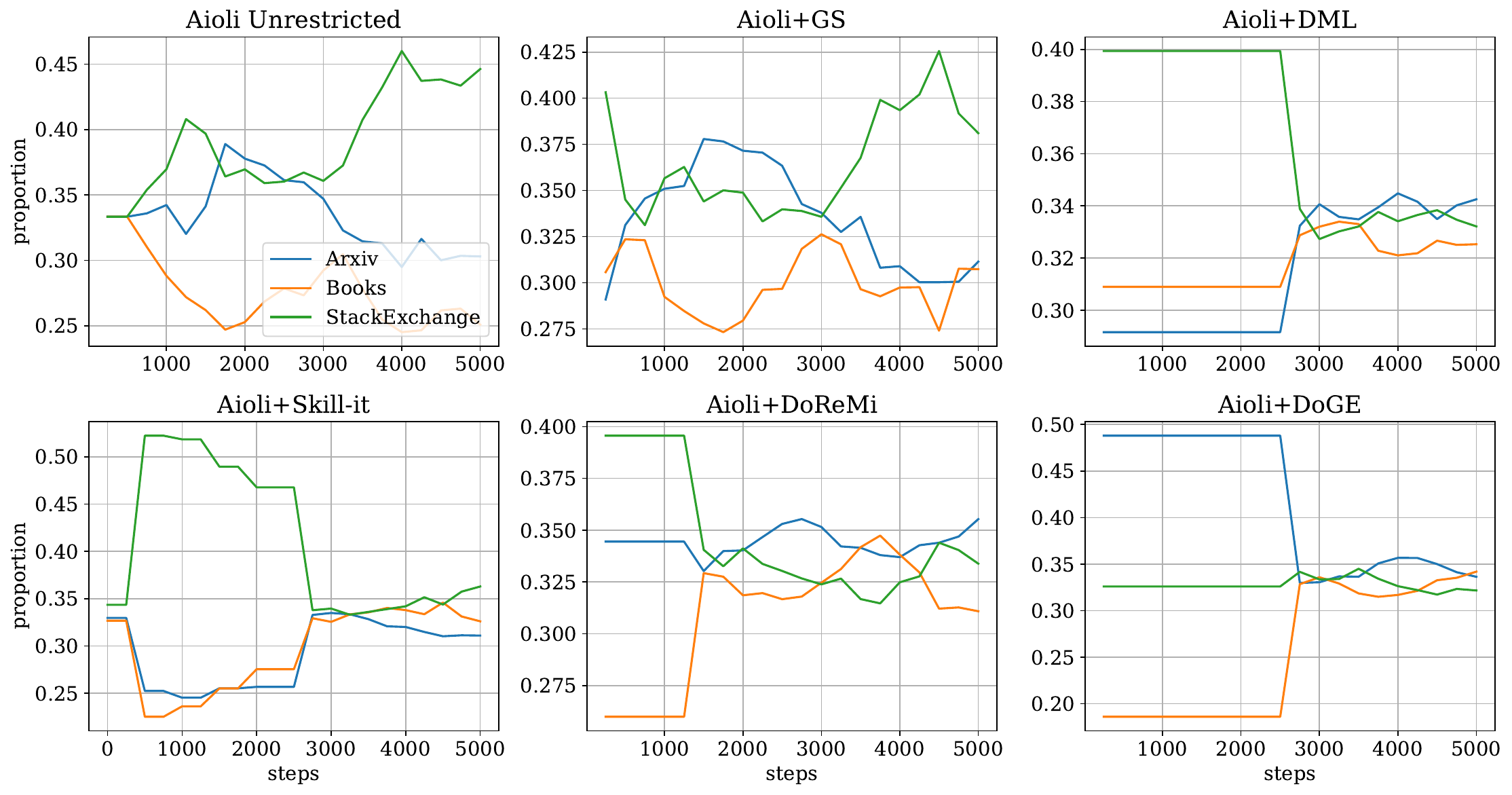}
    \caption{\name's proportions throughout training for both unrestricted and restricted settings on Arxiv/Book/StackExchange.These trajectories show that \name meaningfully alters the mixture proportions over time.}
    \label{fig:aioli_weights_abse}
\end{figure}

\begin{figure}[h]
    \centering
    \includegraphics[width=1\linewidth]{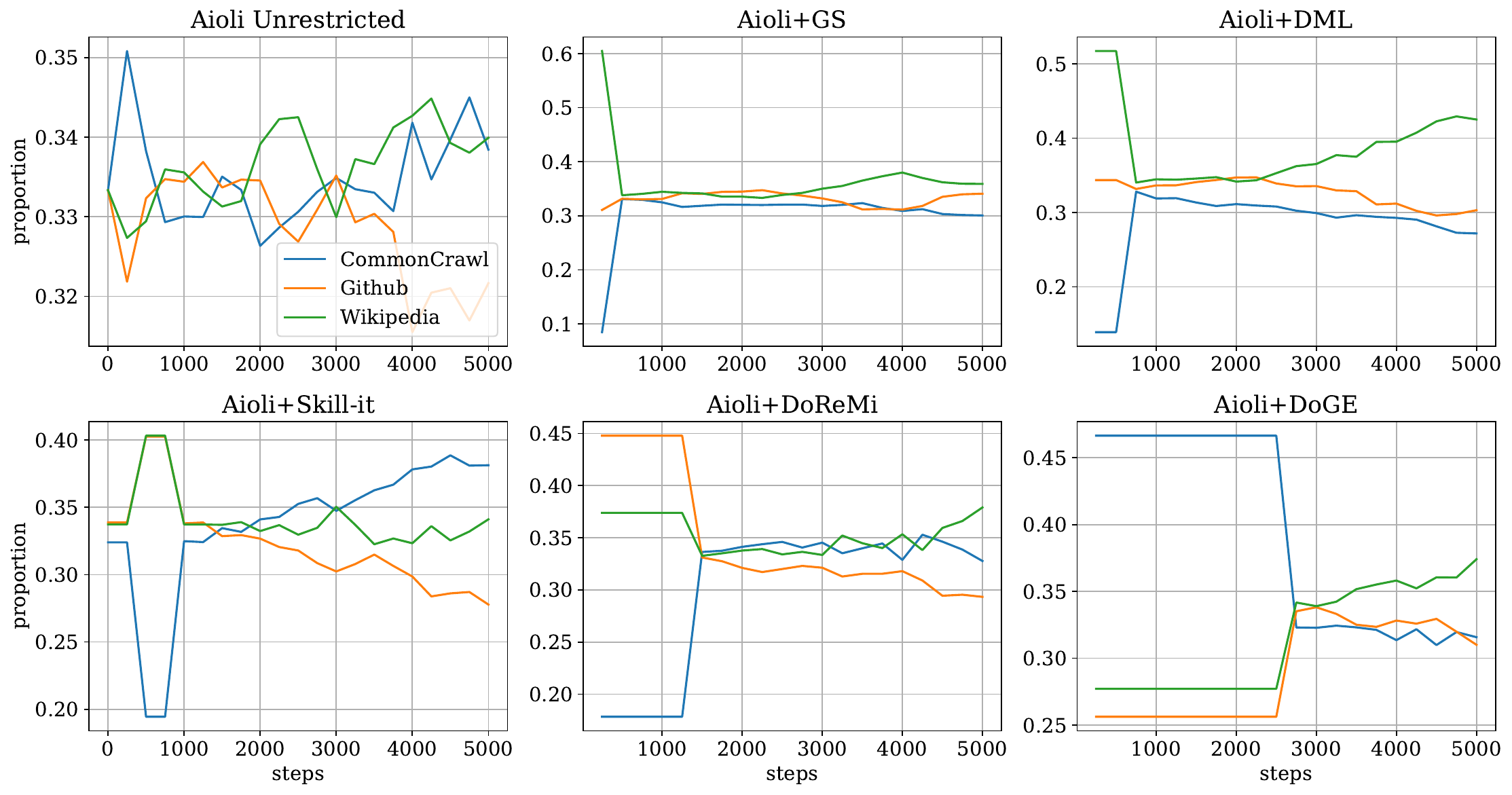}
    \caption{\name's proportions throughout training for both unrestricted and restricted settings on CommonCrawl/Github/Wikipedia.These trajectories show that \name meaningfully alters the mixture proportions over time.}
    \label{fig:aioli_weights_ccgw}
\end{figure}

\section{Additional Experiments}\label{app:results}

\subsection{Downstream Tasks}

We find that lower perplexity is positively correlated with worse performance on downstream tasks. We evaluated all models trained on SlimPajama on ARC-Challenge, ARC-Easy \citep{arc}, BoolQ \citep{clark-etal-2019-boolq}, HellaSwag \citep{zellers-etal-2019-hellaswag},  LAMBADA \citep{lambada}, OpenBookQA \citep{mihaylov-etal-2018-openbookqa}, PiQA \cite{Bisk2019PIQA}, and WinoGrande \citep{winogrande} using the Language Model Evaluation Harness~\citep{eval-harness} (Table \ref{tab:comparison}). The correlation between perplexity and the macroaverage of our downstream tasks is $0.529$, indicating that lower perplexity is predictive of\textit{ worse }downstream performance. In fact, DML obtains the best overall performance, even though it omits three out of seven datasets in SlimPajama (see the average proportions in Table \ref{tab:proportions}).

\begin{table}[ht]
\centering
\scriptsize
\caption{Downstream evaluation metrics for various data mixing methods after training on SlimPajama across three random seeds in the unrestricted setting.}
\label{tab:comparison}
\begin{tabular}{l|c|c|c|c|c|c|c|c|c}
\toprule
Method & Average & ARC-C & ARC-E & BoolQ & HellaSwag & LAMBADA & OpenBookQA & PiQA & WinoGrande  \\
\midrule
Stratified  & 0.305 & 0.176 & 0.314 & 0.394 & 0.261 & 0.116 & 0.117 & 0.563 & 0.499 \\
\name  & 0.311 & 0.172 & 0.315 & 0.447 & \textbf{0.264} & 0.114 & 0.111 & 0.559 & \textbf{0.504} \\
GS  & 0.322 & 0.176 & 0.329 & 0.502 & 0.262 & 0.117 & 0.124 & 0.568 & 0.500 \\
DML  & \textbf{0.333} & 0.181 & \textbf{0.330} &\textbf{ 0.608} & 0.261 & 0.109 & \textbf{0.128} & 0.554 & 0.490 \\
Skill-it  & 0.316 & \textbf{0.182} & 0.322 & 0.462 & 0.261 & 0.124 & 0.122 & 0.559 & 0.492 \\
DoReMi & 0.324 & 0.177 & 0.323 & 0.507 & \textbf{0.264} & \textbf{0.127} & 0.122 & \textbf{0.574} & 0.499 \\
DoGE  & 0.314 & 0.173 & 0.313 & 0.471 & 0.262 & 0.116 & 0.115 & 0.557 & \textbf{0.504} \\
\bottomrule
\end{tabular}
\end{table}

One potential reason for this disparity is the distribution shift between pre-training data and downstream evaluation data; for example, the DML results suggest that training on Books, C4, and Github is not needed to do well on the above selection of downstream tasks. Many recent works have also noted that perplexity and downstream performance are uncorrelated \citep{liu23pretraining,xia-etal-2023-training,tay-etal-2023-scaling}. Furthermore, \citet{levy-etal-2024-task} proposes a question answering dataset where the perplexity of the pretrained model is positively correlated with performance, similar to our results. This mismatch between training objective and downstream evaluations also extends to post-training, where better learning of human preferences does not translate to better win-rate against other post-trained models \citep{chen2024preferencelearningalgorithmslearn}.

Resolving the disconnect between training objective and downstream evaluations is an area of active research. In the case of data mixing, \name remains the only algorithm in our tests that robustly minimizes average test perplexity--essentially, \name achieves what it sets out to achieve in the LMO framework in~\eqref{eq:framework}. Conversely, other data mixing algorithms might be implicitly doing something else with respect to minimizing downstream evaluations. Considering how to incorporate downstream evaluations into data mixing is a fruitful area for future work.

\subsection{Ablations}\label{app:ablations}

We ablate \name by studying performance when two key properties of $A^t$ (Appendix~\ref{app:a_star_properties}) are changed: when $T=1$ (i.e., $A^t$ is only learned once at the beginning of training and used throughout), and when $A^t$ is assumed to be diagonal. We evaluate these two ablations in the unrestricted setting presented in Section~\ref{sec:unrestricted} and Table~\ref{tab:unrestricted}:
\begin{itemize}[itemsep=0.00pt,topsep=0pt,leftmargin=20pt]
    \item \namestatic: We set $T = 1$ in Algorithm~\ref{alg:name}. That is, we learn $A^1$ at the beginning of training. We use this $A^1$ to set $p^1$, and use this $p^1$ for the remainder of the training run. This approach tests if $A^t$ needs to be adjusted throughout training.
    \item \namediag: We assume that each $A^t$ is diagonal in this ablation. In particular, in \subroutine we do $A_{ii}^t = \beta_{ii} / p^{t, i}$ rather than $A_i^t = P^{-1} \beta_i$ for each $i \in [m]$ in line 11. This approach tests if it is sufficient to not model cross-group interactions and instead only capture how much group $i$'s performance improves when trained on group $i$ itself.
\end{itemize}

For both \namestatic and \namediag, we use the same set of hyperparameters as \name as described in Appendix~\ref{app:experiments}. For \namestatic, we additionally sweep over EGD learning rates $\{\eta, 2\eta, 3\eta, 4\eta\}$ where $\eta$ is the EGD learning rate used by \name.

Our results are in Table \ref{tab:ablations}. We find that \name outperforms both ablations in $3$ out of $6$ settings, and obtains the lowest test perplexity on average over these settings. This suggests that both $T > 1$ and modeling off-diagonal entries are important to \name's consistent performance across datasets.

\begin{table}[ht]
    \centering
    \small
        \caption{Ablations on \name. The table reports the difference in average test perplexity compared to stratified sampling. Negative values (\textcolor{darkgreen}{green}) = improvement, and bolded = best performing method for given data setting.
        A=Arxiv, B=Books, GH=GitHub, SE=StackExchange, W=Wikipedia. \name outperforms ablations in 3 out of 6 settings and attains the lowest test perplexity on average.}
    \begin{tabular}{c | r | r| r | r | r | r | c } 
        \toprule
        Method & A/SE & GH/C4 & B/SE & A/B/SE & CC/GH/W & SlimPajama & Average \\
        \midrule 
        Stratified & $16.532$ & $35.991$ & $47.192$ &  $35.114$ & $41.583$ & $26.426$ & $33.806$ \\
        \midrule 
       \name & \textcolor{darkgreen}{$\mathbf{-0.205}$} & \textcolor{darkgreen}{$\mathbf{-0.340}$} & \textcolor{darkgreen}{$\mathbf{-0.439}$} & \textcolor{darkgreen}{$-0.226$} & \textcolor{darkgreen}{$-0.196$} & \textcolor{darkgreen}{$-0.240$} & \textcolor{darkgreen}{$\mathbf{-0.274}$} \\
        \namestatic & \textcolor{darkgreen}{$-0.065$} & \textcolor{darkgreen}{$-0.333$} & \textcolor{darkgreen}{$-0.226$} & \textcolor{darkgreen}{$-0.117$} & $0.092$ & \textcolor{darkgreen}{$\mathbf{-0.330}$}  & \textcolor{darkgreen}{$-0.140$}  \\
         \namediag & \textcolor{darkgreen}{$-0.182$} & \textcolor{darkgreen}{$-0.178$} & \textcolor{darkgreen}{$-0.354$} & \textcolor{darkgreen}{$\mathbf{-0.246}$} & \textcolor{darkgreen}{$\mathbf{-0.215}$} & \textcolor{darkgreen}{$-0.202$}  & \textcolor{darkgreen}{$-0.230$}  \\
        \bottomrule
    \end{tabular}
    \label{tab:ablations}
\end{table}

\subsection{Hyperparameter sensitivity}\label{app:sensitivity}

We study how robust \name is to changes in its hyperparameters. From the experimental details in Appendix~\ref{app:experiments}, the main hyperparameters that we modify are $\eta$ (EGD step size), $\delta$ (proportion of round allocated for learning $A^t$), and $\gamma$ (the EMA parameter). In Tables~\ref{tab:aioli_eta_analysis},~\ref{tab:aioli_delta_analysis}, and~\ref{tab:aioli_gamma_analysis}, we report results on \name in the unrestricted setting for Arxiv/StackExchange and Arxiv/Books/StackExchange. We sweep $\eta \in \{0.1, 0.2, 0.3, 0.5\}$, $\delta/m \in \{0.064, 0.096, 0.128\}$, and $\gamma \in \{\text{None}, 0.1, 0.5\}$. We find that \name still yields lower test perplexity than stratified sampling across all $\eta, \delta$, and $\gamma$ we evaluated.

\begin{table}[h]
\centering
\small
\caption{The difference in average test perplexity of \name with varying $\eta$ step size hyperparameter compared to stratified sampling. Bolded result is the original number reported in Table~\ref{tab:unrestricted}.}
\begin{tabular}{c | r | r}
\toprule
Method & A/B & A/B/SE \\
\midrule
Stratified & $16.532$ & $35.114$ \\
\midrule 
\name ($\eta=0.1$) & \textcolor{darkgreen}{$-0.110$} & \textcolor{darkgreen}{$-0.212$} \\
\name ($\eta=0.2$) & \textcolor{darkgreen}{$\mathbf{-0.205}$} & \textcolor{darkgreen}{$-0.221$} \\
\name ($\eta=0.3$) & \textcolor{darkgreen}{$-0.155$} & \textcolor{darkgreen}{$-0.186$} \\
\name ($\eta=0.5$) & \textcolor{darkgreen}{$-0.166$} & \textcolor{darkgreen}{$\mathbf{-0.226}$} \\
\bottomrule
\end{tabular}
\label{tab:aioli_eta_analysis}
\vspace{-1em}
\end{table}

\begin{table}[h]
\centering
\small
\caption{The difference in average test perplexity of \name with varying $\delta/m$, the fraction of each round for learning $A^t$, compared to stratified sampling. Bolded result is the original number reported in Table~\ref{tab:unrestricted}.}
\begin{tabular}{c | r | r}
\toprule
Method & A/B & A/B/SE \\
\midrule
Stratified & $16.532$ & $35.114$ \\
\midrule 
\name ($\delta/m=0.064$) & \textcolor{darkgreen}{$\mathbf{-0.205}$} & \textcolor{darkgreen}{$-0.152$} \\
\name ($\delta/m=0.096$) & \textcolor{darkgreen}{$-0.283$} & \textcolor{darkgreen}{$\mathbf{-0.226}$} \\
\name ($\delta/m=0.128$) & \textcolor{darkgreen}{$-0.003$} & \textcolor{darkgreen}{$-0.296$} \\
\bottomrule
\end{tabular}
\label{tab:aioli_delta_analysis}
\vspace{-1em}
\end{table}

\begin{table}[h]
\centering
\small
\caption{The difference in average test perplexity of \name with varying $\gamma$, the hyperparameter for computing $p^t$ with an exponential moving average, compared to stratified sampling. Bolded result is the original number reported in Table~\ref{tab:unrestricted}.}
\begin{tabular}{c | r | r}
\toprule
Method & A/B & A/B/SE \\
\midrule
Stratified & $16.532$ & $35.114$ \\
\midrule 
\name ($\gamma=\text{None}$) & \textcolor{darkgreen}{$-0.11$} & \textcolor{darkgreen}{$\mathbf{-0.226}$} \\
\name ($\gamma=0.1$) & \textcolor{darkgreen}{$\mathbf{-0.205}$} & \textcolor{darkgreen}{$-0.185$} \\
\name ($\gamma=0.5$) & \textcolor{darkgreen}{$-0.141$} & \textcolor{darkgreen}{$-0.213$} \\
\bottomrule
\end{tabular}
\label{tab:aioli_gamma_analysis}
\vspace{-1em}
\end{table}

\subsection{Results on Larger Models}

We examine if our findings---both in terms of the mixing law and in terms of \name's performance---hold on larger models. We train 1.4B-parameter models. We use a learning rate of $\texttt{3e-4}$ and keep all other training details the same. We use a subsample of our data settings, focusing on when we mix Arxiv/StackExchange ($m=2$) and Arxiv/Book/StackExchange ($m=3$).

First, we measure if the log-linear static and linear-dynamic mixing laws are well-specified for 1.4B models. We use the same fitting procedure as described in Section~\ref{sec:details} and Appendix~\ref{app:mixing_law}. Figure~\ref{fig:large_mixinglaw} describes the fit of the static and dynamic mixing laws on Arxiv/StackExchange. The full results are in Table~\ref{tab:large_fit}, which show that the average $R^2$ for the static and dynamic mixing laws for the 1.4B model are $0.989$ and $0.929$, respectively. This accuracy of the mixing law parameterization on the 1.4B model is a prerequisite for \name's performance, which we evaluate next.

\begin{table}[ht]
\centering
\caption{Comparison of log-linear static and linear dynamic mixing law parameterizations when training a 1.4B model.}
\begin{tabular}{lcccc}
\toprule
\multirow{2}{*}{\textbf{Parameterization}} & \multicolumn{2}{c}{\textbf{Arxiv/SE}} & \multicolumn{2}{c}{\textbf{Arxiv/Books/SE}} \\
& MSE & $R^2$ & MSE & $R^2$ \\
\midrule
Log-linear static & 2e-4 & 0.995 & 1e-3 & 0.984 \\
Linear dynamic & 7e-5 & 0.916 & 2e-4 & 0.943 \\
\bottomrule
\end{tabular}
\label{tab:large_fit}
\end{table}

\begin{figure}
    \centering
    \includegraphics[width=0.45\linewidth]{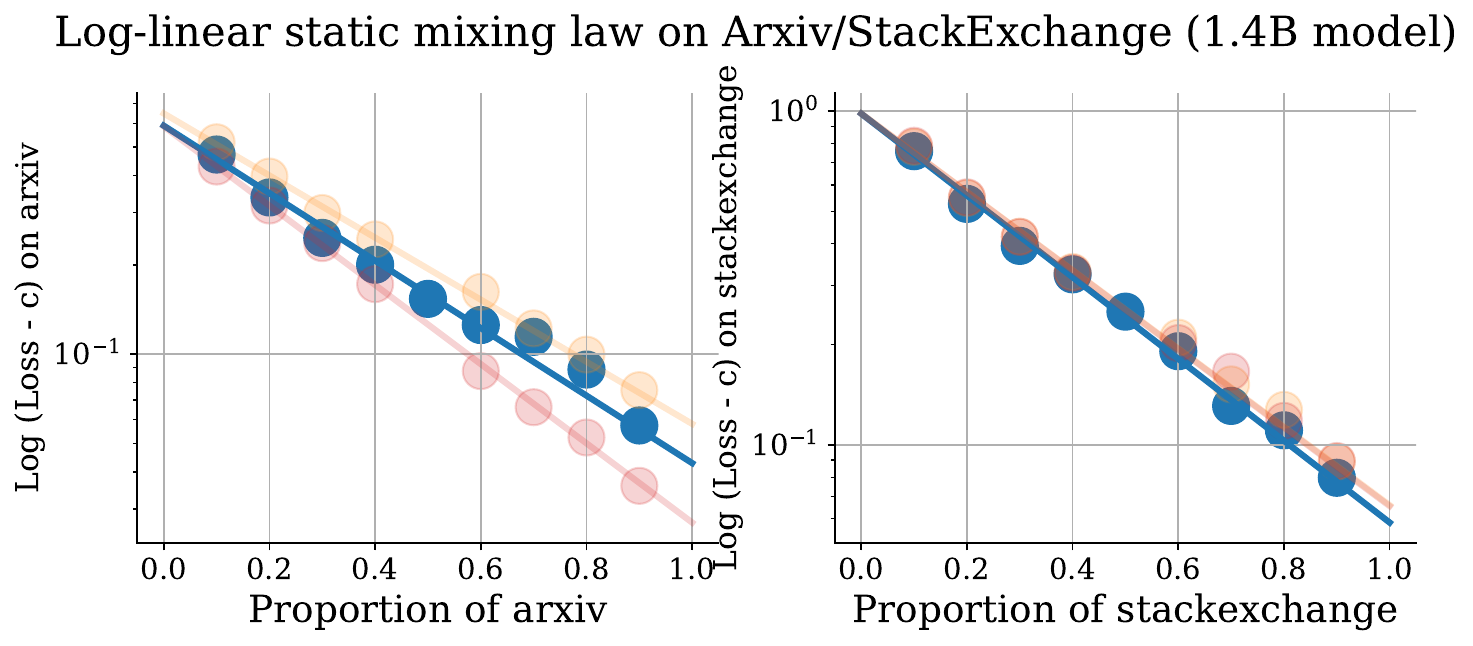}
    \includegraphics[width=0.45\linewidth]{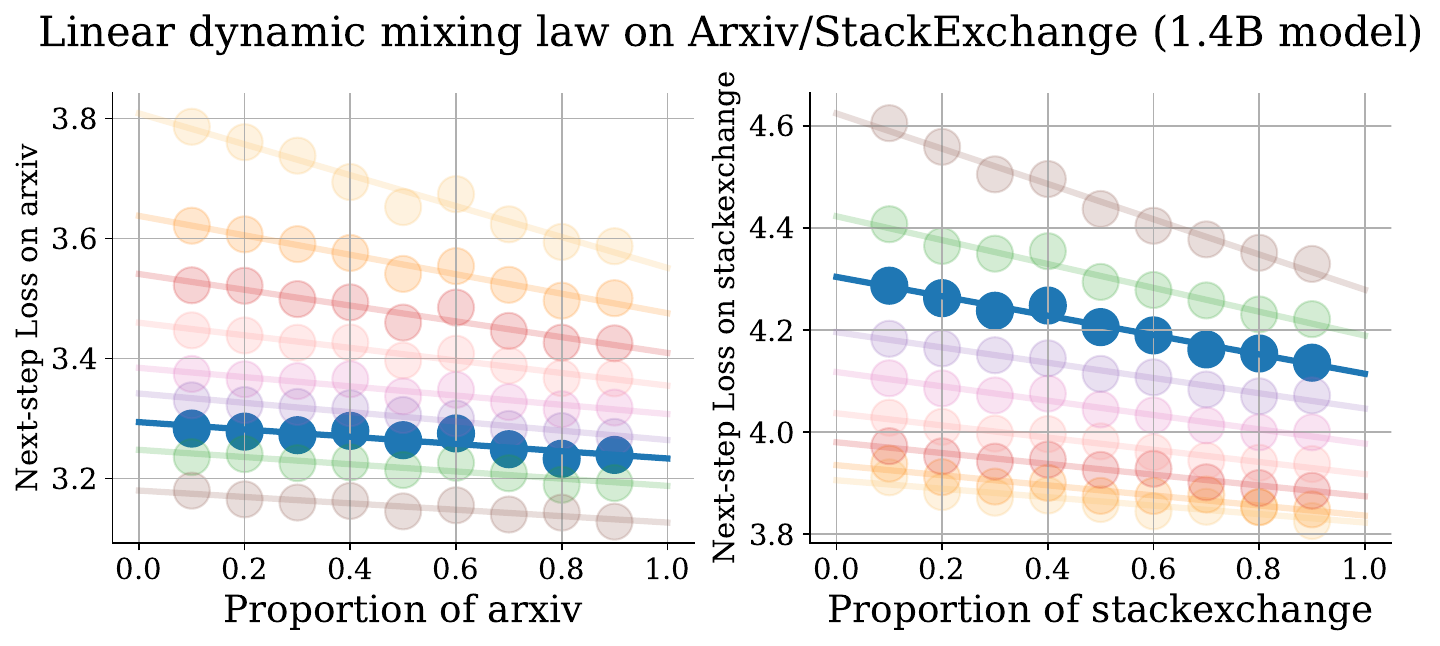}
    \caption{Left: log-linear static mixing law fit on Arxiv/Stackexchange on 1.4B parameter model, in which each color represents a different random seed. Right: linear dynamic mixing law fit on Arxiv/Stackexchange on 1.4B parameter model on $1$ random seed. Each color is a different initial mixture $p^0 \in \mathcal{P}$ trained for $2000$ steps, and the fitting sweeps are done over $100$ additional steps.}
    \label{fig:large_mixinglaw}
\end{figure}

Second, we evaluate \name in the unrestricted setting on the 1.4B models. We compare \name to stratified sampling and DoGE. Our results on three random seeds are in Table~\ref{tab:large_unrestricted}. Similar to our results on the 160M models, we find that \name outperforms stratified sampling in both data settings. Moreover, from Table~\ref{tab:unrestricted}, we see that DoGE originally performed worse than stratified sampling at the 160M scale. Our results here confirm that even at the 1.4B model scale, DoGE continues to underperform stratified sampling. Altogether, we see that \name consistently outperforms stratified sampling while existing methods do not---at both the 160M and 1.4B scale.

\begin{table}[]
\centering
\small
\caption{Difference in average test perplexity compared to stratified sampling in the unrestricted setting for 1.4B models. For \name, we use $\eta=0.5, \delta/m=0.096, \gamma=0.1$ for A/SE and $\eta=0.1, \delta/m=0.096, \gamma=0.5$ for A/B/SE.}
\begin{tabular}{c | r | r}
\toprule
Method & A/SE & A/B/SE \\
\midrule
Stratified & $15.799$ & $34.733$ \\
\midrule
DoGE & $0.551$ & $0.922$ \\
\name & \textcolor{darkgreen}{$-0.276$} & \textcolor{darkgreen}{$-0.403$} \\
\bottomrule
\end{tabular}
\label{tab:large_unrestricted}
\vspace{-1em}
\end{table}

\subsection{Out-of-domain setting}

We consider the \emph{out-of-domain} setting, in which the training data groups are disjoint from the groups that the model will be evaluated on. This is a practical scenario where we have access to a separate validation dataset that we wish our model to perform well on ~\citep{fan2024dogedomainreweightinggeneralization, chen2023skillit, xia2024less, xie2023dataselectionlanguagemodels, engstrom2024dsdm}. We will demonstrate how 1) the \mixingframework framework can be adjusted to capture this setting, recovering the out-of-domain versions of Skill-It and DoGE proposed in their respective papers; 2) the linear mixing laws are still well-specified in this setting; and 3) \name adjusted for this setting can still more consistently outperform out-of-domain baselines.

\paragraph{\mixingframework framework for OOD setting.} Concretely, we suppose we have $m$ training data groups such that $D_{\text{train}}$ is still $\{\Dtrain^1, \dots, \Dtrain^m \}$, and we have one separate out-of-domain data group that we do not train on; we have IID validation and test datasets for this out-of-domain data group. Let $L_{\text{val, OOD}}$ be the validation and test loss on the out-of-domain data group, respectively. Then, the \mixingframework framework can be slightly modified:
\begin{align}
    & \text{minimize}_{\p \in \triangle^{T \times m}} L_{\text{val, OOD}}^{T+1}(\p) \label{eq:framework_ood} \\
    &\text{s.t.} \; L_{\text{val, OOD}}^{t+1}(\p) = c^t + b^t \sigma \Big(\sum_{j=1}^m -A_{\text{OOD}, j}^t p_j^t\Big) \; \forall t \in [T],  \label{eq:mixinglaw_ood}
\end{align}

where $A_{\text{OOD}, j}^t \in \R^m$ is now a vector representing how much each training group influences the validation group. There are two changes to the optimization problem: first, the objective is now to minimize the out-of-domain validation loss; second, the mixing law captures the relationship between the validation loss and the mixture proportions over the training data groups. Note that the DML method can still be applied in the OOD setting by directly minimizing $c + b \;\exp( \sum_{j=1}^m -A_{\text{OOD}, j} p_j)$. More importantly, applying Lemma~\ref{lemma:egd} to this optimization problem, we get the update rule $p_j^{t+1} \sim p_j^t \exp (\eta  A_{\text{OOD}, j}^t) \; \forall j \in [m]$. This expression recovers the Skill-It and DoGE OOD update rules, and can be incorporated into \name as demonstrated in Algorithms~\ref{alg:aioli_ood} and~\ref{alg:learnparams_ood}. These algorithms are identical to \name (Alg~\ref{alg:name}) and \subroutine (Alg~\ref{alg:learnparams}), with the exception of lines 6 and lines 3, 8, and 11 respectively, which reflect that $A^t_{\text{OOD}}$ is now a vector rather than an $m \times m$ matrix.

\begin{algorithm}[htb]
   \caption{\name-OOD}
   \label{alg:aioli_ood}
\begin{algorithmic}[1]
   \State {\bfseries Input:}  data $\Dtrain$,  $\Dval$, model $f^1$. Initial steps $S_{\text{init}}$, initial proportions $\p^{\text{init}} \in \triangle^m$. $T$ rounds over $S - S_{\text{init}}$ remaining steps, $\delta$ fraction per round for learning parameters, learning rate $\eta$, one-hot smoothing factor $\varepsilon$. 
   \State If $S_{\text{init}} \neq 0$, train $f^1$ on $\p^{\text{init}}$ for $S_{\text{init}}$ steps. 
   \State Set $p^0 = \text{Unif}(m)$.
   \For{$t = 1, \dots, T$}
   \State Set $A^t_{\text{OOD}}, f^{t+\delta} \gets$ \subroutine-OOD$(\Dtrain, \Dval, \delta, f^t, \varepsilon)$ (Alg.~\ref{alg:learnparams_ood}), and normalize $A^t$ to get $\bar{A}^t$.
   \State $p_j^{t} \propto p_j^{t-1} \exp(\eta \bar{A}_{\text{OOD}, j}^t)$ for all $j \in [m]$.
   \State Train model $f^{t+\delta}$ with $\frac{S}{T}(1 - \delta)$ steps from mixture $p^{t}$ over $\Dtrain$. Obtain updated $f^{t+1}$.    
   \EndFor
\end{algorithmic}
\label{alg:skillit}
\end{algorithm}

\begin{algorithm}[htb]
   \caption{\subroutine-OOD}
   \label{alg:learnparams_ood}
\begin{algorithmic}[1]
   \State {\bfseries Input:} $\Dtrain, \Dval$, $\delta$, model $f^t$, number of sweeps $k$, one-hot smoothing factor $\varepsilon$.
   \State Split the fraction of a training round $\delta$ into $K$ time segments, where $K = m k$.
   \State Set $\beta = \vec{0} \in \R^m.$
   \State Define $p^{t, i} = (1-\varepsilon)\mathbf{1}_i + \varepsilon \text{Unif}(m)$ for $i \in [m]$, and define $P = [p^{t, 1}, \dots, p^{t, m}] \in \triangle^{m \times m}$
   \State Randomly shuffle $k$ instances of each $i \in [m]$ to create an order $\mathcal{I} \in [m]^K$.
    \For{$\tau = 1, \dots, K$}
        \State Let $j = \mathcal{I}_{\tau}$. Train model on mixture $p^{t, j}$ of $\Dtrain$ for one time segment, obtain $f^{t+\tau \delta/K}$.
        \State Update $\beta_{j} \gets \beta_{j} + \Lval{\text{OOD}}(f^{t+(\tau-1)\delta/K}) - \Lval{\text{OOD}}(f^{t+\tau\delta/K})$ with loss difference on OOD validation dataset.
    \EndFor
    \State Update $\beta \gets  \frac{\beta}{k}$.
    \State Set $A^t_{\text{OOD}} = P^{-1} \beta$.
   \State \textbf{Return}  $A^t_{\text{OOD}} \in \R^{m}, f^{t+\delta}$
\end{algorithmic}
\end{algorithm}

\paragraph{Mixing law parameterization results.} We study a setting where our training data groups are Arxiv, Book, and Github from SlimPajama and our validation data group is StackExchange. Using the same setup as other $m=3$ settings in Section~\ref{sec:linear_model} (160M model, 5K steps, sweep over 9 runs), we measure the MSE and $R^2$ of the log-linear static mixing law, $L_{\text{val, OOD}}(\p) = c + b \exp \Big(\sum_{j=1}^m -A_{\text{OOD}, j} p_j\Big)$, and of the linear dynamic mixng law, $L_{\text{val, OOD}}^{t+1}(\p) = c^t + b^t \sum_{j=1}^m -A_{\text{OOD}, j}^t p_j^t$. The MSE and $R^2$ for the log-linear static mixing law are $1.5 \times 10^{-3}$ and $0.964$, respectively. The MSE and $R^2$ for the linear dynamic mixing law are $1.1 \times 10^{-4}$ and $0.796$. The linear dynamic mixing law fits the true loss-proportion relationship less accurately than the log-linear static law. Nevertheless, both MSEs are low, and the $R^2$ still suggests that at least $79\%$ of the variability in validation loss can be explained by the mixing law.

\paragraph{\name results.} We evaluate stratified sampling, and OOD versions of \name, Skill-It, DoGE, and DML in the unrestricted setting on 3 random seeds. We train on Arxiv, Books, and Github and evaluate on StackExchange. Our results are in Table~\ref{tab:ood}.

\begin{table}[]
\centering
\small
\caption{Out-of-domain data evaluation, in which  we mix training data from Arxiv, Books, and Github and evalute on StackExchange data. The table reports the difference in average test perplexity compared to stratified sampling on the training data groups. For \name, we use $\eta=0.8, \delta/m=0.096, \gamma = \text{None}$.}
\begin{tabular}{c | c | c}
\toprule
Method & Arxiv/Book/Github $\rightarrow$ StackExchange & \# extra runs \\
\midrule
Stratified & $39.644$ & 0 \\
\midrule
GS & \textcolor{darkgreen}{$-7.244$} & 10 \\
DML & \textcolor{darkgreen}{$-6.316$} & 10 \\
Skill-It (OOD) & \textcolor{darkgreen}{$-5.786$} & $3$ \\
DoGE (OOD) & \textcolor{darkgreen}{$-7.626$} & 1 \\
\name (OOD) & \textcolor{darkgreen}{$-4.028$} & 0 \\
\bottomrule
\end{tabular}
\label{tab:ood}
\end{table}

We find that all methods, including \name, attain lower test perplexity than the stratified sampling baseline, which both the Skill-It and DoGE papers use as a comparison point for the OOD setting. \name is the only method that achieves this improvement without requiring additional training runs. This improvement over stratified sampling across OOD methods is expected, since stratified sampling can include irrelevant data due to the distribution shift between training and evaluation. On the other hand, stratified sampling is a strong baseline in the in-distribution scenarios studied in the rest of this work.

\section{Why the method is called \name}

An aioli is an emulsion, where individual components remain chemically separate from each other, despite being combined into one mixture. Similarly, our $A^t$ matrix is formed from separate test runs (the $p^{t,1}, \dots, p^{t,m}$ in Section~\ref{sec:method}), despite being combined into one update for $p^t$.

\end{document}